\theoremstyle{plain}
\newtheorem{theorem}{Theorem}[section]
\newtheorem{lemma}[theorem]{Lemma}
\newtheorem{corollary}[theorem]{Corollary}
\theoremstyle{definition}
\newtheorem{definition}[theorem]{Definition}
\theoremstyle{remark}
\def\eqref#1{equation~\ref{#1}}
\def\ceil#1{\lceil #1 \rceil}
\def\1{\bm{1}}
\def\eps{{\epsilon}}
\DeclareMathAlphabet{\mathsfit}{\encodingdefault}{\sfdefault}{m}{sl}
\SetMathAlphabet{\mathsfit}{bold}{\encodingdefault}{\sfdefault}{bx}{n}
\newcommand{\R}{\mathbb{R}}
\DeclareMathOperator*{\argmax}{arg\,max}
\DeclareMathOperator*{\argmin}{arg\,min}
\DeclareMathOperator{\sign}{sign}
\newcommand{\mypara}[1]{\paragraph{#1}}
\renewcommand{\hh}{\hat{h}}
\icmltitlerunning{Two Heads are Actually Better than One}
\begin{document}
\twocolumn[
\icmltitle{Two Heads are \textit{Actually} Better than One:\\ Towards Better Adversarial Robustness via\\ Transduction and Rejection}

\icmlsetsymbol{equal}{*}
\begin{icmlauthorlist}
\icmlauthor{Nils Palumbo}{equal,wisc}
\icmlauthor{Yang Guo}{equal,wisc}
\icmlauthor{Xi Wu}{google}
\icmlauthor{Jiefeng Chen}{wisc}
\icmlauthor{Yingyu Liang}{wisc}
\icmlauthor{Somesh Jha}{wisc}
\end{icmlauthorlist}

\icmlaffiliation{wisc}{Depart of Computer Sciences, University of Wisconsin-Madison, Madison, WI, USA}
\icmlaffiliation{google}{Google}

\icmlcorrespondingauthor{Nils Palumbo}{npalumbo@wisc.edu}
\icmlcorrespondingauthor{Yang Guo}{yguo@cs.wisc.edu}

\icmlkeywords{Machine Learning, ICML}

\vskip 0.3in
]

\printAffiliationsAndNotice{\icmlEqualContribution} %

\begin{abstract}
Both transduction and rejection have emerged as important techniques for defending against adversarial perturbations. A recent work by \citep{goldwasser2020beyond} showed that rejection combined with transduction can give \emph{provable} guarantees (for certain problems) that cannot be achieved otherwise. Nevertheless, under recent strong adversarial attacks (GMSA~\citep{chen2022towards}), Goldwasser et al.'s work was shown to have low performance in a practical deep-learning setting.  In this paper, we take a step towards realizing the promise of transduction+rejection in more realistic scenarios. Our key observation is that a novel application of a reduction technique in~\citep{tramer2022detecting}, which was until now only used to demonstrate the vulnerability of certain defenses, can be used to actually construct effective defenses. Theoretically, we show that a careful application of this technique in the transductive setting can give significantly improved sample-complexity for robust generalization. Our theory guides us to design a new transductive algorithm for learning a selective model; extensive experiments using state of the art attacks (AutoAttack, GMSA) show that our approach provides significantly better robust accuracy (81.6\% on CIFAR-10 and 57.9\% on CIFAR-100 under $l_\infty$ with budget 8/255) than existing techniques~\citep{croce2020robustbench}. The implementation is available at \url{https://github.com/nilspalumbo/transduction-rejection}.
\end{abstract}

\section{Introduction}

A recent line of research~\citep{goldwasser2020beyond, montasser2021transductive, goodfellow2019research, wang2021fighting, pmlr-v119-wu20f} has investigated augmenting models with \emph{transduction} (leveraging unlabeled test input to revise the learned model) and \emph{rejection} (allowing a model to reject on certain input) to defend against adversarial perturbations. There are in general two classes of algorithms. One class is \emph{transduction-only}. For example, \citep{montasser2021transductive} showed that robust learning with transduction allows for significant improvements in sample complexity, reducing dependency on VC dimension from exponential to linear; however, this comes at the cost of significantly greater assumptions on the data ($\OPT_{\U^2}$ for the realizable case rather than the $\OPT_\U$ of the inductive setting
\footnote{The optimal robust risk is $\OPT_\U = \inf_{h \in \calH} \Pr_{(x,y) \sim \calD} \left[\exists z \in \U(x) : h(z) \neq y\right]$. For $\U$ which are perturbations up to $\epsilon$ in some metric, $\U^x$ is a perturbation of up to $x\epsilon$, see Section~\ref{sec:preliminaries} for more details.}).

The other class is to have both transduction and rejection. For example, \citep{goldwasser2020beyond} studied this setting and showed even more surprising results, not achievable with transduction or rejection alone. However, one prominent limitation of these works seems to be that none has yet resulted in practical robust learning mechanisms in the deep learning setting typically considered.

In this paper, we take a step towards realizing the promise of transduction+rejection in more realistic scenarios. Compared to \citep{goldwasser2020beyond}, which considers arbitrary perturbations, we focus on the classic and practical scenario of bounded perturbations for deep learning. Somewhat surprisingly, we show that a novel application of Tramèr’s classifier-to-detector technique~\citep{tramer2022detecting}, which has thus far only been applied to indicate that certain defenses are vulnerable, in the transductive setting can give significantly improved sample-complexity for robust generalization, noting that bounded perturbations are critical for the construction to work. To obtain these improvements, we do not require stronger assumptions on the data, as with \citep{montasser2021transductive}; in the realizable case, we only need to assume $\OPT_{\U^{2/3}} = 0$, which is even better than the $\OPT_{\U} = 0$ assumption in the inductive case.

\begin{table*}[ht]\label{tbl:generalization-bounds}
\caption{\small \textbf{Summary of generalization bounds for the four settings}.
Compared to transduction alone and \citep{goldwasser2020beyond}, our defense weakens the necessary conditions in the realizable case and improves the asymptotic error in the agnostic case. Compared to induction and rejection alone, sample complexity has a linear rather than exponential dependence on the VC dimension. Compared to \citep{goldwasser2020beyond}, the dependence on the error bound $\epsilon$ improves from inverse quadratic to inverse linear in the realizable case. Note that \citep{goldwasser2020beyond} requires the existence of a hypothesis with bounded error on the perturbed data in the agnostic case, and hence does not tolerate all possible perturbations.}
  \vskip 0.1in
\begin{adjustbox}{width=\textwidth,center}
\begin{tabular}{l ll l l}
\toprule
                             & \multicolumn{3}{c }{Realizable}                                                                                                                                                 & \multirow{3}*{Agnostic Generalization Bound}                                                                                                                                                        \\
                            \cline{2-4}
                             & Soundness  & Completeness  & Generalization                                                          \\ 
                             &  Condition &  Condition &  Bound                                                                                                \\ \midrule
Induction~\citep{montasser2019vc}  & OPT$_{\U} = 0$ & OPT$_{\U} = 0$ &$\calO\left(\frac{2^{\VC(\calH)} \log(n) +  \log(1/\delta)}{n}\right)$  & OPT$_{\U} + \calO\left(\sqrt{\frac{2^{\VC\left(\calH\right)} +\log(1/\delta)}{n}}\right)$ 
                              \\ \hline
Transduction~\citep{montasser2021transductive} & OPT$_{\U^2} = 0$ & OPT$_{\U^2} = 0$ & $\calO\left(\frac{\VC(\calH)\log(n)+ \log (1/\delta) }{n}\right)$ & 
 $2 \mathrm{OPT}_{\calU^2}+O\left(\sqrt{\frac{\VC(\calH)+\log (1 / \delta)}{n}}\right)$                             \\ \hline
Rejection (Theorem~\ref{thm:sample-complexity-ind-real}, \ref{thm:sample-complexity-ind-agn}) & OPT$^{\rej}_{\U} = 0$ & OPT$^{\rej}_{\U} = 0$ & $ \calO\left(\frac{2^{\VC(T(\calH))} \log(n) +  \log(1/\delta)}{n}\right)$  & OPT$^{\rej}_{\U} + \calO\left(\sqrt{\frac{2^{\VC\left(T(\calH)\right)} +\log(1/\delta)}{n}}\right)$ \\ \hline
Transduction{+}Rejection~\citep{goldwasser2020beyond} & $\OPT_{\U} = 0$ & $\OPT_{\U} = 0$ & $\calO\left(\sqrt{\frac{\VC(\calH)\log(n)}{n}}+ \frac{\log (1/\delta) }{n}\right)$  &  $2\OPT_{\U} + 2\sqrt{2\OPT_{\II}} + \calO\left(\sqrt{\frac{\VC(\calH)\log n + \log(1/\delta)}{n}}\right)$ \\ 
Transduction{+}Rejection  (Theorem~\ref{thm:rejection-simplified-realizable}, \ref{thm:transduction-agnostic}) & OPT$_{\U^{2/3}} = 0$ & OPT$_{\U^2} = 0$ & $\calO\left(\frac{\VC(\calH)\log(n)+ \log (1/\delta) }{n}\right)$  &  $2 \mathrm{OPT}_{\U^{2/3}}+O\left(\sqrt{\frac{\VC(\calH)+\log (1 / \delta)}{n}}\right)$ \\ 
\bottomrule
\end{tabular}
\end{adjustbox}
\label{tab:bounds}
\vskip -0.1in
\end{table*}

Our theory guides us to identify a practical transductive algorithm for learning a robust selective model. As a component, we present a simple empirical approximation to the reduction which enables the computationally efficient realization of the improvement to robustness offered by rejection; our experiments show that the the robustness of models utilizing our rejection-only defense very closely matches the theoretical bound (i.e. the robustness achievable to adversarial budget $\epsilon/2$). While our approach does not have the theoretical guarantees of the computationally inefficient construction, it is a significant step towards developing an efficient reduction, left as an open problem by~\citep{tramer2022detecting}.

In addition, we present an objective for general adaptive attacks targeting selective classifiers based on our algorithm. Our transductive defense algorithm gives strong empirical performance on image classification tasks, both against our adaptive attack and against existing state-of-the-art attacks such as AutoAttack and standard GMSA. On CIFAR-10, we obtain 81.6\% transductive robust accuracy with rejection, a significant improvement on the current state-of-the-art result of 71.1\%~\citep{peng2023robust, croce2020robustbench} for robust accuracy up to the perturbation considered ($l_\infty$ with budget $\epsilon = 8/255$); on CIFAR-100, we obtain 57.9\% transductive robust accuracy with rejection, significantly exeeding the strongest existing baseline of 42.7\%~\citep{wang2023better, croce2020robustbench} with the same adversarial budget. 
 
The rest of the paper is organized as follows. Section~\ref{sec:related} reviews main related work,
and Section~\ref{sec:preliminaries} presents some necessary background. We develop our theory results
in Section~\ref{sec:theory}. Guided by our theory, Section~\ref{sec:method} develops a practical
robust learning algorithm, leveraging both transduction and rejection. We provide systematic experiments
in Section~\ref{sec:experiment}, and conclude in Section~\ref{sec:conclusion}.

\section{Related Work} \label{sec:related}

In recent years, there have been extensive studies on adversarial robustness in the traditional inductive learning setting, where the model is fixed during the evaluation phase~\citep{carlini2017towards, goodfellow2015explaining,moosavi2016deepfool}. Most popular and effective methods are adversarial training, such as PGD~\citep{madry2018towards}, TRADES~\citep{zhang2019theoretically}. These methods are effective against adversaries on small dataset like MNIST, but still ineffective on complex dataset like CIFAR-10 or ImageNet~\citep{croce2020robustbench}. Defenses beyond adversarial training have been proposed but most are broken by strong adaptive attacks~\citep{croce2020reliable, tramer2020adaptive}.

To break this robust bottleneck, recent work has proposed alternative settings with relaxed yet realistic assumptions, particularly by allowing rejection and transduction. In robust learning with rejection (a.k.a., abstain), we allow rejection of adversarial examples instead of correctly classifying all of them~\citep{tramer2022detecting}. Variants of adversarial training with rejection option have been considered~\citep{laidlaw2019playing,pang2022two, chen2021revisiting,kato2020atro, sotgiu2020deep, he2022your}, including generalizations to unseen attacks~\citep{stutz2020confidence} and to certified robustness~\citep{sheikholeslami2020provably, baharlouei2022improving, sheikholeslami2022denoised}. \citep{tramer2022detecting} proves an equivalence between robust learning with rejection and standard robust learning in the inductive setting and shows that the evaluation of past defenses with rejection was unreliable.

The other approach is to define an alternative notion of adversarial robustness via transductive learning, i.e. "dynamically" ensuring robustness on the particular given test samples rather than on the whole distribution. Similar settings have been studied but under the view of "test-time defense" or "dynamic defense"~\citep{goodfellow2019research, wang2021fighting, pmlr-v119-wu20f}. \citep{goldwasser2020beyond} is the first paper to formalize transductive learning for robust learning, and the first to consider transduction+rejection. 
It considers general adversaries on test data and presents novel theoretical guarantees.
\citep{chen2022towards} formally defines the notion of transductive robustness as a maximin problem and presents a principled adaptive attack, GMSA. \citep{montasser2021transductive} discusses robust transductive learning against bounded perturbation from a learning theory perspective and obtains corresponding sample complexity.

\section{Preliminaries} \label{sec:preliminaries}

\begin{table*}[h]
    \caption{Summary of the robust error in all settings. Note that transductive error of the learner $\A$ is the corresponding notion of error where $h = \A(\bx, \by, \tbz)$.} 
    \vskip 0.1in
    \begin{adjustbox}{width=\textwidth,center}
    \begin{tabular}{l l l}
        \toprule
        & Robust Error & Robust Error (with Rejection) \\ 
        \midrule
        Inductive & $\operatorname{err}_{\U}(h; x, y) := \sup_{z \in \U(x)} \mathbb{1}\{h(z) \neq y\}$
        & $\operatorname{err}^{\rej}_{\U}(h; x, y) := \sup_{z \in \calU(x)} \mathbb{1}\{h(z) \notin \{y, \perp\} \lor h(x) \neq y\}$
        \\
        \hline
        Transductive&
        $\operatorname{err}_{\U}(h; \bx, \by, \tilde{\bz}, \tilde{\by}) := \frac{1}{m} \sum_{i=1}^m  \mathbb{1}\left\{ h\left(\tilde{z}_i\right) \neq \tilde{y}_i \right\}$
        &  $\operatorname{err}_{\U}^{\rej}(h; \bx, \by, \tilde{\bx}, \tilde{\bz}, \tilde{\by})
  :=  
\frac{1}{m} \sum_{i=1}^{m} \mathbb{1}\left\{ 
    \begin{array}{l}
        \left( h\left(\tilde{z}_{i}\right) \notin \{\tilde{y_i} \} \land \tilde{z}_{i} = \tilde{x}_{i}\right) \\
        \lor \left( 
            h\left(\tilde{z}_{i}\right) \notin \{\tilde{y_i}, \perp\} \land \tilde{z}_{i} \neq \tilde{x}_{i}  
        \right)
    \end{array}
\right\}$ 
        \\
    \bottomrule
    \end{tabular}
    \end{adjustbox}
    \label{tab:errs}
    \vskip -0.1in
\end{table*}

Let $\calX$ denote the input space, $\calY$ the label space, $\calD$ the clean data distribution over $\calX \times \calY$. We will assume binary classification for our theoretical analysis: $\calY = \{\pm 1\}$. Let $\calU(x)$ denote the set of possible perturbations of an input $x$, e.g., for $\ell_p$ norm perturbation of budget $\epsilon$, $\U$ is the $\ell_p$ ball of radius $\epsilon$: $\U(x) = \{z: \|z - x\|_p \le \epsilon\}$. We assume $\U$ satisfies $\forall x\in \calX, x \in \U(x)$; essentially all interesting perturbations satisfy this. Let $\U^2(x) := \{z: \exists t \in \U(x), \textrm{such that~} z \in \U(t) \}$, and $\U^{-1}(x) := \{z: x \in \U(z)\}$. If a perturbation set $\Lambda$ satisfies $\Lambda^2 = \U$, then we say $\Lambda = \U^{1/2}$; $\U^{-1/2} = (\U^{-1})^{1/2}$. When $\U$ is the $\ell_p$ ball of radius $\epsilon$, $\U^2$ is that of radius $2\epsilon$, $\U^{-1} = \U$, and $\U^{1/2}$ is that of radius $\epsilon/2$; we define $\UC$, $\UT$, and $\UTI$ similarly. 

All learners are provided with $n$ i.i.d. training samples
\footnote{Here $\bx=(x_i)_{i=1}^n$ and similarly with $\by, \tilde{\bx}, \tilde{\by}$, etc. We will also overload the notation $\U$, e.g., $\U(\bx) := \{ \boldsymbol{u} \in \calX^n: u_i \in \calU(x_i) \}$.}
$(\bx, \by) = (x_i, y_i)_{i=1}^n \sim \calD^n$. There are $m$ i.i.d. test samples $(\tilde{\bx}, \tilde{\by}) \sim \calD^m$, and the adversary can perturb $\tilde{\bx}$ to $\tilde{\bz} \in \calU(\tilde{\bx})$.

 We describe the main settings below; the corresponding notions of error are in Table~\ref{tab:errs}. For each setting, we define risk as the expected worst-case error up to the perturbation $\U$, and empirical risk similarly.

\mypara{Induction.} In the traditional robust classification setting (e.g.,~\citep{madry2018towards}); also called the inductive setting or simply induction), the learning algorithm (the defender) is given training set $(\bx, \by)$, learns a classifier $h: \calX \mapsto \calY$ from some hypothesis class $\calH$.

\mypara{Rejection.}
In the setting of robust classification with rejection, the classifier has the extra power of abstaining (i.e., outputting a rejection option denoted by $\perp$), and furthermore, rejecting a perturbed input does not incur an error. The learning algorithm is given training set $(\bx, \by)$ and learns a 
\emph{selective classifier}, defined as a function 
\begin{equation}\label{eqn:selective_classifier}
   h: \calX \mapsto  \calY \cup \{\perp\} 
\end{equation}
from some hypothesis class $\calH$ which, given a sample $x$, either outputs a label $y \in \calY$ or abstains from prediction with an output of $\perp$.
An error occurs only when $h$ rejects a clean input, or accepts and misclassifies. We define additionally $\operatorname{OPT}^{\rej}_{\U}:= \inf_{h\in \calH}\R^{\rej}_{\U}(h; \calD)$.

\mypara{Transduction.}
In the setting of robust classification with transduction (e.g.,~\citep{montasser2021transductive}), the learning algorithm (the transductive learner) has access to the unlabeled test input data; the goal is to predict labels only for these given test inputs (a transductive learner need not generalize).
The learner $\mathbb{A}$ is given the training data $(\bx, \by)$ and the (potentially perturbed) test inputs $\tilde{\bz}$, and outputs $m$ labels $h(\tilde{\bz})=(h(\tilde{z}_i))_{i=1}^m$ as predictions for $\tilde{\bz}$. That is, the learner is a mapping $\mathbb{A}: (\calX \times \calY)^n \times \calX^m \mapsto \calY^m$.
A special case is when $\mathbb{A}$ learns a classifier $h$ and use it to label $\tilde{\bz}$; the labels are also denoted as $h(\tilde{\bz})$.

\mypara{Our setting: Transduction+Rejection.}
A transductive learner for selective classifiers $\mathbb{A}$ is given $(\bx, \by, \tilde{\bz})$, and outputs rejection or a label for each input in $\tilde{\bz}$. 
That is, the learner is a mapping $\mathbb{A}: (\calX \times \calY)^n \times \calX^m \mapsto (\calY \cup \{\perp\})^m$.  
An error occurs when it rejects a clean test input or accepts and misclassifies. Hence, we present the appropriate notion of error in Table~\ref{tab:errs}, the natural extension of the rejection-only error to the transductive setting, with the key difference being that we penalize rejection only if the sample is not perturbed (as transductive learners produce outputs only on the provided test data, there is no notion of rejecting $\tilde{x}_i$ if it has been perturbed).

\section{Theoretical Analysis} \label{sec:theory}

In this section, we present theoretical results which guide the design of our algorithm (see Section~\ref{sec:method}). We show that, by applying an algorithm which produces a hypothesis robust with reduced $\epsilon$ about an intermediate perturbation and incorporating rejection (via Tram\`er's classifer-to-detector reduction~\yrcite{tramer2022detecting}), we can derive an algorithm with strong guarantees up to the full $\epsilon$; in particular, the full algorithm obtains linear dependence on the VC dimension with greatly reduced necessary conditions compared to transduction alone. This suggests that this simple approach may provide significant improvements to robustness. We find in that this is indeed the case: our algorithm, described in Section~\ref{sec:method}, obtains significantly improved robustness compared to existing baselines; see Section~\ref{sec:experiment} for more details.

We focus on the realizable case for the setting with transduction+rejection here, for more details and results for the agnostic case and the setting with rejection alone see Appendix~\ref{app:proof}. For comparison with existing results in the inductive-only and transduction-only settings~\citep{montasser2019vc,montasser2021transductive}, we follow their setup: assume there exists a classifier (without rejection) with 0 robust error from a hypothesis class $\calH$ of VC-dimension $\VC(\calH)$; the goal is to design a learner with a small robust error. 

\begin{theorem}\label{thm:rejection-simplified-realizable}
For any $n \in \mathbb{N}$, $\delta > 0$, hypothesis class $\calH$ of classifiers without rejection, perturbation set $\CalU$ such that $\U= \U^{-1}$ and $\U^{1/3}$ exists, and distribution $\CalD$ over $\CalX \times \CalY$ satisfying $\mathrm{OPT}_{\U^{2/3}} = 0$, 
there exists a transductive learner $\mathbb{A}$ that constructs a set of selective classifiers (of the form \cref{eqn:selective_classifier}) $\Delta$ s.t. the following is true: 
with probability $\ge 1 - \delta$ over $(\bx, \by) \sim \mathcal{D}^{n}$, $(\tilde{\bx}, \tilde{\by}) \sim \mathcal{D}^{n}$, we have that for any $\tbz \in \U(\tbx)$, if $\Delta \neq \emptyset$, then for any $h \in \Delta$,
\footnote{Note that $\Delta$ is a function of $\bx$, $\by$, and $\tbz$, so this is more precisely a bound of $\sup_{\tbz \in \U(\tbx), h \in \mathbb{A}(\bx, \by, \tbz)}\operatorname{err}^{\rej}_{\U}(h; \bx, \by, \tilde{\bx}, \tbz, \tilde{\by})$.}
\begin{align*}
 \operatorname{err}^{\rej}_{\U}(h; \bx, \by, \tilde{\bx}, \tbz, \tilde{\by}) \leq \frac{\VC(\calH)\log(2n)+ \log(1/\delta)}{n}.
\end{align*}
\end{theorem}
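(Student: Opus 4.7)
The plan is to combine Montasser et al.'s transductive generalization bound for non-selective classifiers with a Tramèr-style classifier-to-detector reduction, applied at the reduced perturbation scale $\U^{1/3}$.

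First, I would invoke Montasser et al.'s realizable-case transductive theorem at perturbation $\Lambda := \U^{1/3}$. Since $\OPT_{\Lambda^2} = \OPT_{\U^{2/3}} = 0$ by hypothesis, this produces, with probability at least $1 - \delta$ over $(\bx, \by, \tilde{\bx}, \tilde{\by})$, a hypothesis $h \in \calH$ whose $\U^{1/3}$-transductive robust error on the clean test sample is at most $\varepsilon_n := \frac{\VC(\calH)\log(2n) + \log(1/\delta)}{n}$. In particular, on a $(1-\varepsilon_n)$-fraction of indices $i$, $h$ is $\U^{1/3}$-robust at $\tilde{x}_i$ with the correct label $\tilde{y}_i$, and this is the error budget that will propagate through the rest of the argument.

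Second, I would lift each candidate $h$ to a selective classifier $\hh$ by a Tramèr-style rule: $\hh(w) = y$ iff there is a $\U^{1/3}$-robust point $v \in \U(w)$ of $h$ with $h(v) = y$ and no $\U^{1/3}$-robust point of $h$ in $\U(w)$ carries the opposite label; otherwise $\hh(w) = \perp$. The output set $\Delta$ consists of those $\hh$'s obtained from candidate $h$'s that pass a natural training-consistency filter. For any adversarial $\tilde{z} \in \U(\tilde{x}_i)$ with $i$ in Montasser's good event, using $\U = \U^{-1}$ puts $\tilde{x}_i$ inside $\U(\tilde{z})$, so $\tilde{x}_i$ itself serves as a $\U^{1/3}$-robust witness carrying $\tilde{y}_i$; hence $\hh(\tilde{z}) \in \{\tilde{y}_i, \perp\}$, which is never an error. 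On the clean input $\tilde{x}_i$, the same witness $v = \tilde{x}_i$ makes $\tilde{y}_i$ the candidate label, and the output equals $\tilde{y}_i$ whenever no conflicting $\U^{1/3}$-robust witness exists inside $\U(\tilde{x}_i)$.

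The main obstacle, and where the $\OPT_{\U^{2/3}} = 0$ assumption is used sharply, is ruling out spurious conflicting witnesses on clean indices. A conflicting $v'$ must necessarily lie in the outer shell $\U(\tilde{x}_i) \setminus \U^{2/3}(\tilde{x}_i)$: if $v' \in \U^{2/3}(\tilde{x}_i)$, the $\U^{1/3}$-balls around $v'$ and $\tilde{x}_i$ overlap and force $h$ to take both labels at the intersection, a contradiction with the $\U^{1/3}$-robustness at both centers. Controlling the outer shell requires both the bounded-perturbation structure (so that $\U^{1/3}$ and $\U^{2/3}$ are meaningful sub-perturbations) and a careful choice of the training-consistency filter defining $\Delta$; any residual bad clean indices are then absorbed into the Montasser error term $\varepsilon_n$. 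Making the filter explicit, and checking that the total count of error-contributing indices remains within the advertised bound, is the technical crux; the final bound then follows by summing the Montasser error on bad test indices, the verification on good indices, and the controlled false-rejection count on clean inputs.
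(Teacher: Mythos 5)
Your high-level plan---Montasser et al.\ at scale $\U^{1/3}$ under $\OPT_{\U^{2/3}}=0$, followed by a Tram\`er-style lift to a selective classifier---matches the paper's strategy, but two concrete choices diverge from the paper's proof, and the second leaves a genuine gap. First, you anchor Montasser's guarantee at the clean test points $\tx_i$, whereas the paper anchors it at the intermediate perturbations $\tz_i' = \tx_i + (\tz_i-\tx_i)/3$, which lie in $\U^{1/3}(\tx_i)$ and, crucially, within $\U^{2/3}$ of the observed $\tz_i$. Second, to compensate for the weaker anchor you replace the local Tram\`er rule (``reject $w$ unless $h$ is constant on the $\U^{1/3}$-ball around $w$'') with a global witness search over all of $\U(w)$. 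This handles the perturbed-point case but creates exactly the problem you yourself flag as ``the technical crux'': a spurious $\U^{1/3}$-robust witness carrying the wrong label in the shell $\U(\tx_i)\setminus\U^{2/3}(\tx_i)$ forces your rule to reject the clean input $\tx_i$, which by the definition of $\operatorname{err}^{\rej}$ counts as an error. Nothing in $\OPT_{\U^{2/3}}=0$ or in a ``training-consistency filter'' excludes such witnesses---the realizability assumption only constrains $h$ within $\U^{2/3}$ of the data, and a test point need not be near any training point---so the ``residual bad clean indices'' cannot simply be absorbed into the Montasser error term. This step is missing and is not a routine verification.

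The paper's construction avoids the issue entirely. With the local rule $\hh = F_{\U^{1/3}}(h)$, a clean point at which $h$ is $\U^{1/3}$-robustly correct is automatically accepted and correctly labeled, so no false rejection occurs on good clean indices. For a perturbed $\tz_i$, the anchor $\tz_i'$ is within $2\epsilon/3$ of $\tz_i$, so the two $\epsilon/3$-balls intersect; if $h$ is $\U^{1/3}$-robust at $\tz_i$ its label must agree with the correct label at $\tz_i'$, and otherwise $\hh$ rejects, which is not an error on a perturbed input. Since the learner sees only $\tz_i$ and not $\tz_i'$, the paper intersects the Montasser sets over all candidate intermediate points in $(\U^{2/3})^{-1}(\tz_i)$---equivalently requiring $\U$-robustness at $\tz_i$ (\Cref{lem:rejection-delta-realizable})---which is precisely the source of the ``if $\Delta\neq\emptyset$'' caveat in the statement. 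If you tried to repair your variant by adding the analogous filter (requiring $h$ to be $\U$-robust at each observed $\tz_i$), your global witness search would collapse to the local rule and you would have re-derived the paper's argument.
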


For $\U$ satisfying our conditions (including $l_p$ balls), we obtain a stronger guarantee than those using only transduction or only rejection. First, compared to the guarantee for transduction without rejection~\citep{montasser2021transductive} (see Table~\ref{tab:bounds}), our result requires weaker assumptions on the data: we need $\textrm{OPT}_{\U^{2/3}} = 0$ rather than $\textrm{OPT}_{\U^2} = 0$.
For example, consider the $\ell_p$ norm perturbation: $\U(x) = \{z: \|z - x\|_p \le \epsilon\}$. Transduction alone requires that there exists a classifier with 0 robust error up to the perturbation $\U^2$, i.e. up to an $\ell_p$ norm perturbation of adversarial budget $2\epsilon$. In contrast, our result shows that using both transduction and rejection only requires there exists a classifier with 0 robust error up to perturbation $\UTS$, corresponding to adversarial budget of $2\epsilon/3$. Equivalently, for a data distribution with a margin $2\epsilon$, transduction without rejection can only handle adversarial perturbations with budget $\epsilon$, while combining transduction and rejection can handle adversarial perturbations with budget $3\epsilon$, tolerating three times the adversarial magnitude. Second, compared to rejection only (see Table~\ref{tab:bounds}), this bound has a linear sample complexity rather than exponential. Therefore, combining transduction and rejection has the benefits of both techniques. 

Furthermore, note that the result bounds the rate of incorrect rejections as well, i.e. the rate of rejections on clean data, with the same bound as a direct consequence of the definition of robust error under transduction and rejection. However, the result, while potentially very strong, comes with the caveat that the defense is not guaranteed to find a nonempty $\Delta$ (i.e., the defense is sound but may not be complete) under conditions weaker than $\OPT_{\U^2} = 0$; by Lemma~\ref{lem:rejection-agnostic-delta} in Appendix~\ref{app:trans-real}, $\Delta$ is guaranteed to be nonempty, and hence we have completeness, under the same conditions as transduction alone. Hence, the result is strictly stronger than the result for transduction alone~\citep{montasser2021transductive}.

Consider an adversarial budget $\epsilon$, and suppose $\tz$ is the given potentially perturbed test input and $\tx$ is the corresponding clean test input.
To obtain the guarantee, we need to find a model which is $\epsilon/3$-robust at $q = \tx + (\tz-\tx)/3$. Such a model always exists when $\mathrm{OPT}_{\U^{2/3}}=0$. 
However, given only $\tz$ without knowing $q$ or $\tx$, our algorithm finds a model $\epsilon/3$-robust at every perturbation within $2\epsilon/3$ of $\tz$ and thus $\Delta$ may be empty.

While weaker conditions don't guarantee that we find a model satisfying the conditions, the result still provides intuition for the success of our derived empirical defense. For typical data distributions and hypothesis classes, it might be expected that, if we fail to find a $\epsilon$-robust hypothesis at the fully-perturbed data, we will nevertheless be more likely to find a model which is robust nearer the clean data distribution (i.e. where the condition is required by the theory) rather than further away. Determining conditions for this is an interesting direction for future research.

Such conditions do exist: in Appendix~\ref{app:trans-real} we present a distribution $\D$, hypothesis class $\calH$, and perturbation $\U$ for which $\Delta$ is guaranteed to be nonempty and the error bound above applies, but where trasduction has a minimum asymptotic error of 1/2.

\mypara{Proof Sketch.}
For intuition, think of $\U$ as the $\ell_p$ norm perturbation with adversarial budget $\epsilon$. We omit technical details; see Appendix~\ref{app:trans-real} for the complete proof.
Consider some clean training set $\bx, \by$, clean test set $\tbx, \tby$, with perturbed test data $\tbz$ with $\tbz_i$ within $\epsilon$ of $\tbx_i$. 
Let $\tbz' = \tbx + (\tbz-\tbx)/3$ be the intermediate perturbation a third of the way between $\tbx$ and $\tbz$.

First, following \citep{montasser2021transductive}, define the set of robust hypotheses $\Delta_{\calH}^{\UT}(\bx, \by, \tbz')$ as  
$\Delta_{\calH}^{\UT}(\bx, \by, \tbz')  = \left\{
    \operatorname{R}_{\UT}(h;\bx, \by) = 0 \wedge
    \operatorname{R}_{\UT}(h;\tbz') = 0
    \right\}$
where
$\operatorname{R}_{\U}(h;\bz,\by) = \sup_{\tbx \in \U(\bz)}{1\over n}\sum_{i=1}^n \mathbb{1}\{h(\tx_i) \neq y_i\}$ and $\operatorname{R}_{\U}(h;\bz) = \operatorname{R}_{\U}(h;\bz, h(\bz))$.

That is, we find those classifiers that satisfy: (1) they are $\epsilon/3$-robustly correct (i.e., correct and robust to perturbations of budget $\epsilon/3$) on the training data $(\bx, \by)$; (2) they have $\epsilon/3$ margin on the intermediate perturbations $\tbz'$ (i.e., have the same prediction for all perturbations of budget $\epsilon/3$).
This then guarantees, as shown in \citep{montasser2021transductive}, that with high probability, for any $h \in \Delta_{\calH}^{\UT}(\bx, \by, \tbz')$ the robust error facing perturbation of budget $\epsilon/3$ is bounded by $\frac{\VC(\calH)\log(2n)+ \log(1/\delta) }{n}$ if $\OPT_{\U^{2/3}} = 0$.

Next, following \citep{tramer2022detecting}, define a transformation $\FUT$ that maps a classifier without rejection, $h$, to the selective classifier (see Equation~\ref{eqn:selective_classifier}) $\hh = \FUT(h)$:
\begin{equation}
\hh(x) = \begin{cases}
h(x) & \text{if } \forall  x' \in \UTI(x)\,, h(x') = h(x)\\
\perp & \text{otherwise}
\end{cases}.
\end{equation}
That is, $\hh$ rejects $x$ if it is within $\epsilon/3$ from $h$'s decision boundary, otherwise accepts and predicts $h(x)$.

\begin{figure}[!t]
\centering
\includegraphics[width=\columnwidth]{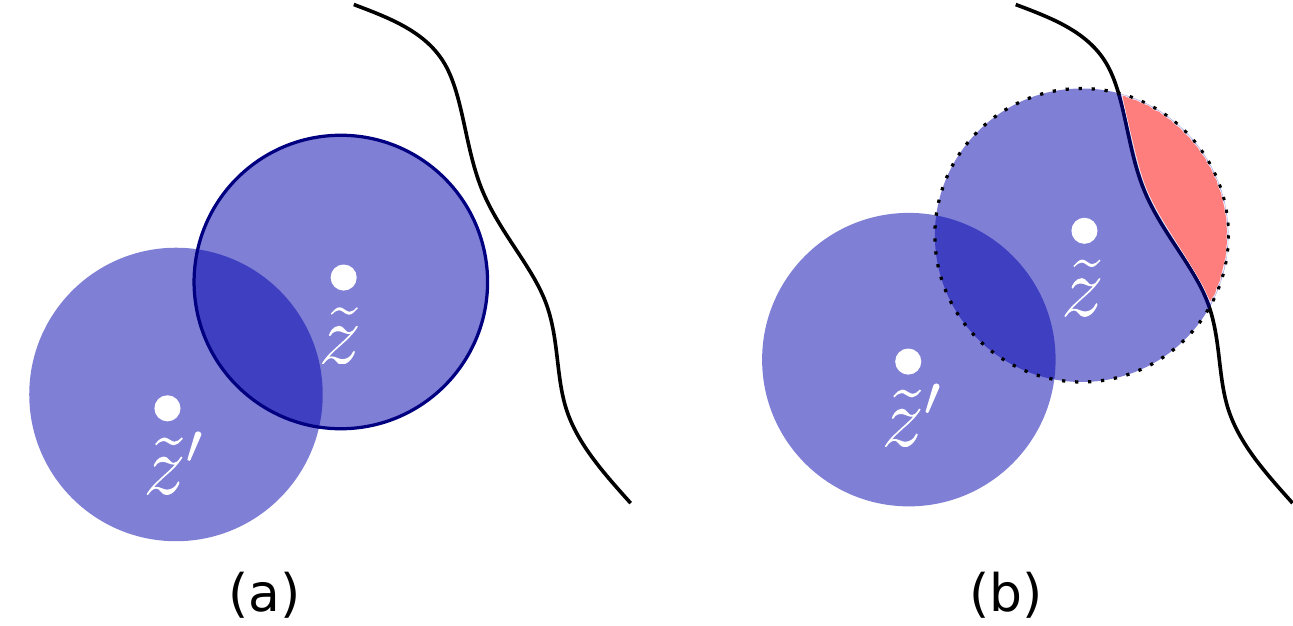}
\vspace{-2mm}
\caption{(a) $h$ is $\epsilon/3$-robust at $\tz$; $\hh$ correctly classifies $\tz$.\\
(b) $h$ is not $\epsilon/3$-robust at $\tz$; $\hh$ rejects $\tz$.} 
\label{fig:transductive-realizable}
\end{figure}

Now, consider a clean test sample $(\tx,\ty)$ and $\tx$'s adversarial perturbation $\tz$. Define an intermediate perturbation $\tz' = \tx + (\tz-\tx)/3$. We will show that if $h$ is correct at $\tz'$, then $\hh$ makes no error at $\tz$.

If $\tz = \tx$, then $\tz'= \tx = \tz$. Since $h$ is $\epsilon/3$-robust at $\tz'$, $h(\tz) = h(\tz') = \ty$ and so $\hh(\tz) = \ty$ which is correct.
Otherwise, we need to consider two cases: 
\textbf{(a)} $h$ is $\epsilon/3$-robust at $\tz$; \textbf{(b)} $h$ is not. See visualization in Figure~\ref{fig:transductive-realizable}. In both cases, the $\epsilon/3$-balls about $\tz$ and $\tz'$ intersect. Let $\tz''$ be some point in the intersection. Since $h$ is $\epsilon/3$-robust at $\tz'$, $h(\tz'') = h(\tz') = \ty$. Now, in case (a) where $h$ is $\epsilon/3$-robust at $\tz$, $h(\tz) = h(\tz'') = \ty$, which is correct. In case (b) where $h$ is not $\epsilon/3$-robust at $\tz$, $\hh$ rejects $\tz$ and makes no error.

Hence if $h$ is correct at $\tz'$, then $\hh$ makes no error at $\tz$. %
So the error bound for $h$ implies the desired error bound for any $\hh$ in the set
\begin{equation}
    \Delta' = \left\{\hh = \FUT(h): h \in \Delta_{\calH}^{\UT}(\bx, \by, \tbz') \right\}.
\end{equation}

As we have access only to the adversarial test data $\tbz$, 
to ensure $\epsilon/3$-robustness at the unknown $\tbz'$, we need to ensure $\epsilon$-robustness at $\tbz$.
Let 
\begin{equation}
  \Delta''
  :=
  \cup \left\{\hh = \FUT(h): h \in \bigcap_{\tbz' \in \UTSI(\tbz)}\Delta_{\calH}^{\UT}(\bx, \by, \tbz') \right\}
\end{equation}
and let
$\hat{\Delta} = \bigcup_{\tbz' \in \UTSI(\tbz)}\Delta_{\calH}^{\UT}(\bx, \by, \tbz').$
By the above, as $\Delta'' \subseteq \Delta'$, any $\hh$ in $\Delta''$ achieves the desired bound.
If $|\hat{\Delta}| = 1$, then $|\Delta'| = 1$ and as $\Delta' \subseteq \hat{\Delta}$, $\hat{\Delta} = \Delta'$ and so any $\hh$ in $\Delta'' \cup \hat{\Delta}$ likewise achieves the bound.

Hence, if we let 
\begin{equation}
    \Delta = \begin{cases}\Delta'' \cup \hat{\Delta} & |\hat{\Delta}| = 1\text{, }\\\Delta'' & \text{ otherwise }\end{cases},
\end{equation} we obtain the theorem statement.

\section{Defense by Transduction and Rejection} \label{sec:method}

The analysis of \Cref{thm:rejection-simplified-realizable} suggests the following defense algorithm: (1) first obtain a classifier $h$ that is robust and correct on the training data and also robust on the test inputs, (2) then transform $h$ to a selective classifier $\hh$ by rejecting inputs too close to the decision boundary of $h$. We describe the resulting defense below:

\textbf{Step (1)} To get $h$, we perform adversarial training on both the training set and the test set, using a robust cross-entropy objective. As in TADV~\citep{chen2022towards} we train with private randomness.
Specifically, we train a model with softmax output as the class prediction probabilities $h^{s}$ and the class prediction is $h(x) = \argmax_{y \in \calY} h^s_y(x)$. 
Given the labeled training data $(\bx, \by)$ and the test inputs $\tilde{\bz}$, we optimize the following objective:  

\begin{equation} \label{eqn:transductive-loss}
  \min_h \left(\begin{array}{l}
    {1\over n}\sum_{(x,y) \in (\bx, \by)}\left[\begin{array}{l}
        \LCE(h^{s}(x), y) \\
        + \max_{x'\in \U(x)} \LCE\left(h^{s}(x'), y\right)
       \end{array}\right] \\
    + {\lambda\over m}\sum_{\tilde{z} \in \tilde{\bz} } \left[ \max_{\tilde{z}'\in \U(\tilde{z})} \LCE\left(h^{s}(\tilde{z}'), h(\tilde{z})\right) \right]
   \end{array}\right)
\end{equation}

where $\LCE$ is the cross-entropy loss and $\lambda>0$ is a hyper-parameter.  

\textbf{Step (2)} Having learned $h$, we now turn $h$ into a selective classifier $\hh$. To do this, we need to compute the transformation $\FUT$, however, the construction following~\citep{tramer2022detecting} is computationally inefficient and as such is not practical for an empirical defense. Hence, we present a simple but effective approach which performs similarly in practice, as we show in Section~\ref{sec:rej-exps}.

\textbf{Empirical Classifier to Selective Classifier Transformation:} Recall that $\hh$ rejects the input $x$ if there exists $x' \in \U^{1/3}(x)$ with $h(x) \neq h(x')$; otherwise accepts and predicts the label $h(x)$.
So we only need to determine the existence of $x' \in \U^{1/3}(x)$ with $h(x) \neq h(x')$. 

We use a standard inductive attack, PGD, for this by solving:
\begin{align}
   \argmax_{ x' \in \U^{1/3}(x) } \quad \LCE(h^{s}(x'), h(x)). 
\end{align}
When $\U$ is $\ell_p$ norm ball of radius $\epsilon$, the constraint is then $\|x' - x\| \le \epsilon/3$. In practice, we can generalize this to $\|x' - x\| \le \epsilon_{\text{defense}}$ where 
$\epsilon_{\text{defense}}$ is a hyper-parameter we call the \emph{rejection radius}.

Taken together, we obtain a strong defense with transduction and rejection which significantly outputerforming existing baselines (see Section~\ref{sec:tldr-robustness}), which we refer to as \textbf{TLDR} (Transductive Learning Defense with Rejection).

\textbf{Discussion on Computational Cost.} The computational cost of training with TLDR is higher than that of standard adversarial training, in particular, by a factor of at most two; the cost of the transformation $\FUT$ is the same as that of PGD. As with general transductive defenses, the training process must be repeated for each new batch of samples; hence TLDR is suited to applications with minimal latency requirements, which may amortize the cost of training over a large batch of test samples, on the order of the full training set.

\textbf{Discussion on Evaluation.} Adversarial evaluations of novel defenses are well known to be challenging~\citep{chen2022towards, binarization}; hence, we construct an adaptive attack targeting our defense in Section~\ref{sec:attack} and thorougly evaluate it in Section~\ref{sec:experiment}. As we incorporate GMSA~\citep{chen2022towards} in our attack, we must perform multiple iterations of training in evaluation, each of which is computationally costly. Hence, attacking and evaluating TLDR is extremely computationally expensive.

\subsection{Adaptive Attacks} \label{sec:attack}

Since no strong adaptive attacks exist for the new transduction+rejection setting to our knowledge, we design one here.
Our attack is based on GMSA in \citep{chen2022towards}, which has been shown to be a strong attack for transductive defense (without rejection). 

The goal of the attack is to find perturbations $\tilde{\bz}$ of the clean test inputs $\tilde{\bx}$ such that the transductive learner has a large error when given $(\bx,\by,\tilde{\bz})$. GMSA runs in stages; in each stage $t$, it simulates the transductive learner on the current data set $(\bx,\by,\tilde{\bz}_t)$ to get a classifier $h_t$, and then maximizes the minimum or average loss of $\{h_i\}_{i=1}^t$ to get the updated perturbations of the test inputs $\tilde{\bz}_{t+1}$ (called $\text{GMSA}_\text{MIN}$ and $\text{GMSA}_\text{AVG}$, respectively). See \citep{chen2022towards} for the details.

GMSA does not directly apply to our setting since we have selective classifiers $\hh$ with a rejection option which is not considered in GMSA. 
Our contribution is to design a method to get the updated perturbations $\tilde{\bz}$ of the test inputs in each stage such that  the selective classifier incurs a large error.
Recall that $\hh$ constructed from $h$ incurs error in two cases: (1) it accepts $\tilde{z}$ and misclassifies with $h(\tilde{z}) \neq y$; (2) $\tilde{z} = \tilde{x}$ and it rejects $\tilde{z}$. We consider the two cases below.

\textbf{Case (1)} We will propose a novel loss  measuring the loss of a selective classifier $\hh$ on a perturbation $(\tilde{z},y)$ from a clean test point $(\tilde{x}, y)$ for such kind of error; maximizing this loss gives the desired $\tilde{z}$.
Recall that we need $\tilde{z}$ to be accepted and also the prediction $h(\tilde{z}) \neq y$. For the latter, we can maximize $\LCE(h^s(\tx), y)$ where $h^s$ is the class probabilities of $h$ (i.e., its softmax output). The former is equivalent to 
$
  \min_{h(\tilde{z}') \neq h(\tilde{z})}\|\tilde{z} - \tilde{z}'\| \geq \epsilon_{\text{defense}}.
$

Now, suppose $\LDB(\tilde{z}')$ is a \textit{surrogate loss} function on the closeness to the decision boundary; it increases when $\tilde{z}'$ gets closer to the decision boundary of $h$. Then the condition is equivalent to
$\left\|\tilde{z} - p(\tilde{z})\right\| = \epsilon_{\text{defense}}$ where
$p(\tilde{z}) = \argmax_{\|\tilde{z}' - \tilde{z}\| \leq \epsilon_{\text{defense}}} \LDB(\tilde{z}')$.
Now, as the maximum value of $\left\|\tilde{z} - p(\tilde{z})\right\|$ is exactly $\epsilon_{\text{defense}}$, we would like to maximize $\left\|\tilde{z} - p(\tilde{z})\right\|$ to satisfy the condition. 

Summing up, for this case, we would like to maximize:
\begin{equation}\label{eqn:lrej}
\begin{aligned}
    \LREJ(\tilde{z}, y) & := \LCE(h^s(\tilde{z} ), y) +
    \lambda' \left\|\tilde{z} - p(\tilde{z})\right\|,\\
    \textrm{~where~}
     p(\tilde{z}) & = \argmax_{\|\tilde{z}' - \tilde{z}\| \leq \epsilon_{\text{defense}}} \LDB(\tilde{z}')
\end{aligned}
\end{equation}
and $\lambda'>0$ is a hyper-parameter. 
Finally, for $\LDB$, the following definition works well in our experiments: 
$
  \LDB(\tilde{z}') := \text{rank}_2 \ h^s(\tilde{z}') - \max h^s(\tilde{z}'), 
$
which is maximized at the decision boundary as the top-two class probabilities are equal. 

\textbf{Case (2)} A critical step in an effective application of $\LREJ$ to a transductive attack is the selection of which points to perturb. To do this, we apply a post-processing step after finding $\tilde{z}$ by maximizing~(\eqref{eqn:lrej}). We must predict whether $\hh$ is more likely to incur error on $\tilde{z}$ or on the clean input $\tilde{x}$ (i.e., $\hh(\tilde{x}) \neq y$). If we expect that the clean point is likely to be incorrectly classified or rejected, then we update $\tilde{z}$ to $\tilde{x}$. In GMSA, we have access to a series of models trained on previous attack iterations; we estimate the likelihood of success at $\tz$ and $\tx$ by the fraction of previous models which fail at each point.

Summing up the two cases and combining with GMSA gives our final attack (details in Algorithm~\ref{alg:rej-attack-gmsa} in Appendix~\ref{app:attack-details}).

\section{Experiments}\label{sec:experiment}

This section performs experiments to evaluate the proposed method TLDR and compare it with baseline methods (e.g., those using only rejection or transduction). 
Our main findings are: {\bf 1)} TLDR outperforms the baselines significantly in robustness, confirming the advantage of combining transduction and rejection. {\bf 2)} Our adaptive attack is significantly stronger than existing attacks which were not designed for the new setting, providing a strong evaluation. 
{\bf 3)} Rejection rates rise steadily with the rejection radius, but few clean samples are rejected and the robust accuracy remains stable.

\subsection{Datasets and Defense/Attack Setup}
We evaluate on MNIST~\citep{lecun1998mnist} and CIFAR-10~\citep{krizhevsky2009learning}. We consider an adversarial budget of $\epsilon=0.3$ in $l_\infty$ on MNIST and $\epsilon=8/255$ in $l_\infty$ on CIFAR-10. 
For defense, on MNIST, we use a LeNet architecture; on CIFAR-10 we use a ResNet-20 architecture. In both cases, we train for 40 epochs with a learning rate of 0.001 using ADAM for optimization. On MNIST, we use 40 iterations of PGD during training with a step size of 0.01. On CIFAR-10, we use 10 iterations of PGD in training with a step size of 2/255. In training TLDR, we set $\lambda = 0.176$ after a warm start period in which $\lambda = 0$. We use a rejection radius of $\epsilon/4$ for selective classifiers.
For attack, we use 10 iterations of GMSA on both datasets. On MNIST, we use 200 steps of PGD with a stepsize of 0.01 while generating adversarial examples. On CIFAR-10, the PGD attacks use 100 steps with a stepsize of 1/255. Defense settings used while training models in GMSA (including internal PGD settings) are the standard defense settings. Internal optimizations in the calculation of $\LREJ$ use 10 steps of PGD with a stepsize of 15\% of the rejection radius. We use $\lambda' = 1$ in $\LREJ$; we observe little sensitivity to the parameter.

\subsection{Attack Evaluation}\label{sec:attack-evaluation}

\begin{table}[!t]
      \caption{Robust accuracy by different attacks on TLDR. The strongest attack is  \textbf{boldfaced}.}
    \begin{center}
        \begin{tabular}{l c c}
    \toprule
     Attack &  MNIST &  CIFAR-10 
    \\
    \midrule
    PGD ($\LCE$)  & 0.991 &  0.794 \\
    PGD ($\LREJ$)  & 0.988 &  0.781 \\
    AutoAttack  & 0.989 &  0.756 \\
    GMSA ($\LCE$)   & 0.988 &   0.853\\
    \textbf{GMSA ($\LREJ$)} &   \textbf{0.972} &   \textbf{0.739}\\
    \bottomrule
    \end{tabular}  
\label{tab:ablation-method}
    \end{center}
    \caption{Robust accuracy under different attack losses on a fixed adversarially trained model with rejection, AutoAttack for comparison.
The strongest attack is \textbf{boldfaced}.} 
    \vskip 0.1in
  \begin{adjustbox}{width=\columnwidth,center}
    \begin{tabular}{l c c}
    \toprule
    Loss & MNIST &  CIFAR-10 
    \\
    \midrule
    AutoAttack~\citep{croce2020reliable} & 0.980 & 0.592\\
    \midrule
    $\LCE$  & 0.977   & 0.524\\ 
    $\LREJ(\LCE)$   & 0.974   & 0.470\\
    \textbf{$\LREJ$} &   \textbf{0.973}   & \textbf{0.458}\\
    \bottomrule
    \end{tabular} 
    \end{adjustbox}
    \label{tab:ablation-lrej}
    \vskip -0.1in
\end{table}

\begin{table*}[ht]
    \caption{Results on MNIST and CIFAR-10. Robust accuracy is 1 - robust error; see \Cref{sec:preliminaries}. $\PREJ$ is the percentage of inputs rejected. The baseline results are from~\citep{chen2022towards}. The strongest attack against each defense is shown. 
    The best result is  \textbf{boldfaced}.} 
    \vskip 0.1in
    \begin{adjustbox}{width=\textwidth,center}
    \begin{tabular}{l  l l cc cc}
    \toprule
    \multirow{2}{*}{Setting} & \multirow{2}{*}{Defense} & \multirow{2}{*}{Attacker} & \multicolumn{2}{c }{MNIST} & \multicolumn{2}{c}{CIFAR-10}\\ \cline{4-5} \cline{6-7}
    & & & $\PREJ$ & Robust accuracy & $\PREJ$ & Robust accuracy
    \\
    \midrule
    Induction & AT~\citep{madry2018towards} & AutoAttack & -- & 0.897 & -- & 0.448\\
    \midrule
    Rejection only & AT (with rejection) & PGD ($\LREJ$) & 0.852 & 0.968 & 0.384 & 0.634\\
    \midrule 
    \multirow{2}{*}{Transduction only} & RMC~\citep{pmlr-v119-wu20f} & GMSA ($\LCE$) & -- & 0.588 & -- & 0.396\\
    & DANN~\citep{domainadversarial} & GMSA ($\LCE$) & -- & 0.062 & -- & 0.055\\
    & TADV~\citep{chen2022towards} & GMSA ($\LCE$) & -- & 0.943 & -- & 0.541 \\
    \midrule
    Transduction+Rejection & URejectron~\citep{goldwasser2020beyond} & GMSA ($\LDISC$) & 0.274 & 0.721 & \textbf{0.000} & 0.145\\
    \bottomrule
    Transduction+Rejection & \textbf{TLDR (ours)}  & GMSA ($\LREJ$) & \textbf{0.126} & \textbf{0.972} & 0.208 & \textbf{0.739}\\
    \bottomrule
    \end{tabular} 
    \end{adjustbox}
    \label{tab:result}
    \vskip -0.1in
\end{table*}

\begin{table*}[ht]
    \caption{Comparison with state-of-the-art~\citep{peng2023robust, wang2023better, croce2020robustbench} on CIFAR-10 and CIFAR-100 under $l_\infty$ perturbations with budget 8/255.
    The best result is  \textbf{boldfaced}.} 
     \vskip 0.1in
    \begin{adjustbox}{width=\textwidth,center}
    \begin{tabular}{l l l l cc cc}
    \toprule
    \multirow{2}{*}{Setting} & \multirow{2}{*}{Defense} & \multirow{2}{*}{Architecture} & \multirow{2}{*}{Attacker} & \multicolumn{2}{c }{CIFAR-10} & \multicolumn{2}{c}{CIFAR-100}\\ \cline{5-6} \cline{7-8}
    & & & &  $\PREJ$ & Robust accuracy & $\PREJ$ & Robust accuracy
    \\
    \midrule
    Induction & \citep{peng2023robust} & Ra WideResNet-28-10 & AutoAttack & -- & 0.651 & -- & 0.372\\
    Induction & \citep{peng2023robust} & Ra WideResNet-70-16 & AutoAttack & -- & 0.711 & -- & 0.388\\
    Induction & \citep{wang2023better} & WideResNet-28-10 & AutoAttack & -- & 0.673 & -- & 0.388\\
    Induction & \citep{wang2023better} & WideResNet-70-16 & AutoAttack & -- & 0.707 & -- & 0.427\\
    \midrule
    Transduction+Rejection & \textbf{TLDR (ours)} & ResNet-20 & GMSA ($\LREJ$) & 0.208 & 0.739 & -- & -- \\
    Transduction+Rejection & \textbf{TLDR (ours)} & WideResNet-28-10 & GMSA ($\LREJ$) & 0.111 & \textbf{0.816} & 0.171 & \textbf{0.579} \\
    \bottomrule
    \end{tabular} 
    \end{adjustbox}
    \label{tab:sota}
     \vskip -0.1in
\end{table*}

\Cref{tab:ablation-method} shows the results of different attack methods on TLDR. Previous work~\citep{chen2022towards} shows that transduction-aware attacks are necessary against transductive defenses; we observe that attacks (PGD on $\LCE$ or $\LREJ$ and AutoAttack) from the traditional setting perform poorly against our defense. We can also see that GMSA significantly outperforms even a rejection-aware transfer attack (referred to as PGD targeting $\LREJ$; note that PGD and AutoAttack do \textit{not} target the final model in this case, given the transductive setting, but instead target a proxy trained by the adversary); see Algorithm~\ref{alg:rej-attack-transfer} in Appendix~\ref{app:attack-details} for the full details.

This shows that GMSA is critical for attacking a transductive defender; while PGD and AutoAttack are strong against an inductive model, they performs poorly facing transduction. Finally, we observe that GMSA with $\LCE$ is much weaker than GMSA with $\LREJ$. This shows another key component in our adaptive attack, the loss $\LREJ$, is also critical to get a strong attack against our defense.

To further investigate the importance of $\LREJ$, we attack an adversarially trained model with rejection with PGD on different losses: $\LREJ$, cross-entropy $\LCE$, and $\LREJ$ with $\LDB$ replaced by $\LCE$, with AutoAttack given for comparison. Table~\ref{tab:ablation-lrej} shows that $\LREJ$ significantly outperforms both PGD targeting alternative losses and AutoAttack. See Appendix~\ref{app:attack-ablation} for an evaluation of the effectiveness with which $\LREJ$ targets rejection using the binarization test~\citep{binarization}.

\subsection{Robustness of TLDR}\label{sec:tldr-robustness}

\paragraph{Baselines.}
(1) AT: adversarial training~\citep{madry2018towards}; (2) AT (with rejection): adversarial training (AT) with rejection; (3) RMC~\citep{pmlr-v119-wu20f}; (4) DANN~\citep{domainadversarial}; (5) TADV~\citep{chen2022towards}; (6) Rejectron~\citep{goldwasser2020beyond}. Among them, (1) is in the traditional induction setting, (2) is rejection only, (3)(4)(5) are transduction only, and (6) incorporates both transduction and rejection. 

\paragraph{Evaluation.}
We attack the defenses and report the robust accuracy (1 - the robust error defined in Section~\ref{sec:preliminaries}).
To attack inductive classifiers, we use AutoAttack~\citep{croce2020reliable}. For inductive selective classifiers, we use PGD on the rejection-aware loss $\LREJ$ from Eqn (\ref{eqn:lrej}). For transductive classifiers, we use GMSA which has been shown to be a strong adaptive attack on transduction~\citep{chen2022towards}. Finally, for our transductive selective classifiers, we use our adaptive attack in~\Cref{sec:attack} (roughly GMSA with $\LREJ$). For Rejectron~\citep{goldwasser2020beyond} we use GMSA with a loss function $\LDISC$ targeting their defense; see Appendix~\ref{app:rejectron-experiments} for the details. We include an ablation of the two core components of TLDR (the transductive loss term and the transformation into a selective classifier) in Appendix~\ref{app:attack-ablation}.

For transductive models, we report the stronger of $\text{GMSA}_\text{MIN}$ and $\text{GMSA}_\text{AVG}$.
Inductive models are trained with standard adversarial training~\citep{goodfellow2015explaining}, 
 and transductive models with the TLDR loss in Eqn (\ref{eqn:transductive-loss}). As Rejectron depends heavily on a key hyperparameter determining confidence needed to reject, we report the results for the parameter value strongest against our attack. The best-performing value on CIFAR-10 effectively eliminated the possibility of rejection (hence the rejection rate of 0); other choices resulted in near-0 robust accuracy.

\begin{table*}[!t]
     \caption{Comparison of our rejection-only defense with budget $\epsilon$ to induction-only defenses with budget $\epsilon / 2$.}
     \vskip 0.1in
  {\small
  \begin{center}
    \begin{tabular}{l l l l l l c}
      \toprule
      Dataset & Model & Defense & Attacker & $\epsilon$ (training) & $\epsilon$ (attack) & Robust accuracy\\
      \midrule
      CIFAR-10 & Resnet-20 & AT~\citep{madry2018towards} & PGD ($\LCE$) & 8/255 & 8/255 & 0.478\\
      CIFAR-10 & Resnet-20 & AT~\citep{madry2018towards} & PGD ($\LCE$) & 4/255 & 4/255 & 0.564\\
      CIFAR-10 & Resnet-20 & \textbf{AT (with rejection) [ours]} & PGD ($\LREJ$) & 4/255 & 8/255 & 0.564\\
      \midrule
      CIFAR-10 & WideResnet-28-10 & AT~\citep{madry2018towards} & PGD ($\LCE$) & 8/255 & 8/255 & 0.429\\
      CIFAR-10 & WideResnet-28-10 & AT~\citep{madry2018towards} & PGD ($\LCE$) & 4/255 & 4/255 & 0.602\\
      CIFAR-10 & WideResNet-28-10 & \textbf{AT (with rejection) [ours]} & PGD ($\LREJ$) & 4/255 & 8/255 & 0.601\\
      \midrule
      CIFAR-100 & WideResNet-28-10 & AT~\citep{madry2018towards} & PGD ($\LCE$) & 8/255 & 8/255 & 0.181\\
      CIFAR-100 & WideResnet-28-10 & AT~\citep{madry2018towards} & PGD ($\LCE$) & 4/255 & 4/255 & 0.307\\
      CIFAR-100 & WideResNet-28-10 & \textbf{AT (with rejection) [ours]} & PGD ($\LREJ$) & 4/255 & 8/255 & 0.307\\
      \bottomrule 
     \end{tabular}
     \label{tab:rejection-results}
  \end{center}
  }
    \vskip -0.1in
\end{table*} 

\textbf{Comparison of Defenses.}
\Cref{tab:result} shows the robust accuracy and rejection rate of different methods. We observe that either transduction or rejection can improve the performance, while combining both techniques leads to the best results.
In particular, our defense outperforms existing transductive defenses such as RMC and DANN. Results for $l_2$ perturbations are given for $l_2$ in Appendix~\ref{app:attack-ablation}.
See Table~\ref{tab:sota} for a comparison to the state-of-the-art. With a much smaller ResNet-20 architecture, TLDR outperforms the strongest existing baseline on CIFAR-10, and, with a WideResNet-28-10 architecture, we obtain an in robust accuracy; on CIFAR-100 of over 10\%, we obtain an improvement in robust accuracy of over 15\%.

\paragraph{Discussion on Evaluation.} As our key focus is on demonstrating the potential advantages of one setting (transduction+rejection) over others, comparisons between settings are necessary. In each setting, robust accuracy represents the same concept, the fraction of samples on which we are correct. The difference between settings lies in their different notions of “correctness”; each concept of correctness incorporates both the potential advantages and the disadvantages of each setting, e.g. in the rejection case, a new type of error is possible: rejecting a clean sample. Hence, we compare the fraction of samples on which we can be correct between settings (and between defenses in the same setting).
 
\subsection{Rejection-Only Defense}\label{sec:rej-exps}

\citep{tramer2022detecting} shows that the existence of a classifier with $x$ robust accuracy with adversarial budget $\eps$ implies the existence of a selective classifier with $x$ robust accuracy with adversarial budget $2\eps$; however, as the construction used is computationally inefficient, this has not yet been realized in practice. Table~\ref{tab:rejection-results} evaluates our rejection-only defense by comparing its results on the full adversarial budget (8/255) to the theoretical bound obtained by a classifier with the half-budget of 4/255. In each case, our empirical transformation results in a robust accuracy is very close to the results obtained by Tram\`er's idealized computationally inefficient approach. In this way, our approach enables practical realization of Tram\`er's upper bound on gains from rejection in the inductive case.

\section{Conclusion}\label{sec:conclusion}

Existing works on leveraging transduction and rejection gave mixed results on their
benefits for adversarial robustness. In this work we take a step in realizing their promise
in practical deep learning settings. Theoretically, we show that a novel application of
Tram\`er's results give improved sample complexity for robust learning in the bounded
perturbations setting. Guided by our theory, we identified a practical robust learning algorithm
leveraging both transduction and rejection. Systematic experiments confirm the benefits of our
constructions. There are many future avenues to explore, such as improving the theoretical bounds,
and improving the efficiency of our algorithms.

\section*{Impact Statement}

The rapid advance of ML methods in recent years has coincided with increasing deployment in safety-critical applications. Hence, the potential societal risks associated with unreliable models, in particular those which can easily be misled by adding imperceptible noise, has increased significantly. Our work presents a principled yet practical adversarial defense, helping to limit these risks.

\newpage 

\appendix
\onecolumn

\begin{center}
  \textbf{\LARGE Appendix}
\end{center}

\section{Proof Details}\label{app:proof}

Before introducing the proof for the generalization results, we first need to make some additional definitions. We define the \textit{empirical robust risk} as 
\[
\mathrm{\hat{R}}_{\mathcal{U}}(h ; S)= \sum_{(x,y) \in S}\left[\sup _{z \in \mathcal{U}(x)} \mathbb{1}\{h(z) \neq y\}\right]
\]
And we can define the \textit{empirical robust risk under rejection} accordingly: 
\[
\hat{\mathrm{R}}_{\mathcal{U}}^{\rej}(h ; S)= \sum_{(x,y)\in S}\left[\sup _{z \in \mathcal{U}(x)} \mathbb{1}\{h(x) \neq y \lor h(z) \notin \{y, \perp\}\}\right]
\]
And we can define the corresponding robust empirical risk minimization procedure (under rejection) as follows: 

\[
\textrm{RERM}_{\calH}(S):= \underset{h \in \calH}{\operatorname{argmin}}\; \hat{\mathrm{R}}_{\mathcal{U}}(h ; S)
\]

\[
\textrm{RERM}_{\calH}^{\rej}(S):= \underset{h \in \calH}{\operatorname{argmin}} \;  \hat{\mathrm{R}}_{\calU}^{\rej}(h ; S)
\]

\subsection{Rejection Only: Realizable Case}\label{app:ind-real}
\begin{definition}[Realizable Robust PAC Learnability under Rejection] For $\calY = \{0, 1\}$, $\forall \epsilon, \delta \in (0,1), \calH = \calH_c \times \calH_r$, the sample complexity of realizable robust $(\epsilon, \delta)$ - PAC learning of $\calH$ with respect adversary $\mathcal{U}$ under rejection, denoted as $\calM_{\textrm{RE}}(\epsilon,\delta; \calH, \mathcal{U})$, is defined as the smallest $m \in \mathbb{N} \cup \{0\}$ for which there exists a learning rule $\calA :  (\calX \times \calY)^m  \longmapsto (\calY \cup \{\perp\})^{\calX}$ s.t. for every data distribution $\calD$ over $(\calX \times \calY)^m$ where there exists a predictor with rejection option $h^* \in \calH$ with 0 risk, $\mathrm{R}_{\calU,\textrm{rej}}(h^*; \calD) = 0$ with probability at least $1-\delta$ over $S \sim \calD^m$, 
\[
\mathrm{R}_{\calU}^{\rej} (\calA(S); \calD) \le \epsilon 
\]
If no such $m$ exists, $\mathcal{M}_{\textrm{RE}}(\epsilon, \delta; \calH, \calU) = \infty$. We say that $\calH$ is robustly PAC learnable under rejection in the realizable setting with respect to adversary $\calU$ if $\forall \epsilon, \delta \in (0,1)$, $\calM_{\textrm{RE}}(\epsilon, \delta; \calH, \calU)$ is finite. 
\end{definition}

\begin{theorem}[Sample Complexity for Realizable Robust PAC Learning under Rejection]\label{thm:sample-complexity-ind-real}
In the realizable setting, for any $\calH = \calH_c \times \calH_r$ and $\mathcal{U}$, and any $ \epsilon, \delta \in (0, 1/2)$, 
\begin{align}
    \mathcal{M}_{\textrm{RE}}(\epsilon, \delta; \calH, \mathcal{U}) =2^{O\left(\left(d_r+d_c\right)\log \left(d_r + d_c\right) \right)} \frac{1}{\varepsilon} \log \left(\frac{1}{\varepsilon}\right)+O\left(\frac{1}{\varepsilon} \log \left(\frac{1}{\delta}\right)\right)
\end{align}
where $d_r = \VC(\calH_r), d_c = \VC(\calH_c)$.

The idea of the proof is to adapt the classical sample compression argument \citep{littlestone1986relating} with improvements based on \citep{montasser2019vc, hanneke2019sample, moran2016sample}. The generalization result in the inductive case %
directly comes from \Cref{eqn: inductive-realizable-generalization}.
\end{theorem}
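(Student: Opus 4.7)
The plan is to reduce robust learning with rejection to a standard PAC learning problem over the transformed hypothesis class $T(\calH)$, then combine sample inflation and discretization in the style of Montasser et al.~\cite{montasser2019vc} with the sample compression machinery of Moran--Yehudayoff~\cite{moran2016sample} and Hanneke--Kontorovich--Sadigurschi~\cite{hanneke2019sample}. The key identity driving the approach is
\[
\mathrm{R}^{\rej}_\U(h;\calD) = \Pr_{(x,y)\sim\calD}\left[\sup_{x'\in\U(x)} T(h)(x,x',y) = 1\right],
\]
so bounding the robust risk with rejection over $\calH$ is equivalent to bounding a worst-case binary error for the class $T(\calH)$ on triples $(x,x',y)$. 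The realizability assumption $\mathrm{OPT}^{\rej}_\U = 0$ guarantees an $h^\star \in \calH$ with $T(h^\star)(x,x',y) = 0$ for every $x' \in \U(x)$ almost surely under $\calD$.

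First I would inflate the training sample $S = ((x_i, y_i))_{i=1}^n$ into the (typically infinite) set $S_\U = \{(x_i, x', y_i) : x' \in \U(x_i),\, i \in [n]\}$ and then discretize it using the dual structure of $T(\calH)$. By Sauer--Shelah applied to $T(\calH)$, the projection of $T(\calH)$ onto any finite subset of $S_\U$ realizes at most $O(n^{\VC(T(\calH))})$ distinct labelings, so one can extract a finite representative set $\hat S_\U$ that preserves the behavior of every $h \in \calH$ on $S_\U$. This discretization step, which implicitly enumerates a dual witness for each critical perturbation per training point, is exactly where $\VC(T(\calH))$ enters the bound.

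Next I would run a robust ERM on $\hat S_\U$; realizability guarantees a consistent $h \in \calH$ with $\hat{\mathrm{R}}^{\rej}_\U(h; S) = 0$. To upgrade consistency on $\hat S_\U$ to a generalization rate, I would invoke the $\alpha$-boosting / sample-compression construction of Moran--Yehudayoff (as adapted for robust losses in~\cite{montasser2019vc} and for real-valued predictors in~\cite{hanneke2019sample}): treat the ERM as a weak learner with a constant edge on any weighted subsample, boost it into a majority-vote predictor that depends only on a sample-compression set of size $k = 2^{O(\VC(T(\calH)))}$, and apply the classical Littlestone--Warmuth bound~\cite{littlestone1986relating} to conclude that a size-$k$ compression scheme generalizes at rate $O((k \log n + \log(1/\delta))/n)$, matching the statement of the theorem.

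The main obstacle I anticipate is expressing the boosted predictor as a single selective classifier whose rejection region interacts correctly with $T(\cdot)$. Rejection is not closed under voting: a point might be rejected by some weak learners and accepted-then-misclassified by others, so the aggregation must be defined so that any error of the final selective classifier can be charged against an error of $T(h')$ for some weak learner $h'$ on a triple in the compression set. Handling this likely requires decoupling the classifier and rejector components of $\calH$, boosting them in a coordinated way, and verifying that the composition lies (perhaps improperly) in a class still controlled by $\VC(T(\calH))$. This interaction between boosting and the non-monotone rejection operation is what forces the exponential $2^{\VC(T(\calH))}$ compression size, and is the step where the rejection-only setting pays a price relative to the transduction+rejection setting of Theorem~\ref{thm:rejection-simplified-realizable}.
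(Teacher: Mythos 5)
Your overall route is the same as the paper's: reduce to the transformed class $T(\calH)$, inflate the sample to $S_{\calU}$, discretize via the dual space of $T(\calH)$ with Sauer's lemma, treat RERM on small subsamples as a weak learner, run $\alpha$-boosting, and close with a Littlestone--Warmuth compression bound (plus a sparsification step to shrink the compression set). So the architecture is right. But you leave unresolved the one step you correctly identify as the crux --- how to aggregate selective classifiers by voting when ``reject'' is a third output --- and your proposed fix (decoupling $\calH_c$ and $\calH_r$ and boosting them in a coordinated way) is not needed and would overcomplicate matters. The paper's resolution is a single quantitative device: boost until the fraction of weak learners $t_k = T(h_k)$ that err on each triple $(x,x',y)$ is below $1/3$ (the weak learners have edge $1/9$, so $O(\log|\hat{S}_{\calU}|)$ rounds suffice to get average loss $<2/9$). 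Then for the plain majority vote $\bar{h}$ over outputs in $\calY \cup \{\perp\}$: on $x$, more than $2/3$ of the $h_k$ output $y$, so $\bar{h}(x)=y$; on $x'$, more than $2/3$ of the $h_k$ output something in $\{y,\perp\}$, so the plurality winner lies in $\{y,\perp\}$, since any label outside that set receives less than a $1/3$ share. Hence $T(\bar{h})(x,x',y)=0$ on all of $S_{\calU}$, and no coordinated or improper aggregation of the classifier and rejector components is required.

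You also misattribute the source of the exponential factor. The $2^{O(\VC(T(\calH)))}$ compression size does not come from the interaction between boosting and rejection; it comes from the same place as in Montasser et al.'s rejection-free argument, namely the bound $\VC^*(T(\calH)) < 2^{\VC(T(\calH))+1}$ on the dual VC dimension (Assouad), which controls both the size of the discretized set $\hat{S}_{\calU}$ and the number $N = O(\VC^*(T(\calH)))$ of voters retained after sparsification. The only genuinely new ingredients relative to that template are the $1/3$-majority argument above and the lemma that $\VC(T(\calH)) = O((d_r+d_c)\log(d_r+d_c))$, which is what converts the bound into the stated form in $d_r$ and $d_c$; your write-up never establishes this last estimate either, and without it the theorem's dependence on $\VC(\calH_c)$ and $\VC(\calH_r)$ does not follow.
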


\begin{proof}
First, we define the concept of \textit{sample compression scheme} and \textit{sample compression algorithm}.
\begin{definition}[Sample Compression Scheme]\label{def:sample-compress}
Given $\forall m \in \mathbb{N}$ samples, $S \sim \calD^m$, a \textit{sample compression scheme} of size $k$ is defined by the following pair of functions: 
\begin{enumerate}
    \item Compression function $\kappa: (\calX \times \calY)^m \mapsto (\calX \times \calY)^{\leq k}$. 
    \item Reconstruction function: $\rho: (\calX \times \calY)^{\leq k} \mapsto \calH$.
\end{enumerate}
An algorithm $\calA$ is a \textit{sample compression algorithm} if $\exists \kappa, \rho$ s.t. $\calA(S) = (\kappa \circ \rho)(S)$. 
\end{definition}

Fix $\epsilon, \delta \in (0,1)$, $m > 2(d_r + d_c) \log(d_r + d_c)$. Let the compression parameter, $n =  \calO\left(\left(d_r+d_c\right)\log \left(d_r + d_c\right) \right)$. Let $\calD$ be any distribution, then by realizability of the learner, $\inf _{h \in \calH} \textrm{R}_{\calU}^{\rej}(h ; \calD)=0$. Thus, $\forall S$ sampled from $\calD$, we have $\hat{\textrm{R}}_{\calU}^{\rej} ( \textrm{RERM}_{\calH}^{\rej}(S);S) = 0$.

\paragraph{Compression} First, we define a compression function $\kappa$ as through the following inflation and discretization procedure. Given the training data $S := \{(x_i,y_i)\}_{i\in [m]}$, we define the following index mapping: 
\begin{align}
    I(x) = \min \{i \in [m] : x \in \calU(x_i) \}, \quad \forall x \in \bigcup_{i\in [m]} \calU(x_i).
\end{align}
In another word, this index function outputs the first indexed training sample to include $x$ in its neighborhood. 

Then, we consider the set of RERM mapping learned by a size $n$ subset of the training data: 
\begin{align}
    \hat{\calH}=\{\operatorname{RERM}_{\calH}^{\rej}(L):L \subseteq S,|L|=n\}.
\end{align}
Note that
\begin{align}
    |\hat{\calH}| \leq|\{L: L \subseteq S,|L|=n\}|=\left(\begin{array}{c}m \\ n\end{array}\right)\leq\left(\frac{e m}{n}\right)^{n}.
\end{align}

Then, we inflate the data in the following way: 
\begin{align}
    S_{\mathcal{U}}=\bigcup_{i \in [m]}\left\{\left(x_{I(x)},x, y_{I(x)}\right): x \in \mathcal{U}\left(x_{i}\right)\right\}.
\end{align}
Note that $x_{I(x)}$ can be different from $x_i$. 

Let's define the following transformation $T$: 
\begin{align}\label{eqn:tansform_T}
    T(h)(x,x',y) := \mathbb{1}\{h(x) \neq y \vee h(x') \notin\{y, \perp\}\} ,\; h \in \calH.
\end{align}
And we can obtain the transformed hypothesis class $T(\calH): = \{T(h) | h \in \calH\}$. 

Now, we proceed to define the \textit{dual space} $\calG$ of $T(\calH)$ as the following set of functions. 
\begin{align}
    \calG := \{g_{(x,x',y)} | g_{(x,x',y)}(t) = t(x,x', y), \; t \in T(\calH)\}.
\end{align}
We denote the VC dimension of the dual space as $\VC^*(T(\calH)):= \VC(\calG).$

By Lemma~\ref{lem: VC_rej_loss},
\begin{align}
    \VC(T(\calH)) = \calO \left(\left(d_r + d_c\right)\log\left( d_r + d_c\right)\right).
\end{align}
 
By the classic result in \citep{assouad1983densite}, the VC dimension of the dual space satisfies the following inequality: 
\begin{align}
   VC^*(T(\calH))< 2^{\VC(T(\calH))+1}. 
\end{align}

Now, we can construct the compressed dataset $\hat{S}_{\calU}$ as the following. For each $(x,x',y) \in S_{\calU}$, $\{g_{(x,x',y)}(t) \}_{t \in T(\hat{\calH})}$ gives a labeling. When ranging over $(x,x',y) \in S_{\calU}$, the labeling may not be unique. So for each unique labeling, we choose a representative $(x,x',y) \in S_{\calU}$, and let $\hat{S}_{\calU}$ be the set of the representatives. That is: 
\begin{align}
 \hat{S}_{\calU} = \left\{(x,x',y) \in S_{\calU}  \ \bigg | \ \{g_{(x,x',y)}(t) \}_{t \in T(\hat{\calH})} \text{ provides a unique labeling} \right\}.    
\end{align}

Intuitively, $\hat{S}_{\calU}$ split the infinite size dataset $S_{\calU}$ into finite size according to the labeling of $T(\hat{\calU})$ on the dual space. Thus, $\hat{S}_{\calU}$ is not necessarily unique but always exists. And $|\hat{S}_{\calU} |$ equals the number of possible labeling for $T(\hat{\calH})$. 

Let $d_*:= VC(\calG) = \VC^*(T(\calH))$ denote the VC-dimension of $\calG$, the dual hypothesis class of $T(\hat\calH)$~\citep{assouad1983densite}.  
By applying Sauer's Lemma, we obtain that for $|T(\hat{\calH})| > d_*$, 
\begin{align}
    |\hat{S}_{\mathcal{U}}| \leq \left(\frac{e|T(\hat{\calH})|}{d_*} \right)^{d_*}.
\end{align} 

Let $n = \Theta\left(\VC\left(T\left(\calH\right)\right)\right)$. 
For $m \geq n$, we have
\begin{align}
    |\hat{S}_{\mathcal{U}}| 
    & \leq \left(e |T(\hat{\calH})|  \right)^{d_*}
    \\
    & \leq \left(e|\hat{\calH}  |\right)^{d_*}
    \\ 
    & \leq \left(e \left( \frac{em}{n}\right)^n \right)^{d_*}
    \\ 
    & \leq \left(\frac{e^2 m}{n}\right)^{n d_*}
    \\
    & = \left(\frac{e^2 m}{\VC(T(\calH))} \right)^{\Theta( \VC(T(\calH)) \cdot \VC(T(\calH^*)) )}.
\end{align}

Now we have obtain the compression map: $\kappa(S) = \hat{S}_{\calU}$. 

\paragraph{Reconstruction}
Now, we want to reconstruct a hypothesis from $\hat{S}_{\calU}$. First, suppose we have a data distribution over $\hat{S}_{\calU}$, denoted as $\calP$. This distribution $\calP$ over samples will be later used in the $\alpha-$boosting procedure. 

Then, we sample the set of $n$ i.i.d. samples from $\calP$ and obtain $S' \in \hat{S}_{\calU}$. By classic PAC learning guarantee \citep{blumer1989learnability}, for $n = \Theta(\VC(T(\calH))) = \Theta(d_r + d_c)\log(d_r + d_c)$, we have with non-zero probability $\forall t \in T(\calH)$ with $\sum_{(x,x',y) \in S'}t(x,x',y) = 0$ implies  $\mathbb{E}_{(x,x',y) \sim \calP}t(x,x',y) < 1/9$. 
Let $L = \{(x,y): (x, x', y) \in S'\} \subseteq S$, and $t_{\calP} = T(\textrm{RERM}_{\calH}^{\rej}(L))$. Since $\hat{\textrm{R}}_{\calU}^{\rej}(\textrm{RERM}_{\calH}^{\rej}(L); L) = 0$, $\forall (x,x',y) \in S', t_{\calP}(x,x',y) = 0$. Thus, $\forall \calP$ over $\hat{S}_{\calU}$, there exists a weak learner $ t_{\calP} \in T(\hat{\calH})$, s.t. $\mathbb{E}_{(x,x',y) \sim \calP} \; t_{\calP}(x,x',y) < 1/9$.

Now, we use $t_{\calP}$ as a \textit{weak hypothesis} in a boosting algorithm, specifically $\alpha-$boost algorithm from \citep{schapire2012boosting} with $\hat{S}_{\calU}$ as the dataset and $\calP_k$ generated at each round of the algorithm. Then with appropriate choice of $\alpha$, running $\alpha-$boosting for $K = \calO(\log(|\hat{S}_{\calU}|))$ rounds gives a sequence of hypothesis 
$h_1 , \dots, h_K \in \hat{\calH} $ 
and the corresponding $t_i = T(h_i)$ such that $\forall (x,x',y) \in \hat{S}_{\calU}$,
\begin{align}\label{eqn:alpha-boosting}
& \quad  
\frac{1}{K}\sum_{k =1}^K \mathbb{1}\{h_k(x) \neq y \vee h_k(x') \notin\{y, \perp\}\} 
\\
& = \frac{1}{K}\sum_{k =1}^K t_k(x,x',y) 
\\
& < \frac{2}{9} < \frac{1}{3}.
\end{align}

Since $ \hat{S}_{\calU}$ includes all the unique labellings, $\frac{1}{K}\sum_{k = 1}^K t_k(x,x',y) < \frac{1}{3}, \; \forall (x,x',y) \in \hat{S}_{\calU}$ implies
\begin{align}
    \frac{1}{K}\sum_{k = 1}^K t_k(x,x',y) < \frac{1}{3}, \; \forall (x,x',y) \in S_{\calU}.
\end{align}
Let $\bar{h} :=  \textrm{Majority}(h_1, \dots, h_K)$, i.e., $\bar{h}$ outputs the prediction in $\calY \cup \{\perp\}$ that receives the most votes from $\{h_1, \dots, h_K\}$. Then $\forall (x,x',y) \in \hat{S}_{\calU}$, 
\begin{align}\label{eqn:transform_th}
    \mathbb{1}\{\bar{h}(x) \neq y \vee \bar{h}(x') \notin\{y, \perp\}\} = 0.
\end{align}
This is because: (1) on $x$, less than $1/3$ of $h_i$'s do not output $y$, so $\bar{h}(x)=y$; (2) on $x'$, less than $1/3$ of $h_i$'s do not output $y$ or $\perp$, so the majority vote must be in $y$ or $\perp$, i.e., $\bar{h}(x) \in \{y, \perp\}$. 

In summary, given the same $m$ training samples, we can simply find a $\bar{h}$ with 0 robust error on $S$:

\begin{align}
\hat{\textrm{R}}_{\calU}^{\rej}(\bar{h}; \calD) = \sum_{i=1}^{m}\left[\sup _{z \in \mathcal{U}(x)} \mathbb{1}\{\bar{h}(x) \neq y \lor \bar{h}(z) \notin \{y, \perp\}\} \right]  = 0.    
\end{align}

Now we have the compression set with size: 
\[
nK = \calO(\VC(T(\calH))\log(|\hat{S}_{\calU}|)) = \calO(\VC(T(\calH))^2\VC^*(T(\calH))\log(m/\VC(T(\calH))))
\]
Then, we apply Lemma 11 of \citep{montasser2019vc} (Replacing $\mathrm{R}_{\calU}$ with $\mathrm{R}_{\calU}^{\rej}$ still holds), we obtain for sufficiently large $m$, with probability at least $1-\delta$,  
\begin{align}
    \textrm{R}_{\calU}^{\rej}(\bar{h}; \calD) \leq \calO \left( \VC(T(\calH))^{2} \VC^{*}(T(\calH)) \frac{1}{m} \log (m / \VC(T(\calH))) \log (m)+\frac{1}{m} \log (1 / \delta) \right).
\end{align}

We then can extend the sparsification procedure from \citep{moran2016sample, montasser2019vc} to the rejection scenario. Since $t_1, \dots, t_K \in T(\hat{\calH})$, the classic uniform convergence results~\citep{shalev2014understanding} implies that we can sample $N = \calO(\VC^*(T(\calH)))$ i.i.d. indices $i_1, \dots, i_N \sim \mathrm{Uniform}([K])$ and obtain:
\begin{align}\label{eqn:sparsification-convergence}
\sup_{(x,x',y) \in S_{\calU}} \left| \frac{1}{N}\sum_{j=1}^N t_{i_j}(x,x',y) - \frac{1}{K}\sum_{i=1}^T t_i(x,x',y) \right| < \frac{1}{18}
\end{align}

And thus, we can combine \Cref{eqn:alpha-boosting} with \Cref{eqn:sparsification-convergence} and obtain: 
\[
\forall (x,x',y) \in S_{\calU}, \frac{1}{N}\sum_{j=1}^N t_{i_j}(x,x',y) \leq -\frac{1}{18} + \frac{1}{K}\sum_{i =1}^K t_{k}(x,x',y) <  -\frac{1}{18} + \frac{4}{9} = \frac{1}{2} 
\]

we can further obtain an improved hypothesis $\bar{t}' := \mathrm{Majority}(t_{i_1}, \dots t_{i_N})$ with 
\[
\bar{t}'(x,x',y) = 0, \forall (x,x',y) \in S_{\calU}
\]
Thus, the compression set has a reduced size: 
\[
nN = \calO(\VC(T(\calH))\cdot \VC^*(T(\calH)))
\]
Now, we apply Lemma 11 of \citep{montasser2019vc} and can obtain the following improved bound. Applying similar strategy from \Cref{eqn:transform_th}, we can obtain 
\begin{align}\label{eqn: full-recon}
\bar{h}^{'} := \mathrm{Majority}(h_{i_1}, \dots h_{i_N}) = \rho(\hat{S}_{\calU}) = \calA(S)
\end{align}
which is our full reconstruction map.

Then, for large sample size $m \geq c \; \VC(T(\calH)) \VC^*(T(\calH))$ ($c$ is a sufficiently large constant), with probability at least $1-\delta$, 
\begin{align}\label{eqn: inductive-realizable-generalization}
\textrm{R}_{\calU,\textrm{rej}}(\bar{h}' ; \calD) \leq \calO\left(\VC(T(\calH)) \VC^{*}(\calH) \frac{1}{m} \log (m)+\frac{1}{m} \log (1 / \delta)\right)
\end{align}
Plugging in Lemma~\Cref{lem: VC_rej_loss} and solving for $m$ gives 
\begin{align}
\mathcal{M}_{\textrm{RE}}(\epsilon, \delta; \calH, \mathcal{U}) &=2^{\calO\left(\VC(T(\calH)) \right)} \frac{1}{\varepsilon} \log \left(\frac{1}{\varepsilon}\right)+O\left(\frac{1}{\varepsilon} \log \left(\frac{1}{\delta}\right)\right)    \\
&= 2^{\calO\left(\left(d_r+d_c\right)\log \left(d_r + d_c\right) \right)} \frac{1}{\varepsilon} \log \left(\frac{1}{\varepsilon}\right)+O\left(\frac{1}{\varepsilon} \log \left(\frac{1}{\delta}\right)\right)
\end{align}

\end{proof}

\paragraph{Lemma}[VC dimension of robust loss with rejection]\label{lem: VC_rej_loss}
Let $\VC(\calH_c)= d_c$, and $\VC(\calH_r)= d_r$. Then, $\VC(T(\calH)) = \calO \left(\left(d_r + d_c\right)\log\left( d_r + d_c\right)\right)$.

\begin{proof}
Suppose $d > d_r + d_c$. 

By definition of VC dimension, the max number of labeling of $d$ points is $2^d$ on $h \in T(\calH)$. And since the label of $h$ is a deterministic function of $h_c$ and $h_r$, by Sauer's Lemma, the number of labeling of $h$ is at most $O(d^{d_r}) \times O(d^{d_c}) = O(d^{d_r + d_c})$. 

Thus, $2^d = \mathcal{O}(d^{d_r + d_c})$. And $d = O((d_r+d_c) \log(d_r+d_c))$. 

If $d < d_r + d_c$, $d= \mathcal{O}(d_r + d_c) \log(d_r + d_c)$ by definition. 

\end{proof} 

\subsection{Rejection Only: Agnostic Case}\label{app:ind-agn}

Now, we define notion of PAC learnability in the agnostic case under rejection setting as the follows: 

\begin{definition}[Robust PAC Learnability under Rejection] For $\calY = \{0, 1\}$, $\forall \epsilon, \delta \in (0,1), \calH = \calH_c \times \calH_r$, the sample complexity of robust $(\epsilon, \delta)$ - PAC learning of $\calH$ with respect to perturbation $\mathcal{U}$ under rejection, denoted as $\calM_{\textrm{AG}}(\epsilon,\delta; \calH, \mathcal{U})$, is defined as the smallest $m \in \mathbb{N} \cup \{0\}$ for which there exists a learning rule $\calA : (\calX \times \calY)^m  \longmapsto (\calY \cup \{\perp\})^{\calX}$ s.t. for every data distribution $\calD$ over $(\calX \times \calY)^m$,
\[
\mathrm{R}_{\calU}^{\rej} (\calA(S); \calD) \le \textrm{OPT}_{\U}^{\rej} + \epsilon 
\]
with probability at least $1-\delta$ over $S \sim \calD^m$. If no such $m$ exists, $\mathcal{M}_{\textrm{AG}}(\epsilon, \delta; \calH, \calU) = \infty$. We say that $\calH$ is robustly PAC learnable under rejection if $\mathcal{M}_{\textrm{AG}}(\epsilon, \delta; \calH, \calU)$ is finite for all $\epsilon, \delta \in (0,1)$.
\end{definition}

\begin{lemma}\label{lem:inductive-real-agnostic}
Let $\MRE = \MRE(1/3,1/3;\calH, \calU)$. Then, 
\begin{align}
    \MAG(\epsilon, \delta; \calH, \mathcal{U}) =\calO\left(\frac{\MRE}{\epsilon^2}\log^2\left(\frac{\MRE}{\epsilon}\right)+ \frac{1}{\epsilon^2} \log \left(\frac{1}{\delta}\right) \right)
\end{align}
\end{lemma}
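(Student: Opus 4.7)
The plan is to reduce the agnostic case to the realizable case via the classical sample-compression-to-agnostic-generalization template~\cite{littlestone1986relating, moran2016sample, hanneke2019sample}, instantiated for the robust-rejection loss. By hypothesis there exists a realizable learner $\calA_{\mathrm{RE}}$ that, given $\MRE$ samples from a realizable distribution, outputs a selective classifier of robust-rejection error at most $1/3$ with probability at least $2/3$. The strategy has two steps: (i) lift $\calA_{\mathrm{RE}}$ to a sample compression scheme of size $k = O(\MRE)$ for the robust-rejection loss; (ii) invoke the generic agnostic generalization bound for compression schemes and invert for $m$.

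For step (i), I would mimic the inflation-discretization-and-$\alpha$-boost construction from the proof of Theorem~\ref{thm:realizable-inductive}, but invoked black-box through $\calA_{\mathrm{RE}}$ rather than through $\mathrm{RERM}^{\rej}_{\calH}$. Concretely: inflate the training sample using the index map $I$ and the transformation $T$ of \eqref{eqn:tansform_T}; restrict attention to the finite set $\hat{S}_{\calU}$ of representative labelings under the dual class; and run $\alpha$-boost on $\hat{S}_{\calU}$ using $\calA_{\mathrm{RE}}$ as the weak learner at each round, where we resample $\MRE$ points under the current reweighting and invoke its realizable guarantee. After $O(\log|\hat{S}_{\calU}|)$ rounds the majority vote of the weak hypotheses achieves zero empirical robust-rejection loss on $S$ and is described by a compression set of size $O(\MRE \cdot \log|\hat{S}_{\calU}|)$; the same sparsification argument as in Theorem~\ref{thm:realizable-inductive} trims this down to $k = O(\MRE)$.

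For step (ii), the standard agnostic generalization bound for compression schemes with bounded $[0,1]$ loss yields, with probability at least $1-\delta$,
\begin{align*}
\mathrm{R}^{\rej}_{\calU}(\hat{h}; \calD) \le \hat{\mathrm{R}}^{\rej}_{\calU}(\hat{h}; S) + O\!\left(\sqrt{\frac{k\log(m/k) + \log(1/\delta)}{m}}\right).
\end{align*}
Choosing $\hat{h}$ to minimize empirical robust-rejection loss over the compression-generated hypotheses, and controlling the empirical risk of the optimal hypothesis around $\OPT_{\calU}^{\rej}$ via Hoeffding's inequality, we obtain $\mathrm{R}^{\rej}_{\calU}(\hat{h}; \calD) \le \OPT_{\calU}^{\rej} + \epsilon$ once $m = \Omega\!\bigl((k\log(m/k) + \log(1/\delta))/\epsilon^2\bigr)$. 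Inverting this implicit inequality in $m$ introduces a further logarithmic factor, producing the stated $O\!\bigl(\MRE \log^2(\MRE/\epsilon)/\epsilon^2 + \log(1/\delta)/\epsilon^2\bigr)$ bound. The principal obstacle is executing step (i) cleanly as a black-box reduction: the constant-probability guarantee of $\calA_{\mathrm{RE}}$ must be amplified to hold uniformly over all boosting rounds, which is handled in the standard way by running a few independent copies per round and selecting one that passes an empirical robustness check, with the amplification absorbed into the logarithmic factors without affecting the asymptotic rate.
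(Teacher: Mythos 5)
Your proposal is correct and is essentially the paper's own route: the paper proves this lemma in one line by invoking the proof of Theorem~8 of \cite{montasser2019vc} with the robust loss replaced by the robust-rejection loss, and your two steps---boosting the realizable learner into an $O(\MRE)$-size sample compression scheme for the rejection loss, then applying the agnostic compression generalization bound and inverting in $m$---are exactly what that cited argument does. The one detail worth stating explicitly is that the $\alpha$-boost/compression step must be applied to a maximal realizable subsample of $S$ (or the output selected by empirical risk minimization over compression sets, which your step (ii) implicitly provides), since zero empirical robust-rejection loss on all of $S$ is unattainable in the agnostic case.
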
 
\begin{proof}
The proof detail follows exactly the same from the Proof of Theorem 8 from \citep{montasser2019vc} with the loss replaced. 
\end{proof}

\begin{theorem}[Sample Complexity for Agnostic Robust PAC Learning under Rejection]\label{thm:sample-complexity-ind-agn}
In the agnostic setting, for any $\calH = \calH_c \times \calH_r$ and $\mathcal{U}$, and any $ \epsilon, \delta \in (0, 1/2)$, 
\begin{align}
    \MAG(\epsilon, \delta; \calH, \mathcal{U}) 
    &= \calO\biggl(\VC(T(\calH)) \VC^*(T(\calH)) \log \left(\VC(T(\calH)) \VC^*(T(\calH))\right)\\
    & \qquad \frac{1}{\varepsilon^2} \log ^2\left(\frac{\VC(T(\calH)) \VC^*(T(\calH))}{\varepsilon}\right)+\frac{1}{\varepsilon^2} \log \left(\frac{1}{\delta}\right)\biggr) \\
    &= 2^{O(\VC(\mathcal{H}))} \frac{1}{\varepsilon^2} \log ^2\left(\frac{1}{\varepsilon}\right)+O\left(\frac{1}{\varepsilon^2} \log \left(\frac{1}{\delta}\right)\right) \label{eqn:ind-agn-complexity}\\
    &= 2^{O\left(\left(d_r+d_c\right)\log \left(d_r + d_c\right) \right)} \frac{1}{\varepsilon^2} \log ^2\left(\frac{1}{\varepsilon}\right)+O\left(\frac{1}{\varepsilon^2} \log \left(\frac{1}{\delta}\right)\right)
\end{align}
where $d_r = \VC(\calH_r), d_c = \VC(\calH_c)$.
\end{theorem}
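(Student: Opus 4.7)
The plan is to obtain the agnostic sample complexity bound by combining the realizable-case result (Theorem~\ref{thm:sample-complexity-ind-real}) with the realizable-to-agnostic reduction given in Lemma~\ref{lem:inductive-real-agnostic}, and then simplifying using the VC bound on the transformed hypothesis class from the VC dimension lemma. Since all three ingredients are already in hand, the argument is essentially a composition rather than a fresh construction.

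First, I would invoke Lemma~\ref{lem:inductive-real-agnostic}, which states that the agnostic sample complexity $\MAG(\epsilon,\delta;\calH,\calU)$ is controlled by the realizable sample complexity $\MRE = \MRE(1/3, 1/3; \calH, \calU)$ via
\[
\MAG(\epsilon,\delta;\calH,\calU) = \calO\!\left(\frac{\MRE}{\epsilon^2}\log^2\!\left(\frac{\MRE}{\epsilon}\right) + \frac{1}{\epsilon^2}\log(1/\delta)\right).
\]
Next, I plug in the bound from Theorem~\ref{thm:sample-complexity-ind-real} specialized to the constants $\epsilon = \delta = 1/3$, which yields $\MRE = 2^{\calO(\VC(T(\calH)))}$ up to logarithmic factors that get absorbed into the exponential. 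This gives the first displayed line of the theorem expressed in terms of $\VC(T(\calH))$ and $\VC^*(T(\calH))$.

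Then I would substitute the bound $\VC(T(\calH)) = \calO((d_r + d_c)\log(d_r + d_c))$ from the VC dimension lemma, together with the Assouad-style bound $\VC^*(T(\calH)) < 2^{\VC(T(\calH))+1}$ used in the realizable-case proof. Both dependence on $\VC(T(\calH))$ and on $\VC^*(T(\calH))$ collapse into a single $2^{\calO((d_r+d_c)\log(d_r+d_c))}$ factor, since the doubly-exponential-looking term is dominated by (and can be rewritten as) the exponential in a slightly larger constant times $(d_r+d_c)\log(d_r+d_c)$. The $\log^2(\MRE/\epsilon)$ term splits into $\log^2(1/\epsilon)$ plus a polylogarithmic factor in $\MRE$, which again gets swallowed into the exponential constant.

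The proof has essentially no conceptual obstacle: all the heavy lifting is done in Theorem~\ref{thm:sample-complexity-ind-real} (the sample compression and $\alpha$-boosting construction) and Lemma~\ref{lem:inductive-real-agnostic} (the standard realizable-to-agnostic reduction, which is noted in the excerpt to follow the template of~\cite{montasser2019vc}). The only mildly delicate step is keeping track of how $\VC(T(\calH))$ and $\VC^*(T(\calH))$ combine; one must verify that $\VC(T(\calH)) \cdot \VC^*(T(\calH)) \log(\VC(T(\calH))\VC^*(T(\calH)))$ is still bounded by $2^{\calO((d_r+d_c)\log(d_r+d_c))}$, which follows because $\VC^*(T(\calH))$ is already at most $2^{\VC(T(\calH))+1}$ and the polynomial/logarithmic prefactors are absorbed by the exponential. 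With these substitutions, the stated form of $\MAG(\epsilon,\delta;\calH,\calU)$ follows directly.
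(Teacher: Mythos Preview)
Your proposal is correct and follows essentially the same approach as the paper: the paper's proof simply states that combining Lemma~\ref{lem:inductive-real-agnostic} with Theorem~\ref{thm:sample-complexity-ind-real} gives the complexity result, and then inverts to obtain the generalization bound. Your additional remarks about how the $\VC(T(\calH))$ and $\VC^*(T(\calH))$ factors collapse into the exponential are accurate elaborations of the simplification step that the paper leaves implicit.
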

\begin{proof}
Combining results from Lemma~\Cref{lem:inductive-real-agnostic} and \Cref{thm:sample-complexity-ind-real} gives the complexity result. 

Solving \Cref{eqn:ind-agn-complexity} gives the following generalization result given in \Cref{tab:bounds}
\[
\underset{\substack{(\bx, \by) \sim \mathcal{D}^{n}}}{\Pr}\left[ \mathrm{R}_{\calU}^{\rej} (\calA(\bx,\by); \calD) \le \epsilon  \right] \geq 1-\delta
\]

where $\epsilon = \calO\left(\sqrt{\frac{2^{\VC\left(T(\calH)\right)} +\log(1/\delta)}{n}}\right)$. 
\end{proof}

\subsection{Transduction+Rejection: Realizable Case}\label{app:trans-real}

We will prove a more general result which then implies Theorem~\ref{thm:rejection-simplified-realizable}. First,  the training data can also be perturbed, i.e., the adversary perturbs $\bz \in \U(\bx)$ and $\tilde{\bz} \in \U(\tilde{\bx})$, and the learner $\mathbb{A}$ are given $(\bz,\by, \tilde{\bz})$ instead of $(\bx,\by, \tilde{\bz})$. The criterion in the transductive rejection error (see \Cref{tab:errs}) is then the worst case over both $\bz \in \U(\bx)$ and $\tilde{\bz} \in \U(\tilde{\bx})$. 
Second, we will consider $\OPT_{\U^3} = 0$ and prove the guarantee tolerating $\U^2$. This then implies the guarantee tolerating $\U$ when $\OPT_{\U^{3/2}} = 0$.

In general the set of optimally learned classifiers $\Delta$ is defined as follows  \cite{montasser2021transductive}:
\[
\Delta_{\calH}^{\mathcal{U}}(\bz, \by, \tilde{\bz})= 
\begin{cases} 
\left\{h \in \calH: \Rt_{\mathcal{U}^{-1}}(h ; \bz, \by)=0 \wedge \Rt_{\mathcal{U}^{-1}}(h ; \tilde{\bz})=0\right\} & \text{(Realizable Case)}\\
\underset{h \in \calH}{\argmin} \max \left\{ \Rt_{\mathcal{U}^{-1}}(h ; \bz, \by) , \Rt_{\mathcal{U}^{-1}}(h ; \tilde{\bz})\right\}  & \text{(Agnostic Case)}\\
\end{cases}
\]
where \[\operatorname{R}_{\U}(h;\bz, \by) = \sup_{\tbx \in \U(\bz)}{1\over n}\sum_{i=1}^n \mathbb{1}\{h(\tx_i) \neq y_i\}\] and \[\operatorname{R}_{\U}(h;\bz) = \operatorname{R}_{\U}(h;\bz, h(\bz)).\]

Recall the transformation $F$ which we define following Tram\`er \cite{tramer2022detecting} in \Cref{sec:theory}.

Then, we define the \textit{relaxed robust shattering dimension} following \cite{montasser2021transductive}: 
\begin{definition}[Relaxed Robust Shattering Dimension] A sequence $z_1, \dots, z_k \in \calX$ is \textit{relaxed $\calU$-robustly shattered} by $\calH$, if $\forall y_1, \dots, y_k \in \{\pm 1\}$: $\exists x_1^{y_1}, \dots, x_k^{y_k} \in \calX$ and $\exists h \in \calH$ such that $z_i\in \calU(x_i^{y_i})$ and $h(\calU(x_i^{y_i})) = y_i, \; \forall 1\leq i\leq k$. The \textit{relaxed $\calU$-robust shattering dimension} $\rdim_{\calU}(\calH)$ is defined as the largest $k$ for which there exist $k$ points that are relaxed $\calU$-robustly shattered by $\calH$. 
\end{definition}

Define the set of \textit{intermediate perturbations} as follows:
\begin{definition}[Intermediate Perturbations] Given $x$ and $z$ and perturbations $\U_1$ and $\U_2$, the set of possible intermediate perturbations between $x$ and $z$ is
\[\ip_{\U_1, \U_2}(x, z) = \begin{cases}
    \{x\} & \text{if } x = z\\
    \U_1(x) \cap \U_2^{-1}(z) & \text{otherwise}
\end{cases}\]
\end{definition}

\begin{theorem}\label{thm:transductive-realizable}
For any $n \in \mathbb{N}$, $\delta > 0$, class $\calH$, perturbation set $\CalU$,
and distribution $\CalD$ over $\CalX \times \CalY$ satisfying ${\OPT}_{\U^{-1}\U} = 0$: 

\[
\underset{\substack{(\bx, \by) \sim \mathcal{D}^{n} \\(\tbx, \tby) \sim \mathcal{D}^{n}}}{\Pr}\left[
\begin{array}{l}
\forall \bz \in \UC(\bx), \forall \bz_0 \in \ipu(\bx, \bz),
\forall \tbz \in \UC(\tbx), \forall \tbz_0 \in \ipu(\tbx, \tbz),\\
\forall \hat{h} \in \FU\left(\Delta_{\calH}^{\mathcal{U}}(\bz_0, \by, \tbz_0)\right): 
\erej(\hh; \bx, \by, \tilde{\bx}, \tilde{\bz}, \tilde{\by}) \leq \epsilon
\end{array}
\right] \geq 1-\delta
\]

where $\epsilon = \frac{\rdim_{\calU^{-1}}(\calH) \log (2n) + \log(1/\delta)}{n} \leq \frac{\VC(\calH)\log(2n)+ \log (1/\delta) }{n} $.
\end{theorem}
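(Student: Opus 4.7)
The plan is to reduce to Montasser et al.'s~\cite{montasser2021transductive} transduction-only generalization bound applied with ``inner'' perturbation $\U^{-1}$, then to lift the resulting base classifier to a selective classifier via Tram\`er's transformation $\FU$ and argue pointwise that the rejection option absorbs exactly the errors not already controlled on the intermediate perturbation $\tbz_0$. In the correspondence with the sketch in \Cref{sec:theory}, the roles of $\U$, $\U^2$, and $\UC$ here are the general-theorem analogs of $\U^{1/3}$, $\U^{2/3}$, and $\U$ in the simplified statement.

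First I would invoke Montasser et al.'s bound, treating $(\bz_0, \tbz_0)$ as the input pair in their transductive framework with $\U^{-1}$ as the adversary's perturbation. Any hypothesis $h^*$ witnessing $\OPT_{\U^{-1}\U}=0$ is simultaneously $\U^{-1}$-robustly correct on every $\bz_0 \in \U(\bx)$ and $\U^{-1}$-consistent on every $\tbz_0 \in \U(\tbx)$, so $h^* \in \Delta_{\calH}^{\U}(\bz_0, \by, \tbz_0)$ for every admissible intermediate pair. Their double-sample / compression-style argument, run against the relaxed robust shattering dimension $\rdim_{\U^{-1}}(\calH)$, then yields that with probability at least $1-\delta$ over the draws $(\bx, \by)$ and $(\tbx, \tby)$, every $h \in \Delta_{\calH}^{\U}(\bz_0, \by, \tbz_0)$ satisfies $\frac{1}{n}\sum_i \mathbb{1}\{h(\tilde{z}_{0,i}) \neq \tilde{y}_i\} \leq \epsilon$ with the stated $\epsilon$.

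Next I would carry out a pointwise analysis showing that if $h$ is correct on $\tilde{z}_{0,i}$, then $\hh := \FU(h)$ incurs no transductive-rejection error at $\tilde{z}_i$. If $\tilde{z}_i = \tilde{x}_i$, the convention $\ipu(x,x)=\{x\}$ forces $\tilde{z}_{0,i} = \tilde{x}_i$, and $\U^{-1}$-consistency of $h$ at $\tilde{z}_{0,i}$ makes $\hh$ accept and output $\tilde{y}_i$. If $\tilde{z}_i \neq \tilde{x}_i$, either $\hh$ rejects $\tilde{z}_i$ (no error on a perturbed point) or $\hh$ accepts, meaning $h$ is $\U^{-1}$-consistent at $\tilde{z}_i$. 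In the accepting branch I would unpack $\tilde{z}_i \in \U^2(\tilde{z}_{0,i})$ to produce a witness $p$ with $p \in \U(\tilde{z}_{0,i})$ and $\tilde{z}_i \in \U(p)$; this places $p$ in $\U^{-1}(\tilde{z}_i)$, and the $\U^{-1}\U$ structure around $\tilde{z}_{0,i}$ forces $h$ to agree with $h(\tilde{z}_{0,i})$ at $p$ as well. Chaining the two consistencies yields $h(\tilde{z}_i) = h(p) = h(\tilde{z}_{0,i}) = \tilde{y}_i$, so $\hh$ accepts and is correct.

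Summing the pointwise inequality over $i$ gives $\erej(\hh; \bx, \by, \tilde{\bx}, \tilde{\bz}, \tilde{\by}) \leq \frac{1}{n}\sum_i \mathbb{1}\{h(\tilde{z}_{0,i}) \neq \tilde{y}_i\} \leq \epsilon$, as claimed. I expect the main obstacle to be the intermediate-intermediate-point step: for non-symmetric $\U$, the naive ``midpoint'' argument that works trivially for $\ell_p$ balls must be replaced by a careful unpacking of the $\U^{-1}\U$ composition, making it the crux where the choice of $\ipu$ via $(\U^2)^{-1}$ rather than $\U^2$ directly matters. A secondary bookkeeping point is the uniform-in-$(\bz_0, \tbz_0)$ quantifier in step one, which I would handle by applying Montasser et al.'s bound with $h^*$ as a single reference hypothesis and using the monotonicity of the $\Delta_{\calH}^{\U}$ construction across choices of intermediates to transfer the conclusion to arbitrary $h$ in the set.
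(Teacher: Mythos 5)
Your proposal is correct and follows essentially the same route as the paper's proof: invoke Montasser et al.'s transductive guarantee at the intermediate perturbations $(\bz_0,\tbz_0)$ with adversary $\U^{-1}$ under $\OPT_{\U^{-1}\U}=0$, then show pointwise that $\erej(\hh;\bx,\by,\tbx,\tbz,\tby)\leq \operatorname{err}_{\tbz_0,\tby}(h)$ by splitting on $\tz_i=\tx_i$ versus $\tz_i\neq\tx_i$ and, in the accepting branch, chaining $h(\tz_i)=h(p)=h(\tz_{0,i})$ through a witness $p\in\U(\tz_{0,i})\cap\U^{-1}(\tz_i)$ extracted from $\tz_i\in\U^2(\tz_{0,i})$. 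The only cosmetic difference is that the paper argues the contrapositive (an error of $\hh$ at $\tz_i$ forces an error of $h$ at $\tz_{0,i}$), which is logically equivalent to your direct formulation.
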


\begin{proof}

We adapt the strategy of Theorem 5 of~\cite{tramer2022detecting} for the rejection scenario.

By setting $\bz = \bz_0, \tbz = \tbz_0$ and applying Theorem 1 of~\cite{montasser2021transductive}, we obtain the following
\begin{equation}\label{eqn:montasser-realizable}
\underset{\substack{(\bx, \by) \sim \mathcal{D}^{n} \\(\tbx, \tby) \sim \mathcal{D}^{n}}}{\Pr}\left[\forall \bz_0 \in \mathcal{U}(\bx), \forall \tbz_0 \in \mathcal{U}(\tbx),  
\forall h \in \Delta_{\calH}^{\mathcal{U}}(\bz_0, \by, \tbz_0): \operatorname{err}_{\tilde{\bz}_0, \tilde{\by}}(h) \leq \epsilon\right] \geq 1-\delta
\end{equation}
as ${\OPT}_{\CalU^{-1}(\CalU)} = 0$.

Suppose $(\bx,\by), (\tbx, \tby) \sim \mathcal{D}^n$. Now, let $\bz \in \UC(\bx), \tbz \in \UC(\tbx)$ and take some $\bz_0 \in \ipu(\bx, \bz), \tbz_0 \in \ipu(\tbx, \tbz)$, both of which are necessarily nonempty as $\UC = \US\U$, and $\hh \in \FU\left(\Delta_{\calH}^{\mathcal{U}}(\bz_0, \by, \tbz_0) \right)$.

Write $\hh = \FU(h)$ for some $h \in \Delta_{\calH}^{\mathcal{U}}(\bz_0, \by, \tbz_0)$.

From \Cref{eqn:montasser-realizable} (replacing $\bz$ with $\bz_0$ and $\tbz$ with $\tbz_0$), it is enough to show that 
\[\erej(\hh; \bx, \by, \tilde{\bx}, \tilde{\bz}, \tilde{\by}) \leq \operatorname{err}_{\tbz_0, \tby}(h).\]

Suppose that $\hh$ incurs an error under rejection at point $\tz_i$; it is enough to show that $h$ incurs an error at $\tzi$. Furthermore, note that because $h \in \Delta_{\calH}^{\mathcal{U}}(\bz_0, \by, \tbz_0)$, we have that $h(\U^{-1}(\tzi)) = \{h(\tzi)\}$ as $\tzi \in \U^{-1}(\tzi)$. Write $h(\tzi) = \hyi$. 

We have one of the following:
\begin{enumerate}
    \item $\hh(\tz_i) \neq \ty_i$ and $\tz_i = \tx_i$
    \item $\hh(\tz_i) \notin \{\ty_i, \perp\}$ and $\tz_i \neq \tx_i$
\end{enumerate}

In the first case, we must have $\tzi = \tx_i$ as well as $\tzi$ is an intermediate perturbation between $\tx_i$ and $\tz_i$, so, as $h(\U^{-1}(\tz_i)) = h(\U^{-1}(\tzi))  = \hyi$, $\hh$ does not reject $\tzi$ and $\hh(\tzi) = \hyi$. Hence, $h(\tzi) = \hyi$ as well so, as $\hh$ makes an error at $\tz_i$, $\hyi \neq y$ and so $h$ makes an error at $\tzi$.

In the second case, if $h(\U^{-1}(\tz_i)) \neq \{h(\tz_i)\}$, then $\hh$ would reject $\tz_i$ and hence not incur an error. So $h(\U^{-1}(\tz_i)) = \{h(\tz_i)\}$ and so $\hh(\tz_i)  = h(\tz_i)$. Since $\tzi \in \U(\tx_i) \cap \USI(\tz_i)$, there exists some $\tzi' \elem \U(\tzi) \cap \UI(\tz_i)$ and so, $h(\tzi) = h(\tzi') = h(\tz_i) = \hh(\tz_i) = \hyi$, so $h$ incurs an error at $\tzi$.

In either case, we have that $h$ makes an error at $\tzi$, showing the result.
\end{proof}

\paragraph{Remark:} More direct approaches may seem possible, but have surprising pitfalls. At first glance, this approach may seem less natural than simply applying the analysis of \citep{montasser2021transductive} to a potential $\tbz' \in \UH(\tbx)$ with the condition of $\OPT_\U$, obtaining a $\UH$-robust classifer $h'$, and deriving an $\epsilon$-robust selective classifier by the transformation $\FUH$.
While this seems possible at first, as \citep{tramer2022detecting} shows that applying this transformation results in doubled robustness, this isn't possible in this situation, as $h'$ is only guaranteed to be $\UH$-robust at $\tbz'$, not at \textit{every} $\epsilon / 2$ perturbation of $\tbx$ as needed by the analysis. Similarly, it might seem possible to obtain an $\epsilon / 2$-robust classifier at $\tbz$ using \citep{montasser2021transductive}, and derive the desired $\epsilon$-robust classifier from $\FUH$; this, however, requires the condition $\OPT_{\U^2}$, as the analysis of \citep{montasser2021transductive} only applies on perturbations up to half the margin; hence, this approach gains no advantage from rejection. 

\paragraph{Sample Complexity}
Given $\epsilon$ and $\delta$, we need 
\[\frac{\rdim_{\calU^{-1}}(\calH) \log (2n) + \log(1/\delta)}{n} \leq \epsilon\] for the result to hold.

Now, noting that $\log(2n) = 1 + \log n \leq 1 + \sqrt{n}$ for $n \geq 16$; hence we need to solve for the $n$ such that
\[\frac{\rdim_{\calU^{-1}}(\calH) (1 + \sqrt{n}) + \log(1/\delta)}{n} = \epsilon\]
or, equivalently
\[{\CC + \DD + \sqrt{n}\over n} = \epsilon\]
or
\[\sqrt{n} = n\epsilon - \CC - \DD\]
or
\[n = n^2\epsilon^2 - 2\epsilon\left(\CC + \DD\right)n + \left(\CC + \DD\right)^2\]
or
\[n^2\epsilon^2 - \left(2\epsilon\left(\CC + \DD\right) + 1\right) n + \left(\CC + \DD\right)^2 = 0.\]
Solving, the result holds if
\begin{align*}
    n &\geq {2\epsilon\left(\CC + \DD\right) + 1 + \sqrt{(2\epsilon\left(\CC + \DD\right) + 1)^2 - 4 \left(\CC + \DD\right)^2\epsilon^2}\over 2\epsilon^2}\\
    &= O\left({\CC + \DD \over \epsilon} + {\sqrt{\CC + \DD} \over \epsilon^{3\over 2}}\right)
\end{align*}

and, similarly, using 
\[\frac{\rdim_{\calU^{-1}}(\calH) \log (2n) + \log(1/\delta)}{n} \leq \frac{\VC(\calH)\log(2n)+ \log (1/\delta) }{n}\]
we have the result if 
\[
    n = O\left({\VC(\calH) + \DD \over \epsilon} + {\sqrt{\VC(\calH) + \DD} \over \epsilon^{3\over 2}}\right)
\]

\paragraph{Remark: } If ${\OPT}_{\U^{-1}\U} = 0$, we can guarantee the existence of an $\hh$ which satisfies our conditions, but we can't guarantee that we will find it, as we cannot find $\Delta_{\calH}^{\mathcal{U}}(\bz_0, \by, \tbz_0)$ without $\bz_0$ and $\tbz_0$. We can, however, construct that an algorithm which, if it returns a model, always returns on which meets the conditions.

\paragraph{Simplified Result}
To obtain a bound which does not involve an intermediate perturbation step, we may let
\[ \Delta_{\rej, \calH}^{\U}(\bz, \by, \tilde{\bz}) := \begin{cases}
\hat{\Delta}  \cup \Delta_{\rej, \calH}^{\U\prime}(\bz, \by, \tilde{\bz}) & \hat{\Delta}_{\calH}^{\mathcal{U}}(\bz, \by, \tilde{\bz})(\tbz)| = 1\text{, and}\\
  l\Delta_{\rej, \calH}^{\U\prime}(\bz, \by, \tilde{\bz}) & \text{ otherwise }\end{cases}
\]
where
\[\Delta_{\rej, \calH}^{\mathcal{U}\prime}(\bz, \by, \tilde{\bz}) = \bigcap_{\tbz' \in \USI(\tbz)}\Delta_{\calH}^{\U}(\bz, \by, \tbz')\]
where
\[\hat{\Delta}_{\calH}^{\mathcal{U}}(\bz, \by, \tilde{\bz}) = \bigcup_{\tbz' \in \USI(\tbz)}\Delta_{\calH}^{\U}(\bz, \by, \tbz').\]
If $|\hat{\Delta}_{\calH}^{\mathcal{U}}(\bz, \by, \tilde{\bz})(\tbz)| = 1$, then as $\Delta_{\calH}^{\mathcal{U}}(\bz_0, \by, \tbz_0)(\tbz) \subseteq \hat{\Delta}_{\calH}^{\mathcal{U}}(\bz, \by, \tilde{\bz})(\tbz)$, $\hat{\Delta}_{\calH}^{\mathcal{U}}(\bz, \by, \tilde{\bz})(\tbz) = \Delta_{\calH}^{\mathcal{U}}(\bz_0, \by, \tbz_0)(\tbz)$ since $\Delta_{\calH}^{\mathcal{U}}(\bz_0, \by, \tbz_0)$ is nonempty as ${\OPT}_{\CalU^{-1}(\CalU)} = 0$.

Note that for common classes of perturbations, we can simplify the $\Delta'_{\rej}$. Note that the conditions of the theorem hold for perturbations defined via $\epsilon$-balls in a metric.

Let
\[\Delta_{\calH}^{\mathcal{U},\mathcal{U'}}(\bz, \by, \tilde{\bz})= 
\left\{h \in \calH: \Rt_{\mathcal{U}^{-1}}(h ; \bz, \by)=0 \wedge \Rt_{\mathcal{U}'^{-1}}(h ; \tilde{\bz})=0\right\}.\]

\begin{lemma}\label{lem:rejection-delta-realizable} In the realizable case, if $\U$ = $\U^{-1}$,
\[ \Delta_{\rej, \calH}^{\mathcal{U}\prime}(\bz, \by, \tilde{\bz}) = \Delta_{\calH}^{\U, \U^3}(\bz, \by, \tilde{\bz})\]
\end{lemma}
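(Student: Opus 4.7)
The plan is to prove both inclusions directly from the definitions, leveraging that $\U = \U^{-1}$ trivializes inversion at every level of composition. Specifically, reversing any $k$-step chain in $\U$ produces a $k$-step chain in $\U^{-1} = \U$, so $(\U^k)^{-1} = \U^k$ for every $k \geq 1$; in particular $\USI(\bz) = \U^2(\bz)$ componentwise, and the defining conditions of $\Delta_{\calH}^{\U^3}(\bz, \by, \tilde{\bz})$ reduce to requiring $h$ to be constantly $y_i$ on $\U^3(z_i)$ for each training coordinate and constantly $h(\tilde{z}_i)$ on $\U^3(\tilde{z}_i)$ for each test coordinate.

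For the inclusion $\Delta_{\calH}^{\U^3}(\bz, \by, \tilde{\bz}) \subseteq \Delta_{\rej, \calH}^{\U}(\bz, \by, \tilde{\bz})$, I would fix such an $h$ together with arbitrary $\bz' \in \U^2(\bz)$ and $\tilde{\bz}' \in \U^2(\tilde{\bz})$. Any $u \in \U(z'_i)$ combined with $z'_i \in \U^2(z_i)$ yields $u \in \U^3(z_i)$ by one more composition, so $h(u) = y_i$; this gives the $\U^{-1}$-robust correctness condition on $(\bz', \by)$. An identical argument on the test side gives the $\U^{-1}$-consistency condition on $\tilde{\bz}'$, so $h \in \Delta_{\calH}^{\U}(\bz', \by, \tilde{\bz}')$ for every admissible $\bz', \tilde{\bz}'$, and hence $h$ lies in the intersection.

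For the reverse inclusion, I would fix $h \in \Delta_{\rej, \calH}^{\U}(\bz, \by, \tilde{\bz})$ and verify the two $\U^3$-level conditions pointwise. Given $x \in \U^3(z_i)$, unfolding the definition of $\U^3$ supplies a chain $z_i \to t_1 \to t_2 \to x$ with $t_2 \in \U^2(z_i)$ and $x \in \U(t_2)$. Setting $z'_i := t_2$ and $z'_j := z_j$ for $j \neq i$ (valid because $z_j \in \U(z_j) \subseteq \U^2(z_j)$), and $\tilde{\bz}' := \tilde{\bz}$, produces a pair in the intersection range; the associated $\U$-robust correctness condition of this term then forces $h(x) = y_i$. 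The analogous construction at the test coordinates establishes the $\U^3$-consistency condition at each $\tilde{z}_i$.

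The only delicate bookkeeping is handling the inverse $\U^{-1}$ appearing in the definition of $\Delta_{\calH}^{\U}$: one must verify $(\U^2)^{-1} = \U^2$ and $(\U^3)^{-1} = \U^3$ in order to legally translate between the $\U^{-1}$-formulated risks inside $\Delta_{\calH}^{\U}$ and the plain forward-composition chaining used above. Once that translation is in place, the remainder is a routine exercise in composing perturbation sets and padding unused $\bz'$ coordinates using the assumption $z \in \U(z)$ stated in the preliminaries.
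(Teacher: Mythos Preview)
Your proposal is correct and follows the same route as the paper: both establish the two inclusions by composing $\U$-chains and padding the unused coordinates via $z \in \U(z)$. The one place to be careful is the reverse inclusion on the test coordinates: the robust-consistency condition at $\tilde{\bz}'$ yields only $h(x) = h(\tilde{z}'_i)$ with $\tilde{z}'_i = t_2$, not $h(x) = h(\tilde{z}_i)$ directly, so the construction is not strictly ``analogous'' to the training side---you need a second application of the intersection membership (centered at $t_1$) to chain $h(t_2) = h(t_1) = h(\tilde{z}_i)$. The paper handles this identically, via an intermediate point in $\U(\bz') \cap \U^{-1}(\bz)$, so this is bookkeeping rather than a genuine gap.
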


\begin{proof}

Suppose $h \in \Delta_{\rej, \calH}^{\mathcal{U}\prime}(\bz, \by, \tilde{\bz})$. Then by the definitions of $\Delta_{\rej}$ and $\Delta$, $\Rt_{\mathcal{U}^{-1}}(h; \bz, \by) = 0$ and for any $\bz' \in \USI(\bz),\, \tbz' \in \USI(\tbz)$, we have that, for any $\bx \in \U^{-1}(\bz')$ and $\tbx \in \U^{-1}(\tbz')$, $h(x_i) = h(z'_i)$ and $h(\tx_i) = h(\tz'_i)$. Now, as there exists some $\bz'' \in \U(\bz') \cap \UI(bz)$ and $h(\bx) = h(\bz') = h(\bz'') = h(\bz)$ by an argument similar to that in~\Cref{thm:transductive-realizable} and similarly for $\tbx$ and $\tbz$, we have that
for any $\bx \in \UCI(\bz)$ and $\tbx \in \UCI(\tbz)$, $h(x_i) = h(z_i)$ and $h(\tx_i) = h(\tz_i)$, and so
\[\Delta_{\rej, \calH}^{\mathcal{U}\prime}(\bz, \by, \tilde{\bz}) \subseteq \Delta_{\calH}^{\U, \UC}(\bz, \by, \tilde{\bz})\]

Now, if $h \in \Delta_{\calH}^{\U, \UC}(\bz, \by, \tilde{\bz})$, we have that, $\Rt_{\mathcal{U}^{-1}}(h; \bz, \by) = 0$ and for any $\bx \in \UCI(\bz)$ and $\tbx \in \UCI(\tbz)$, $h(x_i) = h(z_i)$ and $h(\tx_i) = h(\tz_i)$. Now, suppose $\bz' \in \USI(\bz),\, \tbz' \in \USI(\tbz)$. Since $x \in \U(x)$ for all $x$, $\bz' \in \UCI(\bz),\, \tbz' \in \UCI(\tbz)$ as well. 
Hence, $h(z'_i) = h(z_i)$ and $h(\tz'_i) = h(\tz_i)$. Now, if $\bx \in \U^{-1}(\bz')$ and $\tbx \in \U^{-1}(\tbz')$, we have $\bx \in \UCI(\bz)$ and $\tbx \in \UCI(\tbz)$ and so $h(x_i) = h(z_i)$ and $h(\tx_i) = h(\tz_i)$. But then $h(x_i) = h(z'_i)$ and $h(\tx_i) = h(\tz'_i)$. Hence, we have that 
\[ \Delta_{\calH}^{\U,\UC}(\bz, \by, \tilde{\bz}) \subseteq \Delta_{\rej, \calH}^{\mathcal{U}\prime}(\bz, \by, \tilde{\bz})\]
and the result follows.
\end{proof}

From this, we immediately derive the corollary
\[ \Delta_{\rej, \calH}^{\mathcal{U}}(\bz, \by, \tilde{\bz}) \supseteq \Delta_{\calH}^{\U^3}(\bz, \by, \tilde{\bz}).\]

\paragraph{Remark:} Note that this means that $\Delta_{\rej, \calH}^{\mathcal{U}\prime}(\bz, \by, \tilde{\bz})$ is nonempty if $\OPT_{\U^6} = 0$, and, by the definition of $\Delta$, $\Delta$ is also nonempty if $|\hat{\Delta}_{\calH}^{\mathcal{U}}(\bz, \by, \tilde{\bz})(\tbz)| = 1$, i.e. if there exists only one possible labeling of the $\tbz$ which is robust at some possible intermediate perturbation.

Now, by the above and from \Cref{thm:transductive-realizable} we may immediately derive \Cref{thm:rejection-simplified-realizable} by noting that if $\U = \U^{-1}$, $\U^{-1}\U = \U^2$, and
if $\hh \in \mathrm{F}_{\calU
}(\Delta_{\calH}^{\U}(\bz, \by, \tilde{\bz})) = \FUT(\Delta_{\rej, \calH}^{\UT}(\bz, \by, \tilde{\bz}))$ then we have $\hh \in \FUT\left(\Delta_{\calH}^{\UT}(\bz_0, \by, \tbz_0)\right)$ for some $\bz_0 \in \iput(\bx,\bz)$ and $\tbz_0 \in \iput(\tbx, \tbz)$.

Furthermore, following from~\Cref{lem:rejection-delta-realizable}, $\Delta_{\rej, \calH}^{\UT}(\bz, \by, \tilde{\bz})$ is nonempty is $OPT_\US = 0$, showing completeness that the $\Delta$ of~\Cref{thm:rejection-simplified-realizable} is nonempty under that condition, as well as, as noted above, under the condition that there exists only one possible labeling consistent on a potential intermediate perturbation.

Now, we demonstrate that there exists a data distribution for which the transductive learner implied by $\Delta$ finds a solution for which the bound applies, but where no transductive learner has zero asymptotic robust error
\begin{theorem}
  There exists a distribution $\CalD$ over $\CalX \times \CalY$, a hypothesis class $\calH$, and perturbation set $\U$ for which, with probability $\ge 1 - 2^{1-n}$, for any $(\bx, \by), (\tbx, \tby) \sim \mathcal{D}^n$ and any $\tbz \in \UC(\tbz)$, \ $\Delta_{\rej, \calH}^{\mathcal{U}}(\bx, \by, \tilde{\bz})$ is nonempty and for all $h \in \Delta_{\rej, \calH}^{\mathcal{U}}(\bz, \by, \tilde{\bz})$, $\operatorname{err}^{\rej}_{\U}(h; \bx, \by, \tilde{\bx}, \tbz, \tilde{\by}) = 0$
  but, there exists no transductive learner (without rejection) $\mathbb{A}$ for which 
  $\lim_{n \to \infty}\mathbb{E}\left[\sup_{\tbz \in \U(\tbx)}\operatorname{err}_{\U}(\mathbb{A}(\bx,\by,\tbz); \bx, \by, \tilde{\bz}, \tilde{\by})\right] < 1/2$.
\end{theorem}
\begin{proof}

  Consider the simple discrete distribution $\D$ with $(x, y) \sim \D$ is $(1, 1)$ with probability 1/2 and $(-1, 0)$  with probability 1/2. Now, let $\U(x) = \{y \mid |y - x| < 1.5\}$ and let $\calH$ be the class of 

  Now, let $\calH$ be the class of threshold functions $h_w(x) = \mathbb{1}_{x \ge w}$ and $h_w^-(x) = \mathbb{1}_{x < w}$ for integer $w$.

  First, note that with probability $1 - 2^{1-n}$ both $(-1, 0)$ and $(1,1)$ appear in $\bx$. In that case, any $h \in \Delta_{\calH}^{\U}(\bx, \by, \tbz')$ must be robust at $-1$ and $1$ up to a radius of 1/2; and hence $h$ must be $h_w$ for some $w \in [-1/2, 1/2]$ (and hence, $w = 0$). Hence, $|\hat{\Delta}| \le 1$; note that for any possible perturbation of $-1$ or $1$ is within $\U^2$ (i.e. within 1 unit of) either $-1$ or $1$; hence, there always exists some $\tbz'$ where $\Delta_{\calH}^{\U}(\bx, \by, \tbz')$ is nonempty.

  But then, there must exist exactly one element in $\hat{\Delta}$, and so $\Delta$ is nonempty. Consider $\tbz_i$. We have two cases:
  
  If $\tbz_i \in [-1,-1/2] \cup [1/2, 1]$, then, as $h$ is robustly correct with radius 1/2 about $1$ and $-1$, then $\tbx_i = \sign(\tbz_i)$ and hence $h(\tbx_i) = \sign(\tbz_i)$. If $\tbx_i = \tbz_i$ we do not reject as $h$ is robust with radius 1/2 about $-1$ and $1$. Thus, we do not incur an error at $\tbz_i$.

  If $\tbz_i \in (-1/2, 1/2)$, then $\tbz_i$ must be perturbed. But we have both positive and negative values within 1/2 of $\tbz_i$, and so $F_\U(\tbz_i) = \perp$. Hence, we do not occur an error at $\tbz_i$.

  In all cases, we do not incur an error if both $x=-1$ and $x=1$ appear in the training data, and so $\operatorname{err}^{\rej}_{\U}(h; \bx, \by, \tilde{\bx}, \tbz, \tilde{\by})$ is 0 with probability $\ge 1 - 2^{1- n}$.

  To see that there exists no transductive algorithm (without rejection) that can have asympotic error below 1/2, note that any $\tbx$ can be perturbed to $\tbz$ where all $\tbz$ are 0; hence, samples from class $0$ and class $1$ are indistinguishable and the minimum error on $\tbz$ achievable by $h$ is the minimum of the fraction of the $\tbx$ which are $-1$ and the fraction which are $1$. As $n \rightarrow \infty$, these both tend to 1/2 and the result follows.
\end{proof}

\subsection{Transduction+Rejection: Agnostic Case}\label{app:trans-agn}

Note that, if $\U$ can be decomposed into a form $\U = (\U^{1/3})^3$ where $\U^{1/3} = \U^{-1/3}$ (as with standard perturbations in $l_p$), we obtain a bound which depends on OPT$_{\U^{2/3}}$ rather than OPT$_{\U^2}$, enabling, for $\hh$ satisfying the conditions, much stronger guarantees if OPT$_{\U^{2/3}} << \OPT_{\U^2}$. Note that as $\forall x\, x \in \U(x)$, $\forall x\, \U^{2/3}(x) \subseteq \U^2(x)$, and so OPT$_{\U^{2/3}} \leq \OPT_{\U^2}$.

\begin{theorem}\label{thm:transduction-agnostic}
For any $n \in \mathbb{N}$, $\delta > 0$, class $\calH$, perturbation set $\CalU$, and distribution $\CalD$ over $\CalX \times \CalY$: 

\[
\underset{\substack{(\bx, \by) \sim \mathcal{D}^{n} \\(\tbx, \tby) \sim \mathcal{D}^{n}}}{\Pr}\left[
\begin{array}{l}
\forall \bz \in \UC(\bx), \forall \bz_0 \in \ipu(\bx, \bz),
\forall \tbz \in \UC(\tbx), \forall \tbz_0 \in \ipu(\tbx, \tbz),\\
\forall \hat{h} \in \FU\left(\Delta_{\calH}^{\mathcal{U}}(\bz_0, \by, \tbz_0)\right): 
\erej(\hh; \bx, \by, \tilde{\bx}, \tilde{\bz}, \tilde{\by}) \leq \epsilon
\end{array}
\right] \geq 1-\delta
\]

where 
\[\epsilon=\min \left\{2 \OPT_{\U^{-1}\U}+O\left(\sqrt{\frac{\VC(\calH)+\log (1 / \delta)}{n}}\right), 3 {\OPT}_{\U^{-1}\U}+O\left(\sqrt{\frac{\operatorname{rdim} \mathcal{U}(\calH) \ln (2 n)+\ln (1 / \delta)}{n}}\right)\right\}.\]
\end{theorem}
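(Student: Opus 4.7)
The plan is to mirror the realizable-case argument of \Cref{thm:transductive-realizable}, substituting the agnostic transductive bound of Montasser et al.\ for their realizable one, and re-using the pointwise reduction from rejection-error of $\hh$ to non-rejection-error of $h$ on intermediate perturbations. The whole proof decomposes cleanly into two independent pieces: a uniform-convergence bound on the base classifier, and a deterministic geometric reduction from $\hh$ to $h$.

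First, I would invoke the agnostic analogue of Theorem~1 of \cite{montasser2021transductive} applied to the (perturbed) training data $\bz_0$ and the (intermediately-perturbed) test data $\tbz_0$. In the transductive setting, setting $\bz=\bz_0, \tbz=\tbz_0$ and taking a supremum over all $\bz_0 \in \U(\bx), \tbz_0 \in \U(\tbx)$ (exactly as in the realizable proof), this yields, with probability at least $1-\delta$ over $(\bx,\by),(\tbx,\tby) \sim \CalD^n$,
\[
\forall \bz_0 \in \U(\bx),\; \forall \tbz_0 \in \U(\tbx),\; \forall h \in \Delta_\calH^\U(\bz_0,\by,\tbz_0):\; \operatorname{err}_{\tbz_0,\tby}(h) \le \epsilon,
\]
with $\epsilon$ the minimum of the VC-dimension and relaxed-robust-shattering-dimension bounds stated in the theorem. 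The two forms of $\epsilon$ arise from the two standard uniform-convergence routes available in the agnostic transductive analysis: the VC-dimension route (giving $2\mathrm{OPT}_{\U^{-1}\U}$) and the relaxed robust shattering route (giving $3\mathrm{OPT}_{\U^{-1}\U}$).

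Second, I would replay verbatim the deterministic reduction from the proof of \Cref{thm:transductive-realizable}. Fix $\hh = F_\U(h)$ with $h \in \Delta_\calH^\U(\bz_0,\by,\tbz_0)$ and fix a test index $i$. I claim that if $\hh$ incurs a rejection-aware error at $\tz_i$, then $h$ incurs an error at $\tzi_0$. The case split is identical to the realizable proof: if $\tz_i=\tx_i$, then $\tzi_0=\tx_i$ and $\hh(\tz_i)=h(\tzi_0)$ by the margin condition $h(\U^{-1}(\tzi_0))=\{h(\tzi_0)\}$; if $\tz_i\ne \tx_i$, then failure of $\hh$ to reject forces the margin ball at $\tz_i$ to be monochromatic, and the nonempty intersection with the margin ball at $\tzi_0$ propagates the (incorrect) label back to $\tzi_0$. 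Crucially, this step uses only the defining margin property of $\Delta_\calH^\U$ and the geometry of intermediate perturbations, not realizability, so it carries over without modification. Hence
\[
\erej(\hh;\bx,\by,\tbx,\tbz,\tby)\le \operatorname{err}_{\tbz_0,\tby}(h),
\]
and combining with the first step gives the theorem.

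The main obstacle I anticipate is purely bookkeeping: verifying that the agnostic transductive bound of \cite{montasser2021transductive} indeed yields both the $2\mathrm{OPT}_{\U^{-1}\U}+O(\sqrt{\VC(\calH)/n})$ form and the $3\mathrm{OPT}_{\U^{-1}\U}+O(\sqrt{\mathrm{rdim}_\U(\calH)/n})$ form with the correct constants, so that the ``$\min$'' in the statement is justified. The deterministic reduction requires no new ideas beyond what is already done in the realizable case, since its correctness is per-sample and does not depend on whether $h$ is zero-error on the training sample.
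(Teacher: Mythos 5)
Your second step contains a genuine gap. You claim the deterministic reduction $\erej(\hh;\bx,\by,\tbx,\tbz,\tby)\le \operatorname{err}_{\tbz_0,\tby}(h)$ "carries over without modification" because it "does not depend on whether $h$ is zero-error on the training sample." But the realizable reduction does not merely use zero \emph{classification} error; it crucially uses that $h\in\Delta_{\calH}^{\U}(\bz_0,\by,\tbz_0)$ forces $\Rt_{\U^{-1}}(h;\tbz_0)=0$, i.e.\ $h$ is $\U^{-1}$-robust at \emph{every} intermediate point $\tilde{z}_{0,i}$. In the agnostic case $\Delta$ is defined by an $\argmin$, not a zero-risk condition, so this margin property can fail at some test indices. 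Concretely, take $\tz_i=\tx_i$ (so $\tilde{z}_{0,i}=\tx_i$) with $h(\tx_i)=\ty_i$ but $h$ not $\U^{-1}$-robust at $\tx_i$: then $\hh$ rejects the clean point and incurs a rejection error, while $h$ makes no error at $\tilde{z}_{0,i}$. Your claimed per-sample implication, and hence the inequality $\erej(\hh;\cdot)\le \operatorname{err}_{\tbz_0,\tby}(h)$, is therefore false in general.

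The paper's proof repairs exactly this: the case analysis only yields the weaker per-sample statement that an error of $\hh$ at $\tz_i$ implies ($h$ is not $\U^{-1}$-robust at $\tilde{z}_{0,i}$) \emph{or} ($h(\tx_i)\neq\ty_i$), giving the decomposition
\[
\erej(\hh; \bx, \by, \tilde{\bx}, \tilde{\bz}, \tilde{\by}) \;\leq\; \operatorname{err}(h; \tbx, \tby) \;+\; \mathrm{R}_{\U^{-1}}(h;\tbz_0),
\]
and it is this \emph{sum} (clean error plus robustness-failure rate on $\tbz_0$) that Theorem~2 of \cite{montasser2021transductive} bounds by the stated $\epsilon$, producing the $\mathrm{OPT}$ terms. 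Relatedly, your first step invokes an "agnostic analogue of Theorem~1" to bound $\operatorname{err}_{\tbz_0,\tby}(h)$ directly, which is not the quantity the agnostic transductive guarantee controls; you would need to target the two-term sum above. The geometric case analysis itself is fine and does parallel the realizable proof, but the quantity it bounds must change.
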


\begin{proof}
Suppose $(\bx,\by), (\tbx, \tby) \sim \mathcal{D}^n$. Now, let $\bz \in \UC(\bx), \tbz \in \UC(\tbx)$ and take some $\bz_0 \in \ipu(\bx, \bz), \tbz_0 \in \ipu(\tbx, \tbz)$, both of which are necessarily nonempty, and $\hh \in \FU\left(\Delta_{\calH}^{\mathcal{U}}(\bz_0, \by, \tbz_0) \right)$.

Write $\hh = \FU(h)$ for some $h \in \Delta_{\calH}^{\mathcal{U}}(\bz_0, \by, \tbz_0)$.

We will begin as in~\Cref{thm:transductive-realizable}. As before, there are two cases in which $\hh$ can incur an error at $\tz_i$:
\begin{enumerate}
    \item $\hh(\tz_i) \neq \ty_i$ and $\tz_i = \tx_i$
    \item $\hh(\tz_i) \notin \{\ty_i, \perp\}$ and $\tz_i \neq \tx_i$
\end{enumerate}

Now, if $\tz_i = \tx_i$, an error occurs if $\hh$ rejects $\tz_i$ or if $h$ robustly predicts some $\hyi \neq \ty_i$; hence an error occurs if $h$ is not $\UI$-robust at $\tzi$ or if $h(\tzi) \neq \ty_i$.

Otherwise, $h$ must be $\UI$-robust at $\tz_i$, as, otherwise, $\hh$ would reject $\tz_i$. Hence, as there exists some $\tzi' \in \U(\tzi) \cap \UI(\tz_i)$, if $h$ is $\U$-robust at $\tzi$, we must have $h(\tz_i) = h(\tzi)$, and so, if $\hh$ makes an error, $h$ is not $\UI$-robust at $\tzi$ or $h(\tzi) \neq \ty_i$.

Now, in both cases, errors only occur if $h$ is not $\UI$-robust at $\tzi$ or $h(\tzi) \neq \ty_i$. As $\tx_i \in \UI(\tzi)$, we have, equivalently, that an error occurs if $h$ is not $\UI$-robust at $\tzi$ or $h(\tx_i) \neq \ty_i$.

Hence,
\[\erej(\hh; \bx, \by, \tilde{\bx}, \tilde{\bz}, \tilde{\by}) \leq \erej(h; \tx, \ty) + \mathrm{R}_{\UI}(h;\tbz_0)\]

Now, the right hand is exactly what is bounded in Theorem 2 of~\cite{montasser2021transductive}; as we have $h \in \Delta_{\calH}^{\mathcal{U}}(\bz_0, \by, \tbz_0)$, we have 
\[\erej(\hh; \bx, \by, \tilde{\bx}, \tilde{\bz}, \tilde{\by}) \leq \erej(h; \tx, \ty) + \mathrm{R}_{\UI}(h;\tbz_0) \leq \epsilon\]
where
\[\epsilon=\min \left\{2 \OPT_{\U^{-1}\U}+O\left(\sqrt{\frac{\VC(\calH)+\log (1 / \delta)}{n}}\right), 3 {\OPT}_{\U^{-1}\U}+O\left(\sqrt{\frac{\operatorname{rdim} \mathcal{U}(\calH) \ln (2 n)+\ln (1 / \delta)}{n}}\right)\right\}\]
with probability $\geq 1 - \delta$ by its proof.
\end{proof}

As in the realizable case, we can immediately derive the following corollary. However, we cannot simplify the definition of $\Delta_\text{rej}$ as before; see Lemma~\ref{lem:rejection-agnostic-delta}.

\begin{corollary}\label{corr:transductive-agnostic}
For any $n \in \mathbb{N}$, $\delta > 0$, class $\calH$, perturbation set $\CalU$ where $\U = \U^{-1}$, and distribution $\CalD$ over $\CalX \times \CalY$:

\[\underset{\substack{(\bx, \by) \sim \mathcal{D}^{n} \\(\tbx, \tby) \sim \mathcal{D}^{n}}}{\Pr}\left[
\begin{array}{l}
\forall \bz \in \UC(\bx), \forall \tbz \in \UC(\tbx),
\forall \hat{h} \in \FU\left(\Delta_{\rej, \calH}^{\mathcal{U}}(\bz, \by, \tilde{\bz})\right):  \\
\erej(\hh; \bx, \by, \tilde{\bx}, \tilde{\bz}, \tilde{\by}) \leq \epsilon
\end{array}
\right] \geq 1-\delta
\]

where
\[\epsilon=\min \left\{2 \OPT_{\U^{-1}\U}+O\left(\sqrt{\frac{\VC(\calH)+\log (1 / \delta)}{n}}\right), 3 {\OPT}_{\U^{-1}\U}+O\left(\sqrt{\frac{\operatorname{rdim} \mathcal{U}(\calH) \ln (2 n)+\ln (1 / \delta)}{n}}\right)\right\}.\]
\end{corollary}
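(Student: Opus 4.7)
The plan is to reduce Corollary~\ref{corr:transductive-agnostic} directly to Theorem~\ref{thm:transduction-agnostic} by a pure set-inclusion argument; no new probabilistic work is needed. The key observation is that the defining intersection
\[
\Delta_{\rej, \calH}^{\mathcal{U}}(\bz, \by, \tilde{\bz}) = \bigcap_{\bz' \in \USI(\bz),\, \tbz' \in \USI(\tbz)} \Delta_{\calH}^{\mathcal{U}}(\bz', \by, \tilde{\bz}')
\]
ranges over \emph{every} $\bz' \in \USI(\bz)$ and $\tbz' \in \USI(\tbz)$, and by the very definition of intermediate perturbations, any $\bz_0 \in \ipu(\bx, \bz) = \U(\bx) \cap \USI(\bz)$ is in particular an element of $\USI(\bz)$ (and symmetrically on the test side). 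Hence
\[
\Delta_{\rej, \calH}^{\U}(\bz, \by, \tbz) \subseteq \Delta_{\calH}^{\U}(\bz_0, \by, \tbz_0)
\]
for every valid pair $(\bz_0, \tbz_0)$ of intermediate perturbations, and applying $\FU$ preserves this containment.

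Concretely, I would condition on the high-probability event of Theorem~\ref{thm:transduction-agnostic}, which already holds with probability $\geq 1 - \delta$. Fix arbitrary $\bz \in \UC(\bx)$ and $\tbz \in \UC(\tbx)$. The factorisation $\UC = \US\U$ (used in the proof of Theorem~\ref{thm:transductive-realizable} to guarantee nonempty intermediates) ensures that $\ipu(\bx, \bz)$ and $\ipu(\tbx, \tbz)$ are both nonempty, so one may pick some $\bz_0 \in \ipu(\bx, \bz)$ and $\tbz_0 \in \ipu(\tbx, \tbz)$. Now, for any $\hh \in \FU(\Delta_{\rej, \calH}^{\U}(\bz, \by, \tbz))$, write $\hh = \FU(h)$ with $h \in \Delta_{\rej, \calH}^{\U}$; the inclusion above yields $h \in \Delta_{\calH}^{\U}(\bz_0, \by, \tbz_0)$, so $\hh \in \FU(\Delta_{\calH}^{\U}(\bz_0, \by, \tbz_0))$. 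Theorem~\ref{thm:transduction-agnostic} then directly delivers the desired bound $\erej(\hh; \bx, \by, \tbx, \tbz, \tby) \leq \epsilon$.

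The only conceptual subtlety — which the paper itself flags right before the corollary — is that one cannot replace the intersection defining $\Delta_\rej$ by a single zero-set like $\Delta_{\calH}^{\U^3}(\bz, \by, \tbz)$, as is done in Lemma~\ref{lem:rejection-delta-realizable} for the realizable case. The reason is that $\Delta_\calH^\U$ is now an \emph{argmin} set rather than a zero set, and an argmin at the extremal perturbation need not remain an argmin at every intermediate perturbation, so the reverse containment would fail. Fortunately, only the forward containment is needed here, so this obstruction does not arise; the corollary is effectively a translation step from the theorem.
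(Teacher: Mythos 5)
Your proposal is correct and matches the paper's intended derivation: the corollary is obtained from Theorem~\ref{thm:transduction-agnostic} by exactly the containment $\Delta_{\rej, \calH}^{\mathcal{U}}(\bz, \by, \tilde{\bz}) \subseteq \Delta_{\calH}^{\mathcal{U}}(\bz_0, \by, \tbz_0)$ for any intermediate perturbations $\bz_0, \tbz_0$ (which exist by the factorisation of $\UC$), with no further probabilistic work. Your closing remark about why the realizable-case simplification of $\Delta_{\rej}$ fails in the agnostic setting is also precisely the point the paper makes via Lemma~\ref{lem:rejection-agnostic-delta}.
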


\begin{lemma}\label{lem:rejection-agnostic-delta} In the agnostic case, we have that if $\U = \U^{-1}$,
\[ \Delta_{\rej, \calH}^{\mathcal{U}}(\bz, \by, \tilde{\bz}) \subseteq \Delta_{\calH}^{\UC}(\bz, \by, \tilde{\bz})\]
\end{lemma}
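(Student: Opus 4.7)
The plan is to show that if $h$ lies in every $\Delta_{\calH}^{\U}(\bz', \by, \tbz')$ for $\bz' \in \USI(\bz)$ and $\tbz' \in \USI(\tbz)$, then $h$ is also a minimizer of the joint worst-case objective defining $\Delta_{\calH}^{\UC}(\bz, \by, \tbz)$. The bridge is the identity
\[
\sup_{\bz' \in \USI(\bz)} \Rt_{\UI}(g;\bz',\by) \;=\; \Rt_{\UCI}(g;\bz,\by),
\]
valid for any $g \in \calH$, which follows from the compositional decomposition $\UC = \US \cdot \U$ noted in the proof of Theorem~\ref{thm:transductive-realizable} together with the hypothesis $\U = \U^{-1}$ (so that $\US = \USI$, $\UI = \U$, and $\UCI = \UC$). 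The analogous identity holds with $\bz$ replaced by $\tbz$.

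Given the bridge identity, the argument proceeds in three short steps. Fix an arbitrary competitor $h' \in \calH$. For each pair $(\bz', \tbz') \in \USI(\bz) \times \USI(\tbz)$, the hypothesis $h \in \Delta_{\calH}^{\U}(\bz', \by, \tbz')$ gives
\[
\max\left\{\Rt_{\UI}(h;\bz',\by),\, \Rt_{\UI}(h;\tbz')\right\} \;\leq\; \max\left\{\Rt_{\UI}(h';\bz',\by),\, \Rt_{\UI}(h';\tbz')\right\}.
\]
Taking the supremum over $(\bz', \tbz')$ on both sides preserves the inequality, and because $\bz'$ and $\tbz'$ vary independently, the supremum commutes with the outer max:
\[
\sup_{\bz',\tbz'}\max\{A(\bz'), B(\tbz')\} \;=\; \max\left\{\sup_{\bz'} A(\bz'),\, \sup_{\tbz'} B(\tbz')\right\}.
\]
Applying the bridge identity coordinate-wise then yields, for every $h' \in \calH$,
\[
\max\{\Rt_{\UCI}(h;\bz,\by),\, \Rt_{\UCI}(h;\tbz)\} \;\leq\; \max\{\Rt_{\UCI}(h';\bz,\by),\, \Rt_{\UCI}(h';\tbz)\},
\]
so $h \in \Delta_{\calH}^{\UC}(\bz, \by, \tbz)$, as desired.

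The main obstacle is pinning down the bridge identity: it requires unwinding the composition $\UI \circ \USI(\bz) = \UCI(\bz)$ under $\U = \U^{-1}$, for which I would verify both inclusions $\UI \circ \USI(\bz) \subseteq \UCI(\bz)$ and $\UCI(\bz) \subseteq \UI \circ \USI(\bz)$ by chasing the definitions through $\US \cdot \US = \U$ and the symmetry of $\US$. Once this is established, the remainder is elementary manipulation of suprema. Note that, unlike the realizable case (Lemma~\ref{lem:rejection-delta-realizable}), we only obtain containment and not equality: being an argmin of the joint supremum $\sup_{\bz',\tbz'}\max\{\Rt_{\UI}(\cdot;\bz',\by),\Rt_{\UI}(\cdot;\tbz')\}$ is strictly weaker than being an argmin at every individual $(\bz', \tbz')$, so the reverse inclusion fails in general. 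This is precisely why Corollary~\ref{corr:transductive-agnostic} cannot simplify the definition of $\Delta_{\rej,\calH}^{\U}$ as was possible in the realizable setting.
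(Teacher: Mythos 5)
Your proposal is correct and follows essentially the same route as the paper: establish the identity $\sup_{\bz' \in \USI(\bz)} \Rt_{\UI}(\cdot;\bz',\by) = \Rt_{\UCI}(\cdot;\bz,\by)$ (and its test-set analogue), then pass from ``argmin at every $(\bz',\tbz')$'' to ``argmin of the supremum,'' using that the sup commutes with the outer max, and note the converse fails for exactly the reason you give. One caution on the step you defer: the bridge identity is not a pure set-composition fact, because $\Rt_{\UI}(h;\tbz')$ is measured against the pseudo-labels $h(\tbz')$ while $\Rt_{\UCI}(h;\tbz)$ is measured against $h(\tbz)$, so you must additionally argue (as the paper does, using $x \in \U(x)$ and $\U = \U^{-1}$) that a disagreement $h(\tz_i') \neq h(\tz_i)$ forces a $\UI$-non-robustness witness at either $\tz_i'$ or $\tz_i$.
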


\begin{proof}
By the definition of $\Rt$, we have
\begin{align*}
\Rt_{\UCI}(h ; \tbz) &= \frac{1}{n} \sum_{i=1}^{n} \mathbb{1}\left\{\exists \tilde{x}_{i} \in \UCI\left(\tz_{i}\right): h\left(\tilde{x}_{i}\right) \neq h\left(\tz_{i}\right)\right\} \\
&= \frac{1}{n} \sum_{i=1}^{n} \mathbb{1}\left\{\begin{array}{l}
\exists \tilde{z}_{i}' \in \USI\left(\tz_{i}\right)\exists \tilde{x}_{i} \in \mathcal{U}^{-1}\left(\tz_{i}'\right):
h\left(\tilde{x}_{i}\right) \neq h\left(\tz_{i}\right)
\end{array}\right\}\\
&= \underset{\tilde{z}_{i}' \in \USI\left(\tz_{i}\right)}{\max} \frac{1}{n} \sum_{i=1}^{n} \mathbb{1}\left\{\exists \tilde{x}_{i} \in \mathcal{U}^{-1}\left(\tz_{i}'\right): h\left(\tilde{x}_{i}\right) \neq h\left(\tz_{i}\right)\right\}\\
&= \underset{\tilde{z}_{i}' \in \USI\left(\tz_{i}\right)}{\max} \Rt_{\mathcal{U}^{-1}}(h ; \tbz')
\end{align*}
where the last equality holds as $x \in \U(x)$ for all $x$ and as $\U = \U^{-1}$, which together show that if for some $\tz_i$ and $\tz_i' \in \USI(\tz_i)$ we have that $h(\tz_i') \neq h(\tz_i)$, that either there exists some $\tz_i'' \in \U = \UI(\tz_i')$ such that $h(\tz_i'') \neq h(\tz_i')$ or there exists some $\tz_i'' \in \U = \UI(\tz_i)$ such that $h(\tz_i'') \neq h(\tz_i)$ (as before, note that $\tz_i = \U(\tz_i'')$ for some $\tz_i'' \elem \U(\tz_i')$ by the definition of $\UC$); the reverse is similar.

We can derive a result for $\Rt_{\UCI}(h;\bz,\by)$ similarly.

Suppose $h \in \Delta_{\rej, \calH}^{\mathcal{U}}(\bz, \by, \tilde{\bz})$. Then, $h$ minimizes $\max \left\{ \Rt_{\mathcal{U}^{-1}}(h ; \bz', \by) , \Rt_{\mathcal{U}^{-1}}(h ; \tbz')\right\}$ for all $\bz' \in \USI(\bz),\, \tbz' \in \USI(\tbz)$, so by the above, $h$ must also minimize
\begin{align*}
&\underset{\bz' \in \USI(\bz), \tbz' \in \USI(\tbz)}{\max}\max\left\{\Rt_{\mathcal{U}^{-1}}(h ; \bz', \by) , \Rt_{\mathcal{U}^{-1}}(h ; \tbz')\right\} \\
&= \max\left\{\underset{\bz' \in \USI(\bz)}{\max} \Rt_{\mathcal{U}^{-1}}(h ; \bz', \by) , \underset{\tbz' \in \USI(\tbz)}{\max}\Rt_{\mathcal{U}^{-1}}(h ; \tbz')\right\}\\
&= \max\left\{\Rt_{\UCI}(h;\tbz), \mathrm{R}_{\UCI}(h;\bz, \by) \right\}
\end{align*}
and so $h \in \Delta_{\calH}^{\UC}(\bz, \by, \tilde{\bz})$.

However, minimizing
\[\underset{\bz' \in \USI(\bz), \tbz' \in \USI(\tbz)}{\max}\max\left\{\Rt_{\mathcal{U}^{-1}}(h ; \bz', \by) , \Rt_{\mathcal{U}^{-1}}(h ; \tbz')\right\}\]
does not necessarily imply that $h$ minimizes $\max\left\{\Rt_{\mathcal{U}^{-1}}(h ; \bz', \by) , \Rt_{\mathcal{U}^{-1}}(h ; \tbz')\right\}$ for all $\bz' \in \USI(\bz), \tbz' \in \USI(\tbz)$, so the reverse may not hold.
\end{proof}

\subsection{Extension to Unbalanced Training and Test Data}\label{app:unbalanced-train-test}

We provide a sketch of a proof that allows extending Theorem 1 of \citep{montasser2021transductive} to unbalanced training and test sets; however, for simplicity, we will work with the original form. The assumptions are the same, except that we have $n$ training points and $m$ test points.

The proof is exactly as before up to the "Finite robust labelings" portion (which points are and are not labelled don't matter up to then and the symmetry arguments still apply). The basic idea of determining the probability of zero loss on the training and test sets and error $> \epsilon$ on the test examples with permutation still applies. Let $E_{\sigma,\bx}$ be the event that there exists a labelling $\hh(\bx_{\sigma(1:n+m)})$ in the allowable set where this occurs.

We have 
\[\Pr_\sigma\left[E_{\sigma, \bx}\right] \leq \Pr_\sigma\left[\exists \hh \in \PUH(x_1,\dots,x_{n+m}) : 
\err_{\bx_{\sigma(1:n)}, \by_{\sigma(1:n)}}(\hh) = 0 
\land \err_{\bx_{\sigma(n:n+m)}, \by_{\sigma(n:n+m)}}(\hh) > \epsilon \right]\]
and, as in \citep{montasser2021transductive}, note the probability of choosing such a perturbation $\sigma$ for a fixed $\hh$ is at most
\[\pmnm^s \leq \pmnm^{\ceil{\epsilon m}} = \pnmm^{-\ceil{\epsilon m}} \leq \pnmm^{\ceil{-\epsilon m}}\]
if we assume the number of total errors $s \geq \ceil{\epsilon m}$ without loss of generality (otherwise, $\err > \epsilon$ would be impossible).

Hence, by a union bound,
\[\Pr_\sigma\left[E_{\sigma, \bx}\right]\leq \left|\PUH(x_1,\dots,x_{n+m})\right|\pnmm^{\ceil{-\epsilon m}}\]
and so
\[\Pr_\sigma\left[E_{\sigma, \bx}\right]\leq
(n+m)^{\rdim_{\U^{-1}}(\HH)}
\pnmm^{\ceil{-\epsilon m}}\]
by Sauer's Lemma (in the form of  Lemma 3 of \citep{montasser2021transductive}).

Now, we bound the probability by $\delta$, we need 
\[(n+m)^{\rdim_{\U^{-1}}(\HH)}
\pnmm^{\ceil{-\epsilon m}}\leq \delta\]
which, solving, gives us
\[\epsilon \geq {\rdim_{\U^{-1}}(\HH) \log_{n+m\over m}(n+m) + \log_{n+m\over m} {1\over \delta}\over m} =
{\rdim_{\U^{-1}}(\HH) \log(n+m) + \log {1\over \delta}\over m\log\left(1+{m\over n}\right)}\]

Which reduces to the original result if $n=m$ (note that the logarithms are base-2).

\paragraph{Corollary}

If we fix $n+m$, $\HH$, and $\delta$, the guarantee is strongest (i.e. we minimize $\epsilon$) when $n=m$. To see this, consider the denominator. Write $\alpha = {m \over n}$. Then, we wish to maximize $n\alpha \log(1+\alpha)$ (or equivalently $f(\alpha) = \alpha\log(1+\alpha)$ subject to $\alpha \geq 0$. Now, note that $f'(\alpha) = \log(1+\alpha)-1 = 0$ when $\alpha = 1$, i.e. when $m=n$.

Also, we can see from the result above, that if we fix $m$ and $\delta$, then the minimum value of $\epsilon$ tends towards $\infty$ as $n\rightarrow\infty$, so there does not necessarily exist a labelled training set sampled from $\D$ which provides a guarantee with high probability of arbitrarily low error on a fixed test set.

\section{Experimental Details} \label{app:exp-detail}

\subsection{Computing Infrastructure} \label{app:compute-infra}
We used a SLURM cluster with A100 GPUs to run our experiments. 

\subsection{Baseline Details}

The baselines are trained with standard adversarial training \citep{goodfellow2015explaining} \citep{madry2018towards}. Attacks against AT without rejection use standard PGD with a cross-entropy objective, while attacks against AT with rejection use PGD targeting $\LREJ$ as described in algorithm~\ref{alg:rej-attack}. In all cases, the parameters for PGD in training are the same as those used in TLDR's training process for the same dataset.

\subsection{Defense}

In our implementation, we begin to incorporate the transductive term in our objective (see \Cref{eqn:transductive-loss}) after initially training the model with the inductive loss term only; this allows learning a better baseline before we begin to enforce robustness about the test points. In our experiments, we use the transductive loss in the final half of the training epochs.

\subsection{Adaptive Attack}
Solving for the perturbation $\tx$ by iteratively optimizing $\LREJ$ poses several difficulties.

First, the rejection-avoidance term $\left\|\tx -  \arg\max_{||x' - \tx|| \leq \epsilon} \LDB(x')\right\|$ is not differentiable with respect to $\tx$. While it is possible to approximate the derivative with the derivative of a proxy (e.g. differentiating though some fixed number of PGD steps, necessitating second-order optimization), this is extremely expensive and does not improve results in our experiments (see below).

Intuitively, we might see that this would be the case: if the decision boundary is smooth, we might expect the maximizers in $\U(x+\Delta)$ and $\U(x)$ to be the same for small $\Delta$ unless $x'$ is near the border of $\U(x)$ given that $\U(x+\Delta) \approx \U(x)$. In this case, approximating $x'$ as constant with respect to $x$ is reasonable.

In addition, note that if $h(x) = y$, the adversary must find a $\tx$ where $h(\tx) \neq y$ which is not rejected: if maximizing $\LREJ$ with PGD, the rejection-avoidance term penalizes moving $\tx$ towards the decision boundary. As this is necessary to find a valid attack (when $h(\tx) = y$ at initialization), we adjust $\lambda$ adaptively during optimization by setting it to zero when $h(\tx) = y$. 

\subsection{Transductive Attack Details}\label{app:attack-details}

We present two rejection-aware transductive attacks: a stronger but more computationally intensive rejection-aware GMSA (Algorithm~\ref{alg:rej-attack-gmsa}) and a weaker but faster rejection-aware transfer attack which takes the transductive robust rejection risk into account (Algorithm~\ref{alg:rej-attack-transfer}).

\begin{algorithm*}[htb]
  \small
  \caption{\textsc{\small Rejection-Aware GMSA}}
  \begin{algorithmic}[1]
    \REQUIRE A clean training set $T$, a clean test set $E$, a transductive learning algorithm for classifiers $\A$, an adversarial budget of $\epsilon$, $mode$ either MIN or AVG, a radius used for rejection $\epsilon_{\text{defense}}$, and a maximum number of iterations $N \geq 1$. $E|_X$ refers to the projection on the feature space for $E$.
    \STATE Search for a perturbation of the test set which fools the model space induced by $(T, \U(E|_X))$.\\
    \STATE $E' = E$
    \STATE $\hat{E} = E$
    \STATE $\err_\text{max} = -\inf$
    \FOR{i=0,\dots,N-1}
        \STATE Train a transductive model on the perturbed data.\\
        \STATE $h^{(i)} = \A(T, E'|_X)$
        \STATE \[\err = \frac{1}{|E'|} \sum_{i=1}^{|E'|} \mathbb{1}\left\{ 
            \left( F(h^{(i)})\left(\tilde{x}_{i}\right) \notin \{\tilde{y_i} \} \land \tilde{x}_{i} = x_{i}\right) 
            \lor \left( 
                F(h^{(i)})\left(\tilde{x}_{i}\right) \notin \{\tilde{y_i}, \perp\} \land \tilde{x}_{i} \neq x_{i}  \right) \right\}\]
        \COMMENT{The $\tilde{x}_i$ and the $x_i$ are the i$^{\text{th}}$ datapoints of $E'$ and $E$, repectively; $y_i$ is the true label.}
        
        \IF{$\err_\text{max} < \err$}
            \STATE $\hat{E} = E'$
        \ENDIF
        
        \FOR{$j=1,\dots,|E|$}
            \IF{$mode = \text{MIN}$}
                \STATE \[\tx_j = \arg\max_{\|\tx - x_j\| \leq \epsilon}\min_{1\leq k \leq i} \LREJ_{h^{(k)}}(\tx, y_j)\]
            \ELSE
                \STATE \[\tx_j = \arg\max_{\|\tx - x_j\| \leq \epsilon}{1\over i}\sum_{k=1}^i \LREJ_{h^{(k)}}(\tx, y_j)\]
            \ENDIF
            
            \COMMENT{Select whether to perturb by comparing success rates against past models for the clean and perturbed samples.}
            
            \STATE \[\err_{\text{clean}} = {1\over i}\sum_{0\leq k \leq i} \I\left[F\left(h^{(k)}\right)(x_j) \neq y_j\right]\]
            \STATE \[\err_{\text{perturbed}} = {1\over i}\sum_{0\leq k \leq i} \I\left[F\left(h^{(k)}\right)(\tx_j) \not\in \{y_j, \perp\}\right]\]
            
            \COMMENT{Do not perturb if the perturbation reduces robust rejection accuracy less on average than leaving the points unchanged.}
            \IF{$\err_{\text{perturbed}} < \err_{\text{clean}}$}
                \STATE $\tx_j = x_j$
            \ENDIF
            
            \STATE $E_j' = \tx_j, y_i$
        \ENDFOR
    \ENDFOR
    \STATE \textbf{Return:} $\hat{E}$
  \end{algorithmic}
  \label{alg:rej-attack-gmsa}
\end{algorithm*}

\begin{algorithm*}[htb]
  \small
  \caption{\textsc{\small Transductive Rejection-Aware Transfer}}
  \begin{algorithmic}[1]
    \REQUIRE A model $h$, a clean labelled test point $(x,y)$, an adversarial budget of $\epsilon$, and a radius used for rejection $\epsilon_{\text{defense}}$.\\
    
    \COMMENT{Search for a perturbation $\tx$ of $x$ for which $h$ predicts $\hat{y} \neq y$ robustly.}
    \STATE \[\tx = \arg\max_{\|\tx - x\| \leq \epsilon}\left[\LCE(h^\text{s}(\tx), y) + \lambda \left\|\tx -  \arg\max_{\|x' - \tx\| \leq \epsilon_{\text{defense}}} \LDB(x')\right\|,\right]\]
    where $\LCE$ is the cross-entropy loss, $h^\text{s}$ returns the softmax activations of $h$ and where\\
    $\LDB(x) = \text{rank}_2 h^\text{s}(x) - \max h^\text{s}(x)$.\\
    
    \COMMENT{If the attack did not succeed against $h$ (in other words, if $h$ does not robustly predict $\hat{y} \neq y$), check whether to leave $x$ unperturbed.}
    \STATE \[x' =  \arg\max_{\|x' - \tx\| \leq \epsilon_{\text{defense}}} \LCE(h^s(x'), h(\tx))\]
    \IF{$h(x') \neq h(\tx) \lor h(\tx) = y$}
        \STATE Leave $x$ unperturbed if $F(h)$ rejects it, or if $h(x) \neq y$.
        \STATE \[x'' = \arg\max_{\|x'' - x\| \leq \epsilon_{\text{defense}}} \LCE(h^s(x''), h(x))\]
        \IF{$h(x) \neq y \lor h(x'') \neq h(x)$}
            \STATE $\tx = x$
        \ENDIF 
    \ENDIF
    \STATE \textbf{Return:} $\tx$
  \end{algorithmic}
  \label{alg:rej-attack-transfer}
\end{algorithm*}

Finally, note the attack with $\LREJ$, without GMSA, is effective against selective classifiers based on the transformation $F$ (and via Tram\`er's equivalency, selective classifiers in general). So we summarize this attack on a fixed model in Algorithm~\ref{alg:rej-attack}.

\begin{algorithm*}[htb]
  \small
  \caption{\textsc{\small Inductive Rejection-Aware Attack}}
  \begin{algorithmic}[1]
    \REQUIRE A model $h$, and a clean labelled test point $(x,y)$, an adversarial budget of $\epsilon$, and a radius used for rejection $\epsilon_{\text{defense}}$.
    \STATE Search for a perturbation $\tx$ of $x$ for which $h$ predicts $\hat{y} \neq y$ robustly.
    \[\tx = \arg\max_{\|\tx - x\| \leq \epsilon}\left[\LCE(h^{s}(\tx), y) + \lambda \left\|\tx -  \arg\max_{\|x' - \tx\| \leq \epsilon_{\text{defense}}} \LDB(x')\right\|\,\right]\]
    where $\LCE$ is the cross-entropy loss, $h^{s}$ returns the softmax activations of $h$ and where\\ $\LDB(x') = \text{rank}_2 h^{s}(x') - \max h^{s}(x')$
    \STATE \textbf{Return:} $\tx$
  \end{algorithmic}
  \label{alg:rej-attack}
\end{algorithm*}

\subsection{Rejectron Experiments}\label{app:rejectron-experiments}

Goldwasser et al.'s implementation of Rejectron~\citep{goldwasser2020beyond} trains a classifier (call it $h_c$) on the training set and a discriminator ($h_d$) to distinguish between the (clean) training and (potentially-perturbed) test data. Samples are rejected if the discriminator classifies them as test data; otherwise, the classifier's prediction is returned. Our adaptive attack is then very simple: we follow the approach of \Cref{alg:rej-attack-gmsa} but with a loss function $\LDISC$ which targets the defense.

Given a sample $(x,y)$, the attacker's goal is to flip the label, and, simultaneously, to avoid rejection; hence, we maximize the following loss:
\[\LDISC(x, y) = \LCE(h_c^{\text{s}}(x), y) + \lambda \LCE(h_d^{\text{s}}(x), 1)\]

where class 1 for $h_d$ corresponds to test data, signalling rejection, and where $h^{\text{s}}$ returns the softmax activations of $h$. Maximizing $\LDISC$ then minimizes the confidence in the true label and the probability of rejection.

\begin{figure}[ht]
\begin{multicols}{2}
  \begin{center}
    \includegraphics[width=0.9\columnwidth]{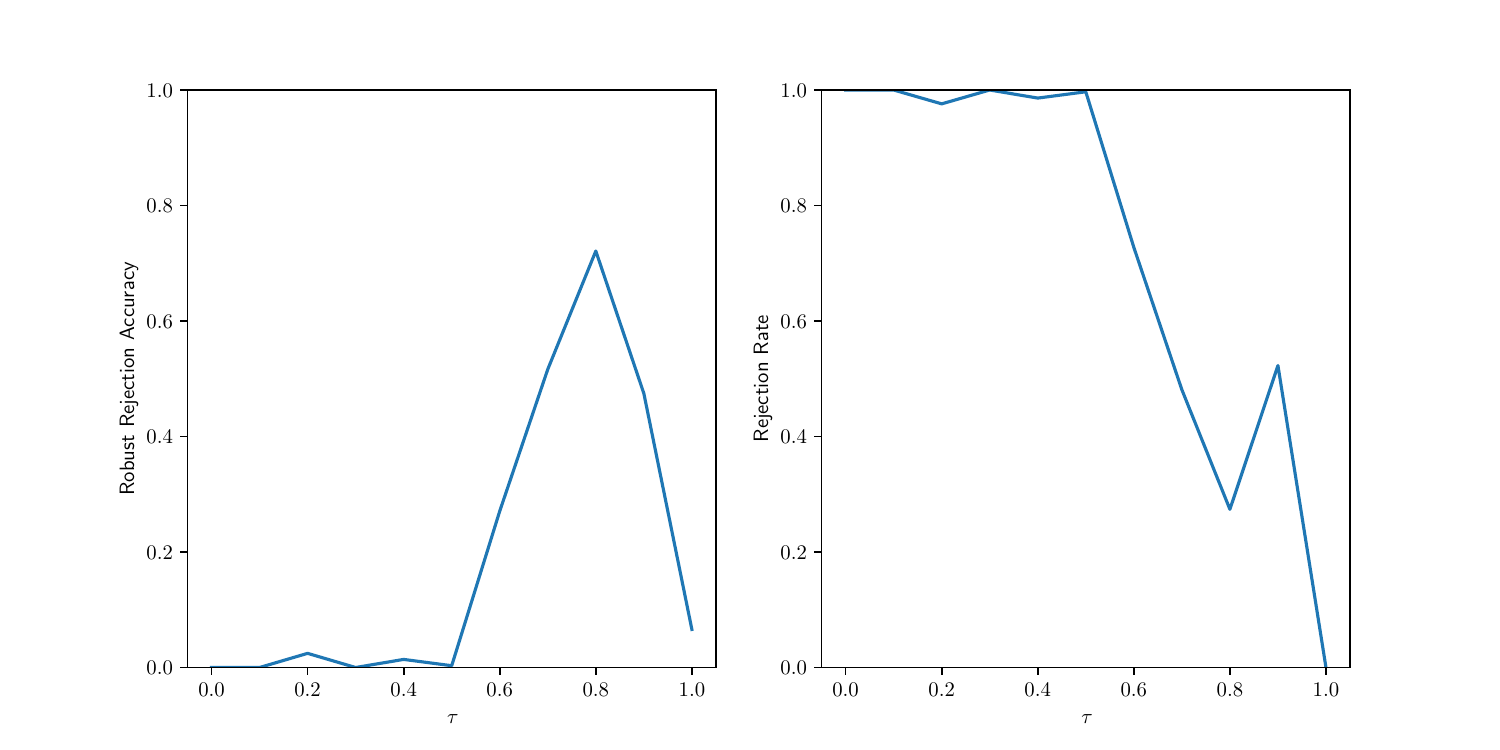}
    \caption{Effects of $\tau$ on performance of Rejectron on MNIST with attacker GMSA ($\LDISC$).}
    \label{fig:rejectron-mnist}
   \end{center}
  \columnbreak
  \begin{center}
    \includegraphics[width=0.9\columnwidth]{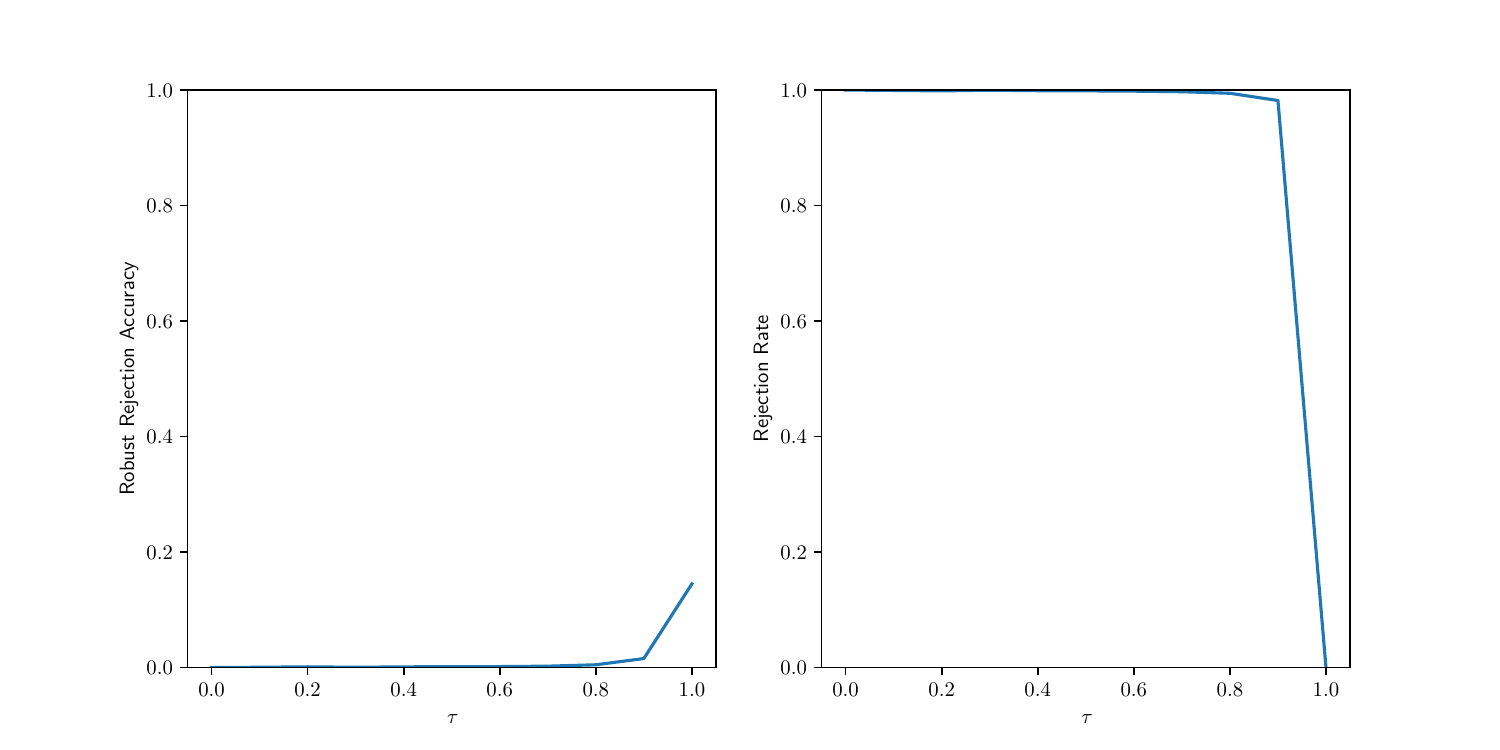}
    \caption{Effects of $\tau$ on performance of Rejectron on CIFAR-10 with attacker GMSA ($\LDISC$).}
    \label{fig:rejectron-cifar10}
   \end{center}
\end{multicols}
\end{figure}
 
Figures~\ref{fig:rejectron-mnist} and~\ref{fig:rejectron-cifar10} show our adaptive attack's performance on MNIST and CIFAR-10. $\tau$ is a key hyperparameter of Rejectron, which determines the confidence needed by $h_d$ to reject a sample; to evaluate Rejectron fairly, we report the results on best-performing value of $\tau$, based on (transductive) robust rejection accuracy; see Table~\ref{tab:result}. On CIFAR-10, performance is near-zero and rejection rate is near 100\% for small values of $\tau$. The best-performing value of $\tau$ is 1 (effectively eliminating the possibility of rejection), leading to a rejection rate of 0; this behavior on CIFAR-10 illustrates the algorithm's struggles with the practical high-complexity deep learning setting.

\section{Additional Experiments}\label{app:attack-ablation}

\begin{table*}[!t]
\begin{center}
    \begin{tabular}{cc c cc cc}
    \toprule
    \multicolumn{2}{c }{TLDR Components} & \multirow{2}{*}{Attacker} & \multicolumn{2}{c }{MNIST} & \multicolumn{2}{c}{CIFAR-10}\\ \cline{1-2} \cline{4-7}
    Rejection & $L_{\text{test}}$ & & $\PREJ$ & Robust accuracy & $\PREJ$ & Robust accuracy
    \\
    \midrule
    \checkmark & \checkmark & GMSA ($\LREJ$) &  \textbf{0.588} & 0.967 & 0.208 & \textbf{0.739}\\
    \midrule
    \checkmark & $\times$ & GMSA ($\LREJ$) & 0.646 & \textbf{0.975} & \textbf{0.179} & 0.725 \\
    \midrule
    $\times$ & \checkmark & GMSA ($\LCE$)  & -- & 0.900 & -- & 0.516\\
    \midrule
    $\times$ & $\times$ & GMSA ($\LCE$) & -- & 0.935 & -- & 0.516\\
    \bottomrule
    \end{tabular} 
    \caption{Ablation study of TLDR. The best result is \textbf{boldfaced}.}
    \label{tab:ablation-defense}
\end{center}
\end{table*}

\subsection{Ablation Study of TLDR}\label{sec:ablation}

Compared to traditional defenses, TLDR has two novel components: using the given test inputs in training the classifier (the second term in \Cref{eqn:transductive-loss}, referred to as $L_\text{test}$), and transforming the trained classifier into one with rejection. \Cref{tab:ablation-defense} shows the results of the ablation study on these two components. In all cases, rejection significantly improves results. The use of transduction is helpful on CIFAR-10, but reduces performance on MNIST, potentially due to the lower difficulty of deriving robust predictions on MNIST; hence, the knowledge of the specific test inputs is less useful.

\subsection{Warm Start in TLDR}

\begin{table}[!t]
  \caption{\centering Effects of warm start period on TLDR.}
    \vskip 0.1in
\begin{center}
\begin{tabular}{l|c|c}
    Warm start (epochs) & Rejection Rate & Robust Rejection Accuracy \\
    \hline
    0 & 0.813 & 0.153\\
    \textbf{500} & \textbf{0.531} & \textbf{0.177}\\
    1000 & 0.830 & 0.171\\
\end{tabular}
\end{center}
\vskip -0.1in
\end{table}

Here we perform experiments showing that in training TLDR, it is best to first trains a baseline model without transductive regularization $L_{\textrm{test}}$ in the early stage (warm start) and then add transductive regularization for later training. 

We generate the data with 100  Gaussians (one per class) equally spaced in $l_\infty$ with a separation of 3 units between means. The adversarial budget is 2 units, and we ensure that the data is sparse by generating 10 samples per class. The models are 10 layer feedforward networks with skip connections.

The synthetic models are trained for 1000 epochs total; we see the best performance when the model has transductive regularization but is allowed to learn an initial baseline model before transductive regularization is used in training. Doing so reduces the risk of the regularization term harming performance.

\subsection{GMSA Method}

\begin{table*}[ht]
    \caption{\centering Full ablation results of TLDR.} 
    \vskip 0.1in
    \begin{adjustbox}{width=\columnwidth,center}
    \begin{tabular}{cc|l|cc|cc}
    \toprule
    \multicolumn{2}{c|}{TLDR Components} & \multirow{2}{*}{Attacker} & \multicolumn{2}{c|}{MNIST} & \multicolumn{2}{c}{CIFAR-10}\\ \cline{4-7}
    Rejection & Transductive Regularization & & $\PREJ$ & Robust accuracy & $\PREJ$ & Robust accuracy
    \\
    \midrule
    \checkmark & \checkmark & $\text{GMSA}_\text{AVG}$ ($\LREJ$) & 0.796 & 0.968 & 0.195 & 0.744\\
    \checkmark & \checkmark & $\text{GMSA}_\text{MIN}$ ($\LREJ$) & 0.588 & 0.967 & 0.208 & 0.739\\
    \checkmark & $\times$ & $\text{GMSA}_\text{AVG}$ ($\LREJ$) & 0.646 & 0.975 & 0.179 & 0.725 \\
    \checkmark & $\times$ & $\text{GMSA}_\text{MIN}$ ($\LREJ$) & 0.202 & 0.980 & 0.182 & 0.733 \\
    $\times$ & \checkmark & $\text{GMSA}_\text{AVG}$ ($\LCE$) & -- & 0.900 & -- & 0.516\\
    $\times$ & \checkmark & $\text{GMSA}_\text{MIN}$ ($\LCE$) & -- & 0.914 & -- & 0.601\\
    $\times$ & $\times$ & $\text{GMSA}_\text{AVG}$ ($\LCE$) & -- & 0.935 & -- & 0.516\\
    $\times$ & $\times$ & $\text{GMSA}_\text{MIN}$ ($\LCE$) & -- & 0.942 & -- & 0.556\\
    \bottomrule
    \end{tabular} 
    \end{adjustbox}
    \smallskip
    \label{tab:ablation-defense-full}
    \vskip -0.1in
\end{table*}

We present extended results of our defense ablation and compare the results of $\text{GMSA}_\text{AVG}$, which optimizes the average loss of past iterations, and $\text{GMSA}_\text{MIN}$, which optimizes the worst-case loss. See \citep{chen2022towards}. We can see that while the two perform about the same on the full TLDR defense ($\text{GMSA}_\text{MIN}$ performs slightly better), $\text{GMSA}_\text{AVG}$ is much stronger for models not incorporating both components.

\subsection{Rejection Radius} \label{app:rejection-radius}

\begin{figure}[h]
    \centering
    \includegraphics[width=0.9\columnwidth]{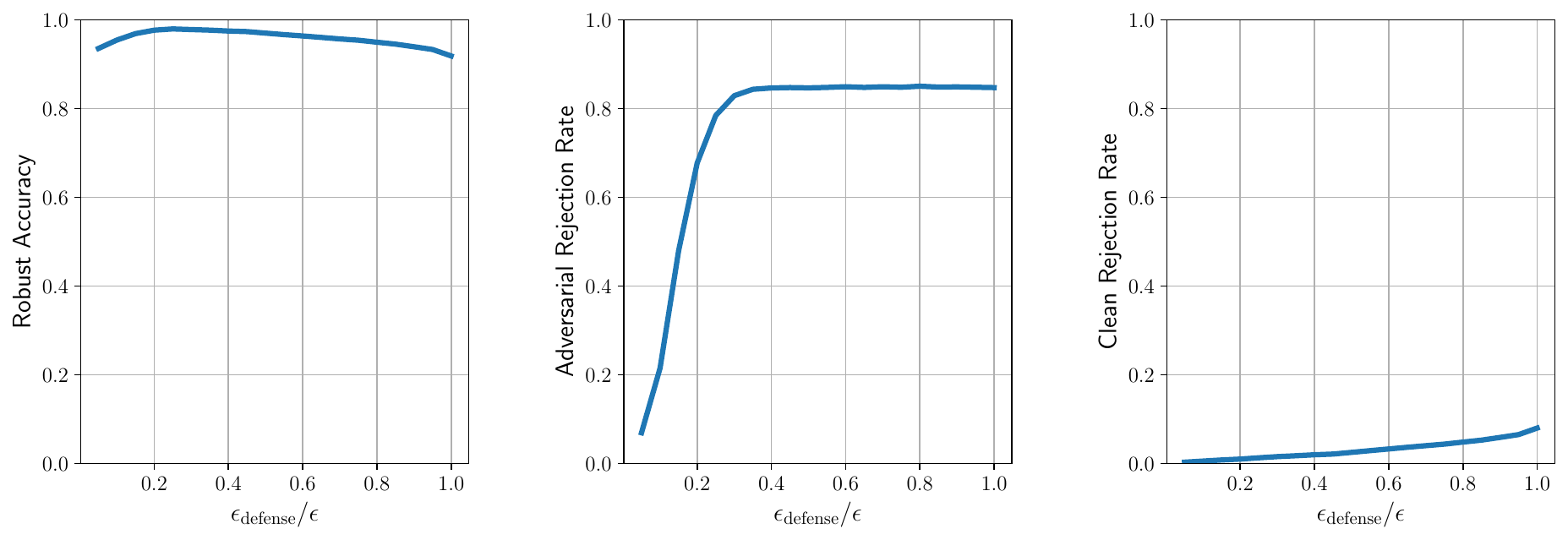}
    \caption{Effects of rejection radius $\epsilon_{\text{defense}}$ on MNIST (inductive) with attacker PGD ($\LREJ$).}
    \label{fig:rejection-radius-inductive}
\end{figure}

\begin{figure}[h]
    \centering
    \includegraphics[width=0.9\columnwidth]{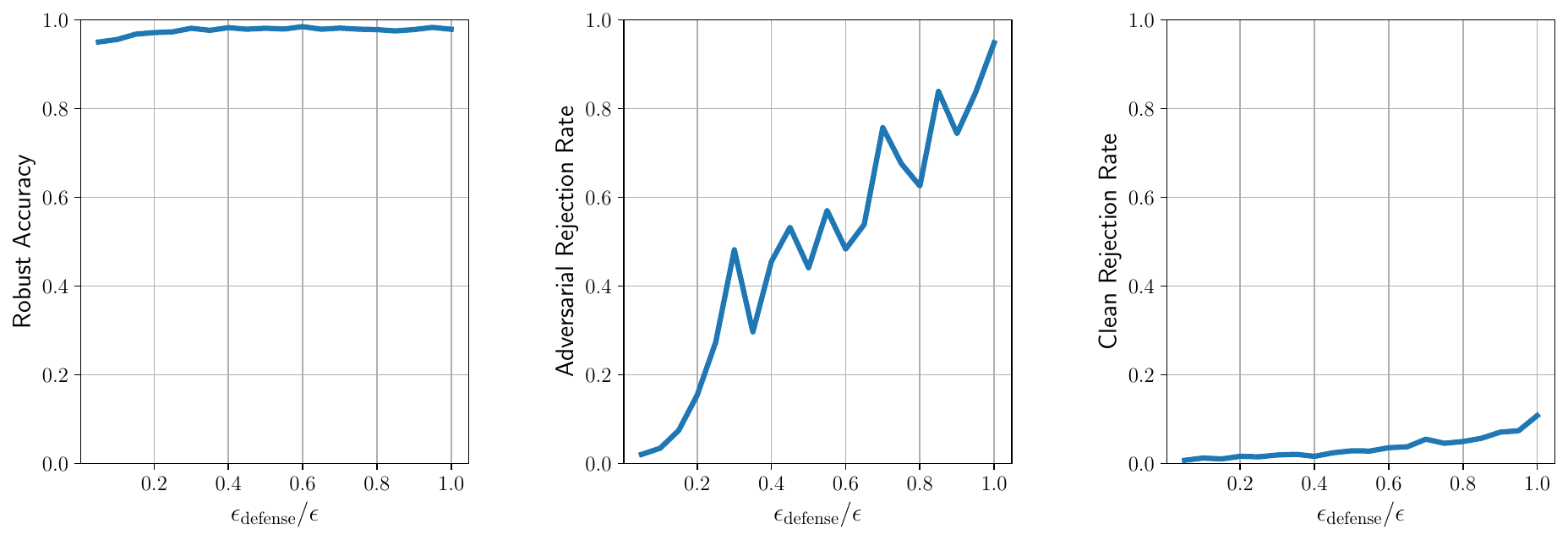}
    \caption{Effects of rejection radius $\epsilon_{\text{defense}}$ on MNIST (TLDR) with attacker GMSA ($\LREJ$).}
    \label{fig:rejection-radius-tldr}
\end{figure}

The rejection radius $\epsilon_{\text{defense}}$ is an important hyper-parameter for TLDR; however, the model's performance is not very sensitive to it. Figure~\ref{fig:rejection-radius-inductive} shows the trend of robust accuracy, the rejection rate on adversarial test data, and the rejection rate on clean test data, for the inductive classifier on MNIST; \Cref{fig:rejection-radius-tldr} shows those for TLDR. The robust accuracy remains stable. The theoretical analysis suggests setting the radius to $\epsilon / 3$ where $\epsilon$ is the adversarial budget. Given TLDR's low sensitivity to the parameter, we use $\epsilon/4$ for consistency as the inductive case performs best with that setting. The rejection rate on the adversarial test data rises rapidly with the rejection radius (reaching 0.949 for TLDR for $\epsilon_{\textrm{defense}} = \epsilon$), but the rejection rate on clean data increases much more slowly (0.108 when $\epsilon_{\textrm{defense}} = \epsilon$). 
So among all rejected inputs only a few are clean inputs, leading to low errors as desired. 

The rejection rate on clean inputs is presented for the transductive case in order to illustrate the difference in effects on clean and perturbed data, but, as the adversary may select to perturb, some clean points were not in the training set, and, hence, the clean rejection rates should not be considered reliable.
The rejection rates rise with the rejection radius: adversarial rejection rates increase rapidly as the rejection radius increases, while clean rejection rates increase only slowly. In all cases, far more perturbed samples are rejected than clean samples.

\subsection{Binarization test on PGD ($\LREJ$)}

\begin{table*}[ht]
  \caption{Results of the binarization test applied to PGD ($\LREJ$).}
  \vskip 0.1in
  \begin{adjustbox}{width=\columnwidth,center}
    \begin{tabular}{l cccc cccc}
        \toprule
        & \multicolumn{4}{c}{MNIST} & \multicolumn{4}{c}{CIFAR-10}\\
        \cline{2-5}\cline{6-9}\\
        Decision Boundary Closeness & ASR & RASR & Inverted ASR & Inverted RASR & ASR & RASR & Inverted ASR & Inverted RASR\\
        \midrule
        0.9 & 0.935 & 0.451 & 1.0 & 0.375 & 0.973 & 0.824 & 0.971 & 0.781\\
        \midrule
        0.999 & 0.945 & 0.394 & 1.0 & 0.447 & 0.976 & 0.813 & 0.964 & 0.790\\
        \midrule
        0.99999 & 0.953 & 0.414 & 0.981 & 0.434 & 0.974 & 0.819 & 0.938 & 0.813\\
        \bottomrule
    \end{tabular}
  \end{adjustbox}
  \label{tab:binarization}
  \vskip -0.1in
\end{table*}

Finally, to evaluate the effectiveness with which $\LREJ$ targets rejection, we apply the binarization test~\citep{binarization}. As the binarization test is designed for inductive defenses we evaluate on PGD ($\LREJ$), and as the binarization test assumes that rejection does not depend on the generated dataset or the modified model, we modified $\LREJ$ to target the original model in the calcuation of $\LDB$ (e.g. we wish to avoid rejection as if the model was unchanged). 

For the inverted case, we modified $\lambda’$, setting it to -10 (we are seeking rejection, not avoiding it). As noted in Appendix~\ref{app:attack-ablation}
, we drop the rejection-avoidance term when $h(\tilde{x}) = y$; hence, the negated second term poses issues for maximization in PGD (e.g. PGD would preferentially select perturbations which do not succeed). To avoid this issue, we have added an additional success indicator to our attack objective, which we use to ensure that PGD selects the loss-maximizing successful perturbation. Without these modifications, we observed low attack success rates in the inverted test; however, the results with these simple changes do indicate that our attack does take the rejection component of the defense into account, the key purpose of the inverted test.

The attack settings for the regular test are unchanged from those used for evaluation. For the test settings, we chose values as close as possible to those used in~\citep{binarization}, with a single boundary sample, with 200 samples sampled from each of the surfaces and corners of the $l_{\infty}$ ball, with 512 trials per experiment. We used 81 inner samples for MNIST, and 253 inner samples for CIFAR-10, selected to maximize subject to the requirement that the total sample count is below the dimensionality of the features. In both cases, the base model is a standard adversarially trained model trained on that dataset, transformed into a selective classifier with the transformation $\operatorname{F}$. 

We ran the test for a range of values for the decision boundary closeness, a hyperparameter determining the test hardness. ASR is the rate at which the attack successfuly found a perturbation which both flips the label and evades detection; RASR is the maximum of the success rates on surfaces and corners. 
While the ASR values in some experiments are slightly below the cutoff of 0.95 and are technically failures, they do indicate that the attack is successfully targeting the defense. While a slightly stronger attack may exist, these results do not indicate significant unreliability in our evaluation of the robustness of TLDR.
 
\subsection{Ablation on Attacks: Attack Radius}

The theory suggests that incorporating rejection can allow a transductive learner to tolerate perturbations twice as large; we investigate how transduction and rejection affects the robustness as $\epsilon$ grows  (models are adversarially trained with the corresponding $\epsilon$ and the selective classifiers use a rejection radius of $\epsilon / 2$). The results are shown for the natural choice of adversary, as in the experiment section (e.g. GMSA with $\LREJ$ for the transduction+rejection). For selective classifiers, the rejection rate scaling is shown.

We see that the combination of rejection and transduction does indeed maintain high accuracy for larger $\epsilon$; at $\epsilon = 0.6$, it has 96.2\% of the robust accuracy that transduction alone had for $\epsilon = 0.3$. This aligns with the theory, given the increased constant factors of $\text{OPT}_{\U^2}$ in Corollary \ref{corr:transductive-agnostic} compared to the results for classifiers in \citep{montasser2021transductive}.

Note also the behavior of the inductive classifier: accuracy improves past $\epsilon = 0.6$. To see why, note that a model adversarially trained for $\epsilon \ge 1$ will return near-uniform predictions for all classes (resulting in a robust accuracy of approximately 10\%, as seen), making finding adversarial examples slightly more difficult than for smaller $\epsilon$ where this does not occur. The decline in rejection rate for very large $\epsilon$ is a similar phenomenon.

\begin{figure}[t]
    \centering
    \includegraphics[width=0.9\linewidth]{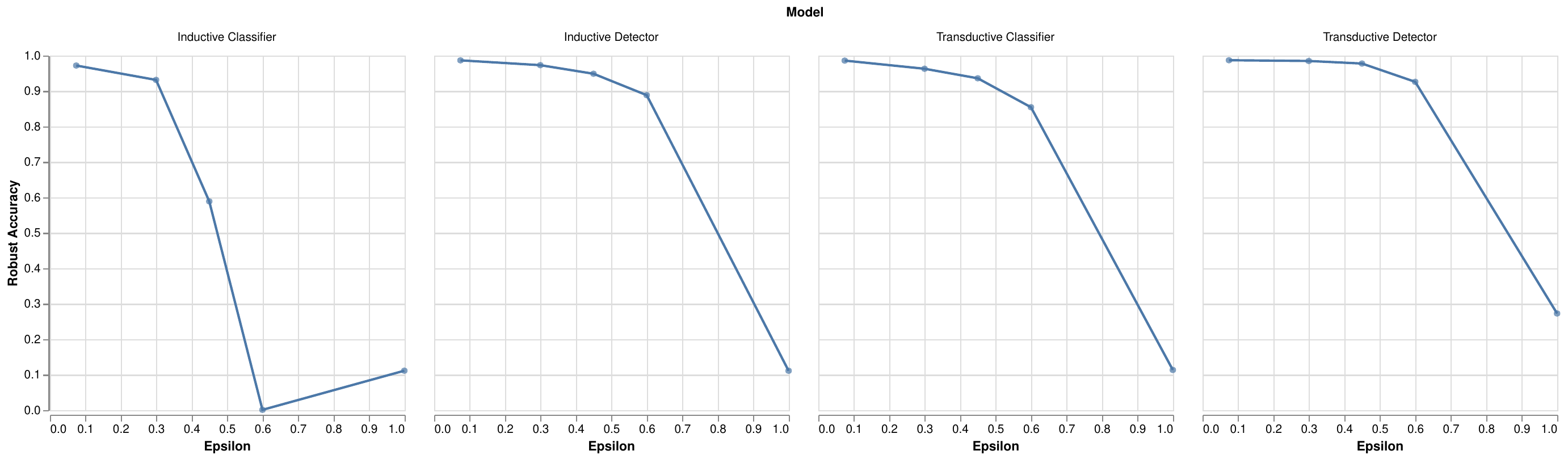}
    \caption{Robustness scaling with adversarial budget $\epsilon$ on MNIST}
    \label{fig:epsilon-accuracy}
\end{figure}
\begin{figure}[t]
    \centering
    \includegraphics[width=0.5\linewidth]{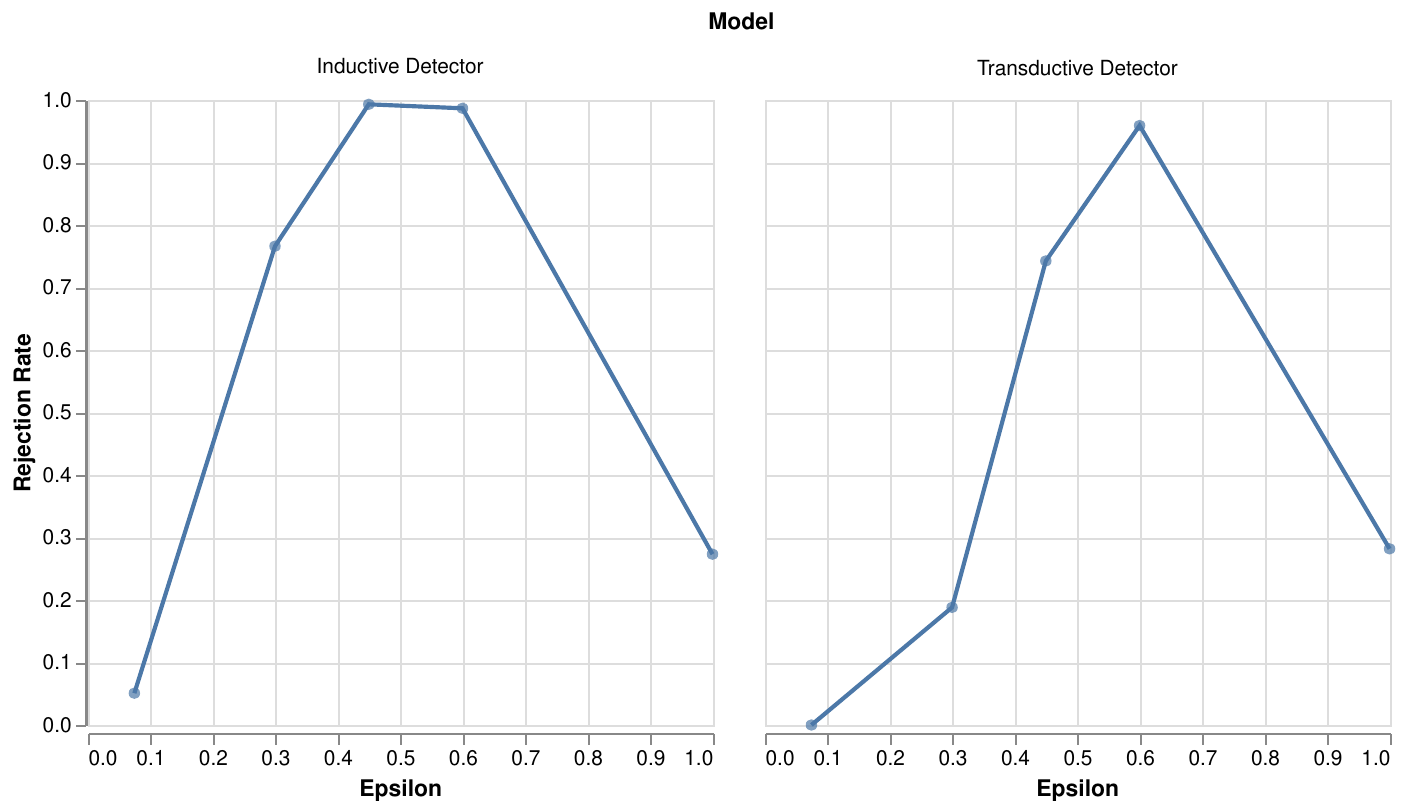}
    \caption{Rejection rate scaling with adversarial budget $\epsilon$ on MNIST.}
    \label{fig:epsilon-rejection}
\end{figure}

\subsection{Weighting of $\LREJ$}

\begin{figure}[t]
    \centering
    \includegraphics[width=0.5\linewidth]{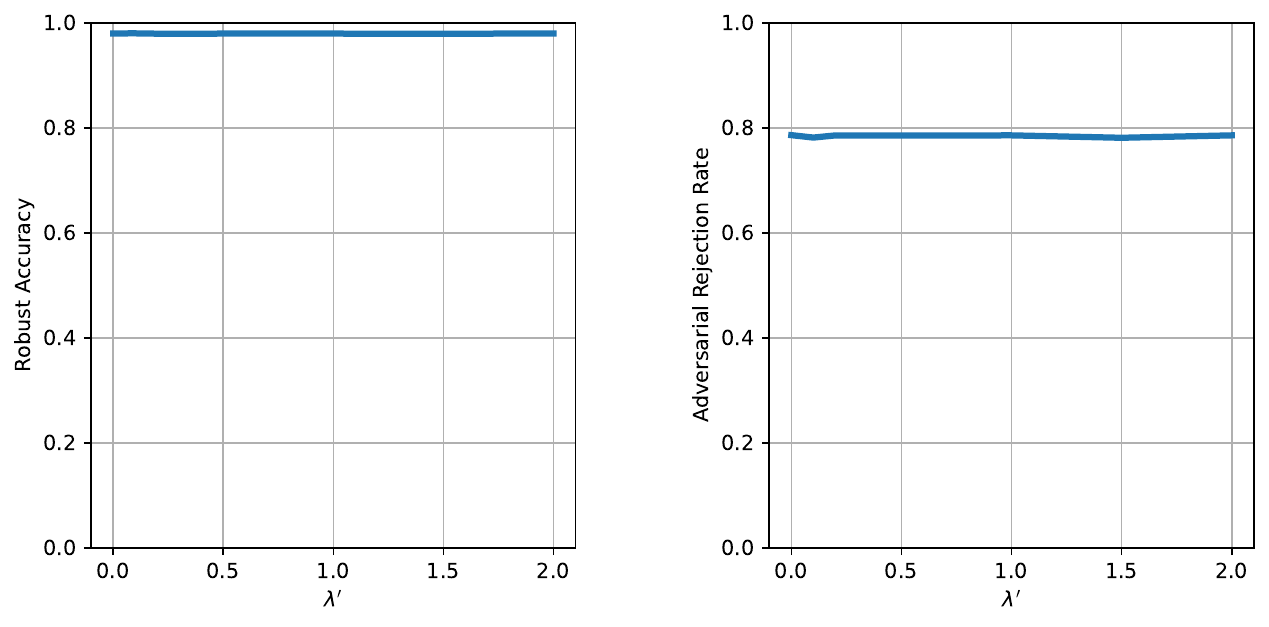}
    \caption{Effects of $\lambda'$ on results of PGD optimizing $\LREJ$ targeting adversarial training with rejection on MNIST.}
    \label{fig:lambda-attack-inductive}
\end{figure}
\begin{figure}[ht]
    \centering
    \includegraphics[width=0.5\linewidth]{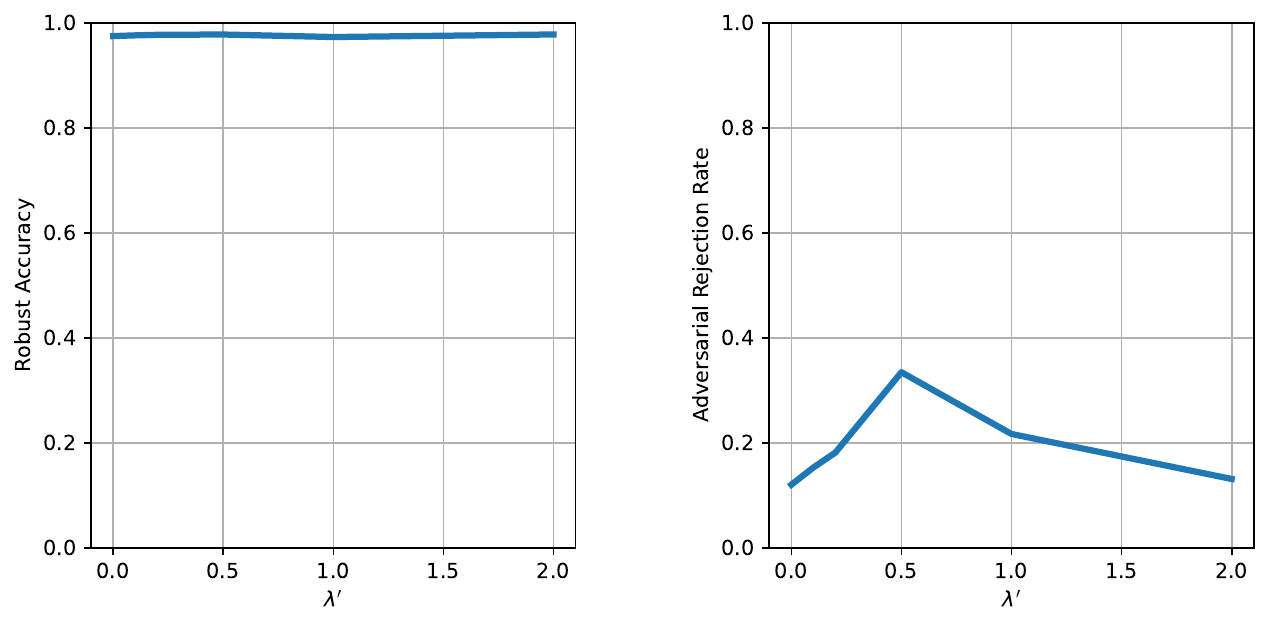}
    \caption{Effects of $\lambda'$ on results of GMSA optimizing $\LREJ$ targeting TLDR on MNIST.}
    \label{fig:lambda-attack-transductive}
\end{figure}

We examine the effect of the hyperparameter $\lambda'$ between the cross-entropy and rejection-avoidance terms in $\LREJ$ on MNIST; see Equation~\ref{eqn:lrej}. In the inductive case, as shown in Figure~\ref{fig:lambda-attack-inductive}, there is little sensitivity to $\lambda'$ in either attack success rate or rejection rate. When targeting TLDR, there is little sensitivity in terms of attack success rate as seen in Figure~\ref{fig:lambda-attack-transductive}; rejection rate is highest for intermediate values of $\lambda'$ but, as expected, rejection rate declines with $\lambda'$ beyond that.

\subsection{Robustness to $l_2$}

\begin{table*}[ht]
    \caption{Results on MNIST and CIFAR-10 up to $l_2$ budget. The strongest attack against each defense is shown. 
    The best result is  \textbf{boldfaced}.}
    \vskip 0.1in
    \begin{adjustbox}{width=\columnwidth,center}
    \begin{tabular}{l l l cc cc}
    \toprule
    \multirow{2}{*}{Setting} & \multirow{2}{*}{Defense} & \multirow{2}{*}{Attacker} & \multicolumn{2}{c }{MNIST} & \multicolumn{2}{c}{CIFAR-10}\\ \cline{4-5} \cline{6-7}
    & & & $\PREJ$ & Robust accuracy & $\PREJ$ & Robust accuracy
    \\
    \midrule
    Induction & AT~\citep{madry2018towards} & AutoAttack & -- & 0 & -- & 0.445 \\
    \midrule
    Rejection only & AT (with rejection) & PGD ($\LREJ$) & 0.112 & 0.921 & 0.130 & 0.754 \\
    \midrule
    Transduction only & TADV~\citep{chen2022towards}  & GMSA ($\LCE$) & -- & 0.913 & -- & 0.813\\
    \midrule 
    Transduction+Rejection & \textbf{TLDR (ours)}  & GMSA ($\LREJ$) & \textbf{0.078} & \textbf{0.933} &  \textbf{0.007} & \textbf{0.845}\\
    \bottomrule
    \end{tabular} 
    \end{adjustbox}
    \label{tab:result_l2}
    \vskip -0.1in
\end{table*}

To evaluate our defense's generality, we consider robustness to $l_2$ as well and compare to the strongest defenses from each setting in Table~\ref{tab:result_l2}; on MNIST we use $\epsilon = 5$ and on CIFAR-10 we use $\epsilon = 128/255$. We observe strong performance from TLDR, outperforming defenses with transduction or rejection alone.

\subsection{Generalization of TLDR}

\begin{figure}[t]
    \centering
    \includegraphics[width=0.9\linewidth]{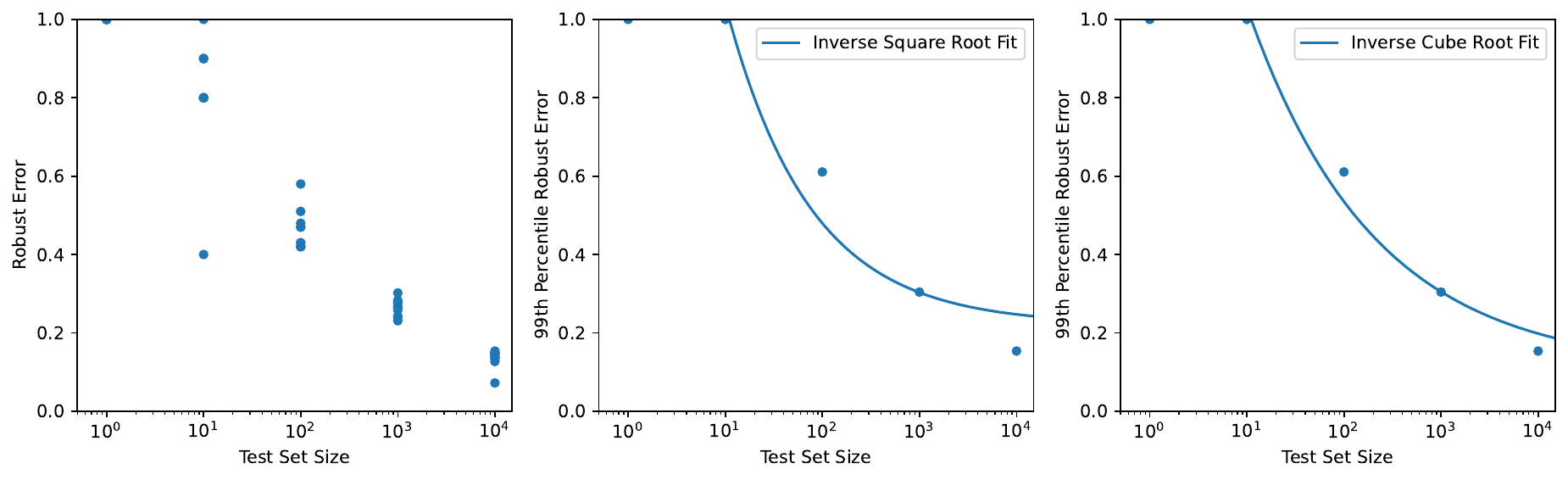}
    \caption{Generalization of TLDR with equal train and test size on MNIST.}
    \label{fig:balanced_train_test_generalization}
\end{figure}
\begin{figure}[t]
    \centering
    \includegraphics[width=0.9\linewidth]{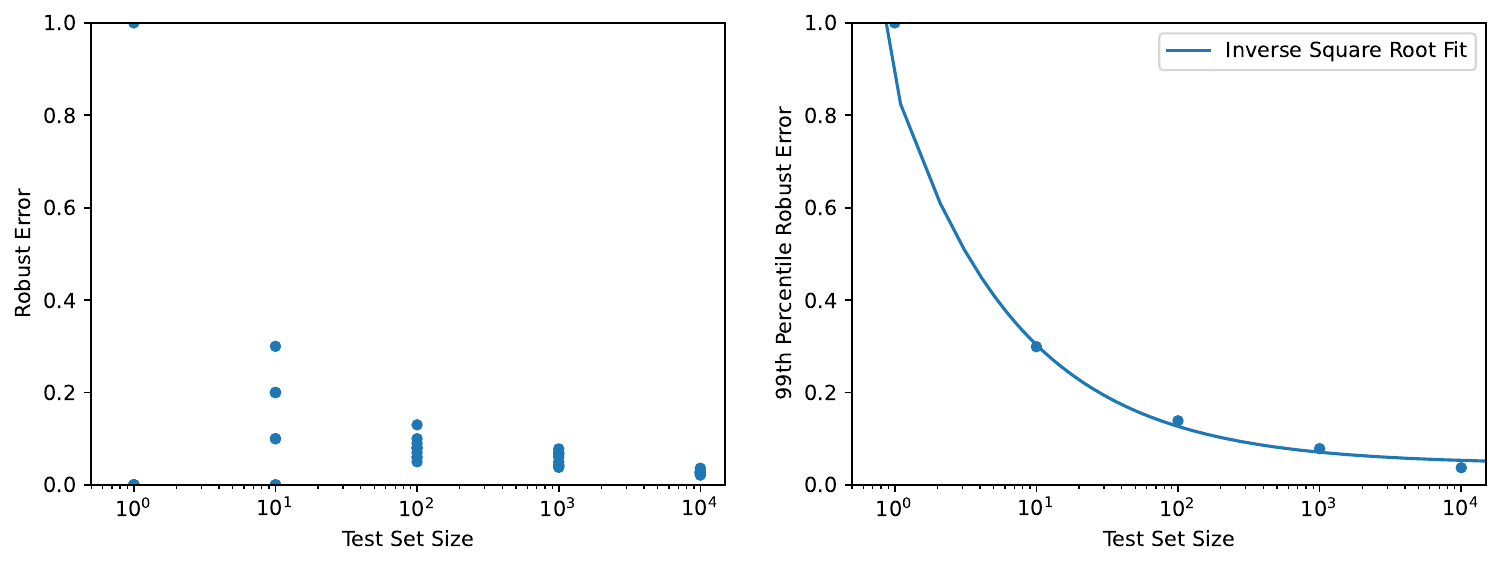}
    \caption{Generalization of TLDR with full training set on MNIST.}
    \label{fig:full_train_test_generalization}
\end{figure}

To evaluate how closely TLDR's generalization follows the our provided bounds in Theorems~\ref{thm:transductive-realizable} and~\ref{thm:transduction-agnostic}, we apply TLDR to randomly-sampled subsets of the MNIST training and test sets. In each case, we run ten trials and present the robust error (1 - robust accuracy) with attacker GMSA ($\LREJ$). Given the large VC dimension of the model considered (LeNet)~\citep{bartlett2017nearlytight}, the results shown are consistent with Theorem~\ref{thm:transduction-agnostic}; we wish to determine whether the actual errors observed follow the inverse-square relationship of the theorem.

In Figure~\ref{fig:balanced_train_test_generalization}, the size of the training set is set equal to the size of the test set (the standard assumption for our results); in Figure~\ref{fig:full_train_test_generalization}, the full training set is used and only the test set size is changed. See Appendix~\ref{app:unbalanced-train-test} for a discussion of generalization bounds for train and test sets of differing sizes.

As the bounds are in PAC form, we use an estimate of the 99th percentile of error in order to evaluate the generalization of TLDR; these are calculated with a best-fit beta distribution of the results on each instance size.

We then consider the inverse-square-root fit of these 99th percentile error estimates; as the gurarantee takes the form of an upper bound, and error is upper bounded by 1, we exclude any error values equal to 1 (corresponding to instances where all trials had a robust accuracy of 0).
We find that in the case where train size is fixed, the 99th percentile errors closely follow the inverse-square-root trend in the test set size $m$; while the results for equal train and test set sizes more closely follow an inverse-cube-root relationship in $m$.

\section{Limitations}\label{app:limitations}
While our framework is theoretical-sound with lower sampled complexity than the rejection-only case and with more relaxed optimality condition than the transductive-only case, our sample complexity proof under the transductive rejection case requires the non-emptiness of $\Delta$ in \Cref{thm:rejection-simplified-realizable}. While weaker conditions don’t guarantee that we find a
model satisfying the conditions, the result demonstrate that empirical defense incorporating both transduction and rejection have the potential to outperform others. Our proposed defense algorithm TLDR, though effective at improving the robust accuracy under rejection, incurs a high computational cost relative to standard adversarial training due to the joint training with the unlabeled data. If it is possible to delay evaluation until a sufficiently large batch of samples arrives, the cost can be made insignificant via amortization. The need to perform a full training process prior to evaluation means, however, that the defense is not suitable for latency-sensitive applications. Our adaptive attack is even more costly, as effectively attacking this defense using GMSA requires multiple iterations of the full transductive training process; hence, adversaries attacking TLDR require substantial resources.


\begin{thebibliography}{40}
\providecommand{\natexlab}[1]{#1}
\providecommand{\url}[1]{\texttt{#1}}
\expandafter\ifx\csname urlstyle\endcsname\relax
  \providecommand{\doi}[1]{doi: #1}\else
  \providecommand{\doi}{doi: \begingroup \urlstyle{rm}\Url}\fi

\bibitem[Assouad(1983)]{assouad1983densite}
Assouad, P.
\newblock Densit{\'e} et dimension.
\newblock In \emph{Annales de l'Institut Fourier}, volume~33, pp.\  233--282,
  1983.

\bibitem[Baharlouei et~al.(2022)Baharlouei, Sheikholeslami, Razaviyayn, and
  Kolter]{baharlouei2022improving}
Baharlouei, S., Sheikholeslami, F., Razaviyayn, M., and Kolter, Z.
\newblock Improving adversarial robustness via joint classification and
  multiple explicit detection classes.
\newblock \emph{arXiv preprint arXiv:2210.14410}, 2022.

\bibitem[Bartlett et~al.(2017)Bartlett, Harvey, Liaw, and
  Mehrabian]{bartlett2017nearlytight}
Bartlett, P.~L., Harvey, N., Liaw, C., and Mehrabian, A.
\newblock Nearly-tight vc-dimension and pseudodimension bounds for piecewise
  linear neural networks, 2017.

\bibitem[Blumer et~al.(1989)Blumer, Ehrenfeucht, Haussler, and
  Warmuth]{blumer1989learnability}
Blumer, A., Ehrenfeucht, A., Haussler, D., and Warmuth, M.~K.
\newblock Learnability and the vapnik-chervonenkis dimension.
\newblock \emph{Journal of the ACM (JACM)}, 36\penalty0 (4):\penalty0 929--965,
  1989.

\bibitem[Carlini \& Wagner(2017)Carlini and Wagner]{carlini2017towards}
Carlini, N. and Wagner, D.
\newblock Towards evaluating the robustness of neural networks.
\newblock In \emph{2017 ieee symposium on security and privacy (sp)}, pp.\
  39--57. Ieee, 2017.

\bibitem[Chen et~al.(2021)Chen, Raghuram, Choi, Wu, Liang, and
  Jha]{chen2021revisiting}
Chen, J., Raghuram, J., Choi, J., Wu, X., Liang, Y., and Jha, S.
\newblock Revisiting adversarial robustness of classifiers with a reject
  option.
\newblock In \emph{The AAAI-22 Workshop on Adversarial Machine Learning and
  Beyond}, 2021.

\bibitem[Chen et~al.(2022)Chen, Wu, Guo, Liang, and Jha]{chen2022towards}
Chen, J., Wu, X., Guo, Y., Liang, Y., and Jha, S.
\newblock Towards evaluating the robustness of neural networks learned by
  transduction.
\newblock In \emph{International Conference on Learning Representations}, 2022.
\newblock URL \url{https://openreview.net/forum?id=_5js_8uTrx1}.

\bibitem[Croce \& Hein(2020)Croce and Hein]{croce2020reliable}
Croce, F. and Hein, M.
\newblock Reliable evaluation of adversarial robustness with an ensemble of
  diverse parameter-free attacks.
\newblock In \emph{International conference on machine learning}, pp.\
  2206--2216. PMLR, 2020.

\bibitem[Croce et~al.(2020)Croce, Andriushchenko, Sehwag, Debenedetti,
  Flammarion, Chiang, Mittal, and Hein]{croce2020robustbench}
Croce, F., Andriushchenko, M., Sehwag, V., Debenedetti, E., Flammarion, N.,
  Chiang, M., Mittal, P., and Hein, M.
\newblock Robustbench: a standardized adversarial robustness benchmark.
\newblock \emph{arXiv preprint arXiv:2010.09670}, 2020.

\bibitem[Ganin et~al.(2016)Ganin, Ustinova, Ajakan, Germain, Larochelle,
  Laviolette, Marchand, and Lempitsky]{domainadversarial}
Ganin, Y., Ustinova, E., Ajakan, H., Germain, P., Larochelle, H., Laviolette,
  F., Marchand, M., and Lempitsky, V.
\newblock Domain-adversarial training of neural networks.
\newblock \emph{J. Mach. Learn. Res.}, 17\penalty0 (1):\penalty0 2096–2030,
  jan 2016.
\newblock ISSN 1532-4435.

\bibitem[Goldwasser et~al.(2020)Goldwasser, Kalai, Kalai, and
  Montasser]{goldwasser2020beyond}
Goldwasser, S., Kalai, A.~T., Kalai, Y., and Montasser, O.
\newblock Beyond perturbations: Learning guarantees with arbitrary adversarial
  test examples.
\newblock \emph{Advances in Neural Information Processing Systems},
  33:\penalty0 15859--15870, 2020.

\bibitem[Goodfellow(2019)]{goodfellow2019research}
Goodfellow, I.
\newblock A research agenda: Dynamic models to defend against correlated
  attacks.
\newblock \emph{arXiv preprint arXiv:1903.06293}, 2019.

\bibitem[Goodfellow et~al.(2015)Goodfellow, Shlens, and
  Szegedy]{goodfellow2015explaining}
Goodfellow, I.~J., Shlens, J., and Szegedy, C.
\newblock Explaining and harnessing adversarial examples, 2015.

\bibitem[Hanneke et~al.(2019)Hanneke, Kontorovich, and
  Sadigurschi]{hanneke2019sample}
Hanneke, S., Kontorovich, A., and Sadigurschi, M.
\newblock Sample compression for real-valued learners.
\newblock In \emph{Algorithmic Learning Theory}, pp.\  466--488. PMLR, 2019.

\bibitem[He et~al.(2022)He, Yang, Chen, Xu, and Ho]{he2022your}
He, Z., Yang, Y., Chen, P.-Y., Xu, Q., and Ho, T.-Y.
\newblock Be your own neighborhood: Detecting adversarial example by the
  neighborhood relations built on self-supervised learning.
\newblock \emph{arXiv preprint arXiv:2209.00005}, 2022.

\bibitem[Kato et~al.(2020)Kato, Cui, and Fukuhara]{kato2020atro}
Kato, M., Cui, Z., and Fukuhara, Y.
\newblock Atro: Adversarial training with a rejection option.
\newblock \emph{arXiv preprint arXiv:2010.12905}, 2020.

\bibitem[Krizhevsky et~al.(2009)Krizhevsky, Hinton,
  et~al.]{krizhevsky2009learning}
Krizhevsky, A., Hinton, G., et~al.
\newblock Learning multiple layers of features from tiny images.
\newblock 2009.

\bibitem[Laidlaw \& Feizi(2019)Laidlaw and Feizi]{laidlaw2019playing}
Laidlaw, C. and Feizi, S.
\newblock Playing it safe: Adversarial robustness with an abstain option.
\newblock \emph{arXiv preprint arXiv:1911.11253}, 2019.

\bibitem[LeCun(1998)]{lecun1998mnist}
LeCun, Y.
\newblock The {MNIST} database of handwritten digits.
\newblock 1998.
\newblock URL \url{http://yann.lecun.com/exdb/mnist/}.

\bibitem[Littlestone \& Warmuth(1986)Littlestone and
  Warmuth]{littlestone1986relating}
Littlestone, N. and Warmuth, M.
\newblock Relating data compression and learnability.
\newblock 1986.

\bibitem[Madry et~al.(2018)Madry, Makelov, Schmidt, Tsipras, and
  Vladu]{madry2018towards}
Madry, A., Makelov, A., Schmidt, L., Tsipras, D., and Vladu, A.
\newblock Towards deep learning models resistant to adversarial attacks.
\newblock In \emph{6th International Conference on Learning Representations,
  Conference Track Proceedings}. OpenReview.net, 2018.
\newblock URL \url{https://openreview.net/forum?id=rJzIBfZAb}.

\bibitem[Montasser et~al.(2019)Montasser, Hanneke, and Srebro]{montasser2019vc}
Montasser, O., Hanneke, S., and Srebro, N.
\newblock Vc classes are adversarially robustly learnable, but only improperly.
\newblock In \emph{Conference on Learning Theory}, pp.\  2512--2530. PMLR,
  2019.

\bibitem[Montasser et~al.(2021)Montasser, Hanneke, and
  Srebro]{montasser2021transductive}
Montasser, O., Hanneke, S., and Srebro, N.
\newblock Transductive robust learning guarantees.
\newblock \emph{arXiv preprint arXiv:2110.10602}, 2021.

\bibitem[Moosavi-Dezfooli et~al.(2016)Moosavi-Dezfooli, Fawzi, and
  Frossard]{moosavi2016deepfool}
Moosavi-Dezfooli, S.-M., Fawzi, A., and Frossard, P.
\newblock Deepfool: a simple and accurate method to fool deep neural networks.
\newblock In \emph{Proceedings of the IEEE conference on computer vision and
  pattern recognition}, pp.\  2574--2582, 2016.

\bibitem[Moran \& Yehudayoff(2016)Moran and Yehudayoff]{moran2016sample}
Moran, S. and Yehudayoff, A.
\newblock Sample compression schemes for vc classes.
\newblock \emph{Journal of the ACM (JACM)}, 63\penalty0 (3):\penalty0 1--10,
  2016.

\bibitem[Pang et~al.(2022)Pang, Zhang, He, Dong, Su, Chen, Zhu, and
  Liu]{pang2022two}
Pang, T., Zhang, H., He, D., Dong, Y., Su, H., Chen, W., Zhu, J., and Liu,
  T.-Y.
\newblock Two coupled rejection metrics can tell adversarial examples apart.
\newblock In \emph{Proceedings of the IEEE/CVF Conference on Computer Vision
  and Pattern Recognition}, pp.\  15223--15233, 2022.

\bibitem[Peng et~al.(2023)Peng, Xu, Cornelius, Hull, Li, Duggal, Phute, Martin,
  and Chau]{peng2023robust}
Peng, S., Xu, W., Cornelius, C., Hull, M., Li, K., Duggal, R., Phute, M.,
  Martin, J., and Chau, D.~H.
\newblock Robust principles: Architectural design principles for adversarially
  robust cnns, 2023.

\bibitem[Schapire \& Freund(2012)Schapire and Freund]{schapire2012boosting}
Schapire, R.~E. and Freund, Y.
\newblock Boosting. adaptive computation and machine learning.
\newblock \emph{MIT Press, Cambridge, MA}, 1\penalty0 (1.2):\penalty0 9, 2012.

\bibitem[Shalev-Shwartz \& Ben-David(2014)Shalev-Shwartz and
  Ben-David]{shalev2014understanding}
Shalev-Shwartz, S. and Ben-David, S.
\newblock \emph{Understanding machine learning: From theory to algorithms}.
\newblock Cambridge university press, 2014.

\bibitem[Sheikholeslami et~al.(2020)Sheikholeslami, Lotfi, and
  Kolter]{sheikholeslami2020provably}
Sheikholeslami, F., Lotfi, A., and Kolter, J.~Z.
\newblock Provably robust classification of adversarial examples with
  detection.
\newblock In \emph{International Conference on Learning Representations}, 2020.

\bibitem[Sheikholeslami et~al.(2022)Sheikholeslami, Lin, Metzen, Zhang, and
  Kolter]{sheikholeslami2022denoised}
Sheikholeslami, F., Lin, W.-Y., Metzen, J.~H., Zhang, H., and Kolter, J.~Z.
\newblock Denoised smoothing with sample rejection for robustifying pretrained
  classifiers.
\newblock In \emph{Workshop on Trustworthy and Socially Responsible Machine
  Learning, NeurIPS 2022}, 2022.

\bibitem[Sotgiu et~al.(2020)Sotgiu, Demontis, Melis, Biggio, Fumera, Feng, and
  Roli]{sotgiu2020deep}
Sotgiu, A., Demontis, A., Melis, M., Biggio, B., Fumera, G., Feng, X., and
  Roli, F.
\newblock Deep neural rejection against adversarial examples.
\newblock \emph{EURASIP Journal on Information Security}, 2020\penalty0
  (1):\penalty0 1--10, 2020.

\bibitem[Stutz et~al.(2020)Stutz, Hein, and Schiele]{stutz2020confidence}
Stutz, D., Hein, M., and Schiele, B.
\newblock Confidence-calibrated adversarial training: Generalizing to unseen
  attacks.
\newblock In \emph{International Conference on Machine Learning}, pp.\
  9155--9166. PMLR, 2020.

\bibitem[Tram\`er(2022)]{tramer2022detecting}
Tram\`er, F.
\newblock Detecting adversarial examples is (nearly) as hard as classifying
  them.
\newblock In \emph{International Conference on Machine Learning}, pp.\
  21692--21702. PMLR, 2022.

\bibitem[Tramer et~al.(2020)Tramer, Carlini, Brendel, and
  Madry]{tramer2020adaptive}
Tramer, F., Carlini, N., Brendel, W., and Madry, A.
\newblock On adaptive attacks to adversarial example defenses.
\newblock \emph{Advances in Neural Information Processing Systems},
  33:\penalty0 1633--1645, 2020.

\bibitem[Wang et~al.(2021)Wang, Ju, Shelhamer, Wagner, and
  Darrell]{wang2021fighting}
Wang, D., Ju, A., Shelhamer, E., Wagner, D., and Darrell, T.
\newblock Fighting gradients with gradients: Dynamic defenses against
  adversarial attacks.
\newblock \emph{arXiv preprint arXiv:2105.08714}, 2021.

\bibitem[Wang et~al.(2023)Wang, Pang, Du, Lin, Liu, and Yan]{wang2023better}
Wang, Z., Pang, T., Du, C., Lin, M., Liu, W., and Yan, S.
\newblock Better diffusion models further improve adversarial training, 2023.

\bibitem[Wu et~al.(2020)Wu, Yuan, and Wu]{pmlr-v119-wu20f}
Wu, Y.-H., Yuan, C.-H., and Wu, S.-H.
\newblock Adversarial robustness via runtime masking and cleansing.
\newblock In III, H.~D. and Singh, A. (eds.), \emph{Proceedings of the 37th
  International Conference on Machine Learning}, volume 119 of
  \emph{Proceedings of Machine Learning Research}, pp.\  10399--10409. PMLR,
  13--18 Jul 2020.
\newblock URL \url{https://proceedings.mlr.press/v119/wu20f.html}.

\bibitem[Zhang et~al.(2019)Zhang, Yu, Jiao, Xing, El~Ghaoui, and
  Jordan]{zhang2019theoretically}
Zhang, H., Yu, Y., Jiao, J., Xing, E., El~Ghaoui, L., and Jordan, M.
\newblock Theoretically principled trade-off between robustness and accuracy.
\newblock In \emph{International conference on machine learning}, pp.\
  7472--7482. PMLR, 2019.

\bibitem[Zimmermann et~al.(2022)Zimmermann, Brendel, Tramer, and
  Carlini]{binarization}
Zimmermann, R.~S., Brendel, W., Tramer, F., and Carlini, N.
\newblock Increasing confidence in adversarial robustness evaluations.
\newblock In Koyejo, S., Mohamed, S., Agarwal, A., Belgrave, D., Cho, K., and
  Oh, A. (eds.), \emph{Advances in Neural Information Processing Systems},
  volume~35, pp.\  13174--13189. Curran Associates, Inc., 2022.
\newblock URL
  \url{https://proceedings.neurips.cc/paper_files/paper/2022/file/5545d9bcefb7d03d5ad39a905d14fbe3-Paper-Conference.pdf}.

\end{thebibliography}
\end{document}